\definecolor{mygreen}{RGB}{34,139,34} %
\definecolor{mypink}{RGB}{252, 241, 240} %
\def\algname{REAG}
\def\algnameMV{REAG$_\text{MV}^{*}$}
\def\algnameDara{REAG$_\text{Dara}^{*}$}
\def\algnameMVDT{REAG$_\text{MV}^\text{DT}$}
\def\algnameDaraDT{REAG$_\text{Dara}^\text{DT}$}
\def\algnameMVReinf{REAG$_\text{MV}^\text{Reinf}$}
\def\algnameDaraReinf{REAG$_\text{Dara}^\text{Reinf}$}
\def\algnameMVQT{REAG$_\text{MV}^\text{QT}$}
\def\algnameDaraQT{REAG$_\text{Dara}^\text{QT}$}
\title{Return Augmented Decision Transformer for Off-Dynamics Reinforcement Learning}
\author{%
  \name Ruhan Wang\thanks{Equal contribution.} \email ruhwang@iu.edu \\
  \addr Indiana University
  \AND
  \name Yu Yang\footnotemark[1] \email yu.yang@duke.edu \\
  \addr Duke University
  \AND
  \name Zhishuai Liu \email zhishuai.liu@duke.edu \\
  \addr Duke University
  \AND
  \name Dongruo Zhou \email dz13@iu.edu \\
  \addr Indiana University
  \AND
  \name Pan Xu \email pan.xu@duke.edu \\
  \addr Duke University
}
\begin{document}

\maketitle

\begin{abstract}%
We study offline off-dynamics reinforcement learning (RL) to utilize data from an easily accessible source domain to enhance policy learning in a target domain with limited data. Our approach centers on return-conditioned supervised learning (RCSL), particularly focusing on Decision Transformer (DT) type frameworks, which can predict actions conditioned on desired return guidance and complete trajectory history. Previous works address the dynamics shift problem by augmenting the reward in the trajectory from the source domain to match the optimal trajectory in the target domain. However, this strategy can not be directly applicable in RCSL owing to (1) the unique form of the RCSL policy class, which explicitly depends on the return, and (2) the absence of a straightforward representation of the optimal trajectory distribution. We propose the Return Augmented (\algname) method for DT type frameworks, where we augment the return in the source domain by aligning its distribution with that in the target domain. We provide the theoretical analysis demonstrating that the RCSL policy learned from \algname\ achieves the same level of suboptimality as would be obtained without a dynamics shift. We introduce two practical implementations \algnameDara\ and \algnameMV\ respectively. Thorough experiments on D4RL datasets and various DT-type baselines demonstrate that our methods consistently enhance the performance of DT type frameworks in off-dynamics RL.
\end{abstract}

\section{Introduction}
\label{sec:intorduction}
Off-dynamics reinforcement learning \citep{eysenbach2020off, jiang2021simgan, liu2022dara, liu2024distributionally, guo2025off} arises in decision-making domains such as autonomous driving \citep{pan2017virtual} and medical treatment \citep{laber2018optimal, liu2023deep}, where direct policy training through trial-and-error in the target environment is often costly, unethical, or infeasible. A common strategy is to train the policy in source environments with similar but more accessible dynamics. However, discrepancies between the source and target environments create a simulation-to-reality (sim-to-real) gap, which can lead to catastrophic failures when deploying the source-trained policy in the target environment.

Beyond the challenge of dynamics shift, practical scenarios often do not allow real-time online interaction with the source environment due to time and computational constraints. As a result, policies must be learned from pre-collected datasets generated by behavior policies. This setting is particularly difficult, as it combines off-policy, offline, and off-dynamics characteristics. Recently, supervised learning–based methods \citep{chen2021decision, brandfonbrener2022does} have emerged as more stable and scalable alternatives to traditional offline reinforcement learning algorithms grounded in dynamic programming \citep{levine2020offline}. In the offline off-dynamics setting, the majority of training data is drawn from the source domain, with only a limited portion collected from the target domain. Our study focuses on advancing Decision Transformer (DT) type frameworks \citep{chen2021decision,hu2024q,zhuang2024reinformer} for off-dynamics reinforcement learning, which can be viewed as a special case of return-conditioned supervised learning (RCSL) \citep{emmons2021rvs, brandfonbrener2022does}. While DT-type methods have gained significant attention across various reinforcement learning tasks, no prior work has explicitly tackled the off-dynamics RL problem.

There are several previous significant works in off-dynamics reinforcement learning that employ reward augmentation to address the dynamics shift between source and target environments \citep{eysenbach2020off, liu2022dara}. In particular, \citet{eysenbach2020off} proposed the DARC algorithm to train a policy in the source domain using augmented rewards. These augmentations are derived by minimizing the KL distance between the distribution of trajectories generated by the learning policy in the source domain and those generated by the optimal policy in the target domain. \citet{liu2022dara} extended this idea to the offline setting with the DARA algorithm. However, these reward augmentation techniques for dynamic programming based RL algorithms are not directly applicable to RCSL methods for two primary reasons. First, the policy classes used in RCSL methods explicitly depend on the conditional return-to-go function, leading to different trajectory distributions that invalidate the trajectory matching methods. Second, the augmentation techniques in \citet{eysenbach2020off,liu2022dara} explicitly rely on the form of the optimal trajectory distribution in the target domain. In contrast, there is no straightforward representation of the optimal RCSL policy and the trajectory distribution. Therefore, novel augmentation mechanisms must be derived for RCSL methods to effectively address off-dynamics reinforcement learning.

In this work, we propose the Return Augmented (\algname) algorithm, which augments the returns of trajectories from a source environment to better align with the target environment within DT-type frameworks. Through rigorous analysis, we show that the RCSL policy learned with \algname\ in the source domain achieves suboptimality comparable to that of a policy learned directly in the target domain without dynamics shift. Building upon our preliminary study on return augmentation for Decision Transformer (RADT)\citep{wang2024return}, the present work substantially generalizes and extends this idea by developing a unified \algname\ framework that integrates augmentation across multiple DT-type architectures, introduces practical stabilization mechanisms, and provides expanded empirical evaluation under structured dynamics shifts. Specifically, our contributions are summarized as follows:

\begin{itemize}[leftmargin=*,nosep]
    \item We propose a novel method, \algname, specifically designed for DT-type architectures in the off-dynamics RL setting. The approach augments the returns of offline trajectories in the source domain by leveraging a small amount of data from the target domain. We develop two practical implementations of \algname: \algnameDara, derived from reward augmentation techniques used in dynamic programming–based methods, and \algnameMV from direct return distribution matching. 
    \item We provide a rigorous theoretical analysis demonstrating that the return-conditioned policy learned via \algname\ can achieve suboptimality comparable to that of a policy trained directly on the target domain. Our analysis relies on the same data coverage assumptions as \citet{brandfonbrener2022does}, which consider the no dynamics shift setting. This result implies that return augmentation can enhance the performance of RCSL in off-dynamics reinforcement learning, particularly when the available source dataset is substantially larger than the target dataset..

    \item We conduct experiments on the D4RL benchmark by training policies on source datasets collected from modified dynamics and evaluating them in the original environments. Across DT-type baselines—including DT \citep{chen2021decision}, Reinformer \citep{zhuang2024reinformer} and QT \citep{hu2024q}—both \algnameDara\ and \algnameMV\ consistently improve performance, with \algnameMV\ showing the greatest gains, highlighting the advantage of return-level augmentation.

\end{itemize}

\section{Related Work}
\label{sec:related_work}

Off-dynamics reinforcement learning (RL) is a specialized domain adaptation problem that draws inspiration from broader transfer learning principles \citep{pan2009survey}. Numerous algorithms have been proposed to address the discrepancy between source and target environments \citep{niu2022trust, liu2024beyond, xu2024cross, gui2023cross, lyu2024cross, lyu2024odrl}. One promising approach involves modifying the reward function in the source domain. For instance, the DARC algorithm \citep{eysenbach2020off} addresses this challenge in online settings by employing a reward augmentation method that aligns the optimal trajectory distributions of the source and target domains. Building on this, DARA \citep{liu2022dara} uses reward augmentation to supplement a limited target dataset with a larger source dataset in offline scenarios. However, a known limitation is that policies learned in the source domain often require an additional imitation learning step to effectively transfer behavior to the target domain \citep{guo2025off}. Unlike these dynamic programming-based methods, our work adopts the adaptation setting of DARA but introduces a novel augmentation method specifically designed for Return-Conditioned Supervised Learning (RCSL), with a particular focus on the Decision Transformer.

Another category of methods utilizes source data directly without augmentation, instead applying learned filtering mechanisms based on dynamics mismatch. Specifically, H2O \citep{niu2022trust} performs importance weighting to penalize Q-values associated with large dynamics gaps in offline-to-online transitions. VGDF \citep{xu2024cross} filters data based on value consistency in online scenarios, while CPD \citep{gui2023cross} utilizes a dynamics alignment module to minimize discrepancies. Additionally, PAR \citep{lyu2024cross} addresses off-dynamics issues by capturing representation mismatches. Recent benchmarks \citep{lyu2024odrl} have also demonstrated that IQL maintains robust performance in off-dynamics settings. In the context of cross-domain offline RL, BOSA \citep{liu2024beyond} tackles out-of-distribution (OOD) state-actions and dynamics through policy and value optimization, respectively. IGDF \citep{wen2024contrastive} selectively shares source domain transitions via contrastive learning, while SRPO \citep{xue2024state} regularizes the policy using learned stationary state distributions. Most recently, MOBODY \citep{guo2026mobody} proposed a model-based approach that learns a proxy for the target transition model through joint representation learning, subsequently using generated target data for policy optimization.

Return Conditioned Supervised Learning (RCSL) has emerged as a powerful framework for offline RL \citep{brandfonbrener2022does}. Notable implementations such as RvS \citep{emmons2021rvs} and the Decision Transformer (DT) \citep{chen2021decision} have demonstrated performance competitive with traditional RL methods. The core principle of RCSL is to condition policies on a desired target return. In this paper, we focus on the Decision Transformer, which treats offline RL as a sequence generation task. The strong generalization potential of DT has inspired its application in various settings, including offline-to-online RL \citep{zheng2022online} and meta-RL \citep{xu2022prompting,yang2025pretrained}. However, to the best of our knowledge, no prior work has explicitly explored the adaptation capabilities of DT within the off-dynamics RL framework.

\section{Preliminary}
\label{prelimiary}
\noindent\textbf{Sequential Decision-Making.} We consider a general sequential decision-making problem. At each step $t$, the agent receives an observation $o_t$ from the environment. Based on the history up to step $t$, the agent makes action $a_t$ and receives the reward $r_t$ from the environment. The agent interacts with the environment in episodes with a length $H$. We use $\tau = (o_1, a_1, r_1,\cdots, o_H, a_H, r_H)$ to denote a whole trajectory, and we use $g(\tau) = \sum_{t=1}^H r_t$ to denote the cumulative return of the trajectory $\tau$. We model the environment as a Markov Decision Process (MDP) $M$, which consists of $(\cS, \cA,p,r,H)$. Here $\cS$ is the state space, each state $s$ represents the possible history up to some time step $t$, i.e., $s = (o_1, a_1, r_1,\cdots, o_t)$. $\cA$ is the action space, $p(s'|s,a)$ is the transition dynamics that determines the transition probability for the agent to visit state $s'$ from current state $s$ with the action $a$. $r(s,a)$ denotes the reward function. We re-define a trajectory as $\tau = (s_1, a_1, r_1,\cdots, s_H, a_H, r_H)$. We assume that each $s$ corresponds to one single time step $t = t(s)$, and we denote $g_\pi(s) = \EE_{\tau\sim\pi}[g(\tau)|s_1 = s]$. Then the goal of the agent is to learn a policy $\pi:\cS \rightarrow \cA$ that maximizes the expected accumulated reward $J(\pi):=\EE_{\tau \sim \pi}[g(\tau)]$. We denote the optimal policy as $\pi^*$. 

\noindent\textbf{Offline RL and Decision Transformer.} We consider the offline reinforcement learning setting. Given a dataset $\cD$, the goal of the agent is to learn $\pi^*$ from $\cD$. We assume that the trajectories in $\cD$ are generated from a behavior policy $\beta$. In this work, we mainly consider Decision Transformer (DT) \citep{chen2021decision} as our backbone algorithm. DT is a type of sequential modeling technique based on Transformer \citep{vaswani2017attention} to solve offline RL problems. In detail, DT maintains a function $\pi(a|s,g)$ as its policy function. To train the policy $\pi$, DT aims to minimize the following negative log-likelihood function $\hat L(\pi):=\hat L(\pi):=-\sum_{\tau \in \cD}\sum_{1 \leq t\leq H} \log \pi(a_t|s_t, g(\tau))$. %
To evaluate $\pi$, DT defines a \emph{conditioning function} $f: \mathcal{S} \rightarrow \mathbb{R}$, which maps each state to a return value and guides the policy $\pi_f$ within the environment, where $\pi_f(a|s):= \pi(a|s, f(s))$. The conditioning function is pivotal in DT, as varying $f(s)$ for a given state $s$ results in different policies. To achieve the optimal policy, $f(s)$ should be maximized \citep{zhuang2024reinformer}.

\noindent\textbf{Offline Off-Dynamics RL.} In this work, we consider the offline off-dynamics RL problem, where the agent has access to two offline datasets $\cD^S$ and $\cD^T$. $\cD^S, \cD^T$ include the data collected from the \emph{source environment} $M^S$ and the \emph{target environment} $M^T$. The source and the target environments share the same reward function $r$, with different transition dynamics $P^S$ and $P^T$. In practice, we assume that the dataset size from the source dataset $|\cD^S|$ is much larger than the data coming from the target dataset $|\cD^T|$. Then the agent aims to find the optimal policy for the target environment $M^T$ based on the data from both the source and the target environments. Since the transition dynamics $P^S$ and $P^T$ are different, we can not directly apply existing RL algorithms on the union $\cD^S \cup \cD^T$.

\section{Return Augmentation for Goal Conditioned Supervised Learning}
\label{sec:algorithom}
\subsection{Return-Augmented Framework} 
DT has the potential to address offline off-dynamics reinforcement learning challenges, as shown in \Cref{tab:target_experiment}. However, it still has certain limitations. To overcome these, we propose a general framework that efficiently learns the optimal policy for the target environment using the combined dataset $\cD^S \cup \cD^T$. Leveraging the return-conditioning nature of DT, we introduce a \emph{return augmentation} technique that modifies returns in the offline source dataset through a transformation function. This approach allows the policy derived from the augmented source dataset to effectively approximate the optimal policy of the target environment, as illustrated in the following equation, where $\pi^S$ represents a strong candidate for approximating the optimal policy of the target environment and $\psi$ is the carefully chosen transformation function. 
{\small
\begin{align}
    \pi^S&=\textstyle \arg\min_\pi \hat L(\pi) 
    :=\textstyle -\sum_{\tau \in \cD^S}\sum_{1 \leq t\leq H} \log \pi(a_t|s_t, \psi(g(\tau))).\notag
\end{align}}%
 We call our method \emph{Return Augmented DT (\algname)}. Next we introduce two methods to construct $\psi$, based on the dynamics-aware reward augmentation (DARA) technique \citep{eysenbach2020off, liu2022dara}, and a direct return distribution matching method.

\subsection{Dynamics-Aware Reward Augmentation }
We first summarize the idea of dynamics-aware reward augmentation (DARA) \citep{eysenbach2020off,liu2022dara}. Let $P^T(s'|s,a)$ denote the transition dynamics of the target environment, and $p^S(s'|s,a)$ denote the source environment. According to the connection of RL and probabilistic inference \citep{levine2018reinforcement}, we can turn the optimal policy finding problem into an inference problem. We use $O$ to denote a binary random variable where $O=1$ suggests $\tau$ is a trajectory induced by the optimal policy. Given a trajectory $\tau$, the likelihood of $\tau$ being a trajectory induced by the optimal policy under the target environment is $P^T(O = 1|\tau) = \exp(\sum_{t=1}^H r(s_t, a_t)/\eta)$, where $\eta$ is the step size parameter used for tuning. It means that the trajectory with higher cumulative rewards is more likely to be the trajectory induced by the optimal policy. We introduce a variational distribution $P^S_\pi(\tau) = P(s_1)\prod_{t=1}^T P^S(s_{t+1}|s_t, a_t)\pi(a_t|s_t)$ to approximate $P^T_\pi(O=1|\tau)$. Then we have
\begin{align}
    \log P^T_\pi(O=1) 
    &= \log \EE_{\tau \sim P^T_\pi(\tau)}P^T(O=1|\tau) \notag \\
    &\geq \EE_{\tau \sim P^S_\pi(\tau)}\big[\log P^T(O=1|\tau) + \log\big(P^T_\pi(\tau)/P^S_\pi(\tau)\big)\big]\notag \\
    &\textstyle=\EE_{\tau \sim P^S_\pi(\tau)}\big[\sum_{t=1}^T r(s_t, a_t) /\eta- \log\big(P^S(s_{t+1}|s_t, a_t)/P^T(s_{t+1}|s_t, a_t)\big)\big],\label{eq1}
\end{align}
 where for the first inequality, we change the distribution of the expectation from $P^T_{\pi}(\tau)$ to $P^S_{\pi}(\tau)$  and then use Jensen's inequality to derive the result; the second equation holds due to the assumption/modeling that the likelihood of $\tau$ being a trajectory induced by the optimal policy under the target environment is $P^T(O = 1|\tau) = \exp(\sum_{t=1}^H r(s_t, a_t)/\eta)$. 
Therefore, we obtain an evidence lower bound of $P^T_\pi(O=1)$, which equals to find a policy to maximize the value in the source environment, with the augmented reward $r^S(s_t, a_t) = r(s_t, a_t)+\eta\log P^T(s_{t+1}|s_t,a_t) - \eta\log P^S(s_{t+1}|s_t,a_t)$. Following \citet{eysenbach2020off}, to estimate the $\log P^T(s_{t+1}|s_t,a_t) - \log P^S(s_{t+1}|s_t,a_t)$, we use a pair of learned binary classifiers which infers whether the transitions come from the source or target environments. Specifically, we denote classifiers $q_{sas}(\cdot|s,a,s')$ and $q_{sa}(\cdot|s,a)$, which return the probability for some $(s,a,s')$ or $(s,a)$ tuples whether they belong to the source or the target environments. Then according to \citet{eysenbach2020off}, we have
\begin{align}
    \log P^T(s_{t+1}|s_t,a_t) - \log P^S(s_{t+1}|s_t,a_t) 
    &=\Delta r(s_t, a_t, s_{t+1})\notag \\
    &:= \log\frac{q(M^T|s_t, a_t, s_{t+1})}{q(M^S|s_t, a_t, s_{t+1})} - \log\frac{q_{sa}(M^T|s_t, a_t)}{q_{sa}(M^S|s_t, a_t)}.\label{eq2}
\end{align}
For a trajectory $\tau = (s_1, a_1, r_1,...,s_H, a_H, r_H)$, we denote the transformation $\psi(g(s_t)):=\sum_{h=t}^H r_h +\eta \sum_{h=t}^H \Delta r(s_h, a_h, s_{h+1})$. We denote such a transformation method as \algnameDara. 

\subsection{Direct Matching of Return Distributions}
\label{sec:Direct Matching of Return Distributions}
The reward augmentation strategy in \algnameDara\ stems from the probabilistic inference view of RL which matches the distribution of the learning trajectory in the source domain with that of the optimal trajectory in the target domain \citep{eysenbach2020off}. 
However, it does not fully capture the power of DT, which is able to induce a \emph{family of policies} that are conditioned on the return-to-go $f$. By varying $f$, DT enables the generation of a diverse range of policies, including the optimal one. In contrast, \algnameDara\ assumes a single, fixed target policy, and thus its augmentation strategy cannot generalize across multiple policies induced by varying  $f$ in DT. As a result, it cannot find the desired return conditioned policy when evaluated with a different $f$ in the target domain. This motivates us to find a return transformation method $\psi$ to guarantee that $\pi^S_f(a|s)\approx \pi^T_f(a|s)$ for all $f$.

We consider a simplified case where both $D^S$ and $D^T$ are generated by following the same behavior policy $\beta(a|s)$. We use $d_S(A)$ and $d_T(A)$ to denote the probability for event $A$ to happen under the source and target environments following $\beta$. With a slight abuse of notation, we use $g_S$ and $g_T$ to denote the return following the behavior policy. Denote $g_S(s)$ and $g_T(s)$ ($g_S(s,a)$ and $g_T(s,a)$, resp.) as the random variables representing the return-to-go starting from $s$ ($(s,a)$, resp.) in the source and target domain.  Then we characterize the learned policies by DT under the infinite data regime \citep{brandfonbrener2022does} for both the source environment and target environment. According to \citet{brandfonbrener2022does}, $\pi_f^S(a|s) = P^S(a|s, \psi(g_S) = f(s))$. Then we can express $\pi^S$ and $\pi^T$ as
\begin{align}
    \pi_f^S(a|s) &= \frac{d_S(a|s)d_S(\psi(g_S) = f(s)|s,a)}{d_S(\psi(g_S) = f(s)|s)}, \quad
    \pi_f^T(a|s) = \frac{d_T(a|s)d_T(g_T = f(s)|s,a)}{d_T(g_T = f(s)|s)}.\notag
\end{align}
Since the behavior policies over the source and target environments are the same, we have $d_S(a|s) = d_T(a|s)$ for all $(s,a)$. Then in order to guarantee $\pi_f^S(a|s) = \pi_f^T(a|s)$ we only need to guarantee $d_S(\psi(g_S(s,a)) = \cdot |s,a) = d_T(g_T(s,a) = \cdot |s,a),\ \forall s,a$. 
Denote the cumulative distribution function (CDF) of $g_S$ conditioned on $s,a$ is  $g_S|s,a \sim G^S_\beta(s,a)$, and $g_T|s,a \sim G^T_\beta(s,a)$. Then if both $G^S_\beta(s,a)$ and $G^T_\beta(s,a)$ are invertible, we can set $\psi$ as follows 
\begin{align}
\label{eq:transformation}
    \psi(g_S) =G^{T,-1}_{\beta}(G_{\beta}^{S}(g_S;s,a);s,a).
\end{align}
If there exist $P^S$, $P^T$, and $r$ such that the DARA-type augmented reward-to-go satisfies \eqref{eq:transformation}, then the DARA-type reward augmentation can be deemed as a special case of the transformation \eqref{eq:transformation}. In general, $G_\beta^T$ and $G_\beta^S$ are hard to obtain and computationally intractable, making $\psi$ intractable either. We use Laplace approximation to approximate both $G_\beta^T$ and $G_\beta^S$ by Gaussian distributions, e.g., $G_\beta^S(s,a) \sim N(\mu^S(s,a), \sigma^2_S(s,a))$ and $G_\beta^T(s,a) \sim N(\mu^T(s,a), \sigma^2_T(s,a))$. Then it is easy to obtain that 
\begin{align}\label{eq:relabel}
    \psi(g_S):=\frac{g_S -\mu^S(s,a)}{\sigma^S(s,a)}\cdot \sigma^T(s,a) + \mu^T(s,a).
\end{align}
We denote DT with a $\psi$ transformation from \eqref{eq:relabel} by \algnameMV, since such a transformation only depends on the estimation of mean values $\mu^S, \mu^T$ and variance $\sigma^S, \sigma^T$.

\subsection{Sample Complexity of Off-Dynamics RCSL}
\label{sec:short_sample_complexity}
In this section, we provide finite sample analysis of the sample complexity of the off-dynamics RCSL. To this end, we first define some useful notations. We assume there are $N^S$ trajectories in the source dataset $\cD^S$, and $N^T$ trajectories in the target dataset $\cD^T$. Denote $P_{\beta}^S$ as the joint distribution of state, action, reward and return-to-go induced by the behavior policy $\beta$ in the source environment, and $P_{\beta}^T$ in the target environment.
Denote $d^S_{\pi}$ as the marginal distribution of state $s$ induced by any policy $\pi$ in the source environment, and $d^T_{\pi}$ in the target environment.

Denote $J^T(\pi)$ as the expected cumulative reward under any policy $\pi$ and the target environment. For any return-to-go $g$ in the source dataset $\cD^S$, we transform $g$ by an oracle defined in \eqref{eq:transformation} with others remain the same, then we get a modified dataset $\tilde{D}^S$. We denote the mixed dataset as  $\cD=\cD^T\cup\tilde{\cD}^S$. Our theorem is established based on the following assumptions.

\begin{assumption}%
    \label{assumption on MDP}
    (1) (Near determinism) $P^T(r\neq r(s,a) ~\text{or}~ s'\neq \xi(s,a)|s,a)\leq \epsilon$  at all $s,a$ for some functions $\xi$ and $r$. (2) (Consistency of $f$) $f(s)=f(s')+r$ for all $s$.
\end{assumption}

\begin{assumption}%
\label{assumption on coverage}
    For all $s$ we assume (1) (Bounded occupancy mismatch) $P^T_{\pi^{\text{RCSL}}_f}(s)\leq C_f P^T_{\beta}(s)$; (2) (Return coverage) $P_{\beta}^T(g=f(s)|s)\geq \alpha_f$; and (3) (Domain occupancy overlap) $d^T_{\beta}(s)\leq \gamma_f d^S_{\beta}(s)$.
\end{assumption}
\begin{assumption}%
\label{assumption on estimation}
(1) The policy class $\Pi$ is finite. (2)$|\log\pi(a|s,g) - \log\pi(a'|s',g')| \leq c$ for any $(a,s,g,a',s',g')$ and all $\pi \in \Pi$. (3) The approximation error is bounded by $\epsilon_{\text{approx}}$, i.e., $\min_{\pi\in\Pi}L(\pi)\leq \epsilon_{\text{approx}}$.
\end{assumption}
\Cref{assumption on MDP,assumption on coverage,assumption on estimation} are the same as the assumptions imposed in Theorem 1, Theorem 2, and Corollary 3 in \citet{brandfonbrener2022does} respectively.

We first show the sample complexity of DT with only the samples from the target dataset $\cD^T$. If we only use the offline dataset $\cD^T$ collect from the target environment, i.e., at training time we minimizes the empirical negative log-likelihood loss:
\begin{align*}
 \hat{L}^{T}(\pi) = -\sum_{\tau\in\cD^T}\sum_{1\leq t\leq H} \log\pi(a_t|s_t,g(s_t)).
\end{align*}

Then we  get the following sample complexity guarantee based on the result in \citet{brandfonbrener2022does}.

\begin{corollary}
\label{cor:target only}
There exists a conditioning function $f:\cS\rightarrow \RR$ such that assumptions (1) and (2) in \Cref{assumption on MDP}, and (1) and (2) in \Cref{assumption on coverage} hold. Further suppose \Cref{assumption on estimation} holds. Then for some $\delta \in (0,1)$,  with probability at least $1-\delta$, we have
\begin{align*}
    J^T(\pi^\star) - J^T(\hat{\pi}_f) %
    & \leq O\Big(\frac{C_f}{\alpha_f}H^2\Big(\sqrt{c}\Big(\frac{\log|\Pi|/\delta}{N^T}\Big)^{1/4}+\sqrt{\epsilon_{\text{approx}}} \Big)+\frac{\epsilon}{\alpha_f}H^2 \Big).
\end{align*}
\end{corollary}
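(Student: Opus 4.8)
\textbf{Proof proposal for \Cref{cor:target only}.}

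The plan is to invoke the sample complexity machinery of \citet{brandfonbrener2022does} essentially verbatim, since the corollary is just the specialization of their main RCSL guarantee to the target environment $M^T$ with dataset $\cD^T$. First I would recall their Theorem~1, which controls the per-state deviation between the learned conditioned policy $\hat\pi_f$ and the RCSL policy $\pi^{\text{RCSL}}_f$ in terms of the return-coverage parameter $\alpha_f$ and the near-determinism parameter $\epsilon$ under \Cref{assumption on MDP}; this produces the $\epsilon/\alpha_f$-type term and establishes that conditioning on $f$ induces a policy whose value is close to $f(s_1)$. Then I would invoke their Theorem~2, which upgrades the pointwise guarantee to a statement about $J^T$ using the bounded occupancy mismatch $C_f$ and return coverage $\alpha_f$ from \Cref{assumption on coverage}, at the cost of an extra factor $C_f/\alpha_f$ and an $H^2$ from summing per-step errors along a horizon-$H$ trajectory.

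Next I would handle the statistical estimation error. Under \Cref{assumption on estimation} — finite policy class $\Pi$, bounded log-likelihood ratio $c$, and realizability up to $\epsilon_{\text{approx}}$ — the standard MLE generalization bound (their Corollary~3) gives that the population negative log-likelihood of the empirical minimizer $\hat\pi$ exceeds the best achievable by $O\big(c\sqrt{\log(|\Pi|/\delta)/N^T}\big)$ with probability at least $1-\delta$. The total variation distance between $\hat\pi$ and the target conditional $P^T_\beta(a\mid s, g)$ is then bounded via Pinsker by the square root of this quantity, which is where the fourth-root rate $\big(\log(|\Pi|/\delta)/N^T\big)^{1/4}$ originates, together with a $\sqrt{\epsilon_{\text{approx}}}$ term from the approximation error.

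Finally I would assemble the three pieces: (i) value of $\hat\pi_f$ is close to value of $\pi^{\text{RCSL}}_f$ up to the estimation error propagated through $C_f/\alpha_f$ and $H^2$; (ii) value of $\pi^{\text{RCSL}}_f$ is close to $f(s_1)$ up to the $\epsilon H^2/\alpha_f$ near-determinism term; (iii) choosing $f$ so that $f(s_1)$ matches the optimal return $J^T(\pi^\star)$ (which is exactly the role of the conditioning function $f$ whose existence is asserted in the hypothesis) closes the chain. Combining by the triangle inequality yields the stated bound. The main obstacle — really the only nontrivial point — is verifying that a single conditioning function $f$ can simultaneously satisfy the consistency requirement (3) in \Cref{assumption on MDP}, the occupancy-mismatch bound (1) in \Cref{assumption on coverage}, and the return-coverage lower bounds at the relevant states; once such an $f$ is fixed, everything else is a direct application of \citet{brandfonbrener2022does}, and indeed this corollary is stated precisely as the target-only baseline against which \Cref{th:simplified sample complexity} is compared.
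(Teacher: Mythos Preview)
Your proposal is correct and matches the paper's approach: the paper does not give an independent proof of this corollary at all, but simply states it as a direct consequence of the results in \citet{brandfonbrener2022does} (``Then we get the following sample complexity guarantee based on the result in \citet{brandfonbrener2022does}''), and your outline is precisely the decomposition those results implement --- Corollary~1 there for $J^T(\pi^\star)-J^T(\pi_f^{\text{RCSL}})$, Theorem~2 for $J^T(\pi_f^{\text{RCSL}})-J^T(\hat\pi_f)$, and Corollary~3 for the MLE finite-sample term yielding the fourth-root rate via Pinsker. Your identification of the only nontrivial point (the existence of a single $f$ satisfying all the hypotheses simultaneously) is also exactly what the paper packages into the assertion ``There exists a conditioning function $f$ such that\ldots'' in the statement itself.
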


Now we consider the case of mixed dataset, where we train our policy on both the target dataset and the source dataset using the proposed returned conditioned decision transformer methods. Note that the size of the target environment dataset is usually small, while the size of the source environment dataset is much larger, that is, $N^T\ll N^S$.
If we incorporate the modified source dataset into the supervised learning, that is, we minimize the following empirical negative log-likelihood loss:
\begin{align}
\label{eq:mixed SL}
    \hat{L}^{\text{mix}}(\pi) = -\sum_{\tau\in \cD}\sum_{1\leq t\leq H}\log\pi(a_t|s_t,g(s_t)).
\end{align}
An observation is that, with the modified source dataset, the regret $J^T(\pi^\star) - J^T(\hat{\pi}_f)$ can be significantly reduced. This is formulated in the following theorem.

\begin{theorem}
\label{th:source+target}
    There exists a conditioning function $f$ such that \Cref{assumption on MDP,assumption on coverage} hold. Further assume \Cref{assumption on estimation} holds.
    Then for some $\delta \in (0,1)$,  with probability at least $1-\delta$, we have
    \begin{align}
    \label{eq:suboptimality S+T}
          J^T(\pi^\star) - J^T(\hat{\pi}_f) 
         & \leq O\Big(\frac{C_f}{\alpha_f}\cdot\frac{N^S+N^T}{N^S/\gamma_f+N^T}\cdot H^2\cdot\Big(\sqrt{c}\Big(\frac{\log|\Pi|/\delta}{N^T+N^S}\Big)^{1/4}+\sqrt{\epsilon_{\text{approx}}} \Big)+\frac{\epsilon}{\alpha_f}\cdot H^2 \Big).
    \end{align}
\end{theorem}
\begin{remark}
    Compared to \Cref{cor:target only}, \Cref{th:source+target} suggests that the modified samples from the source domain could enhance the performance of RCSL when the domain occupancy overlap coefficient $\gamma_f$ is large. In particular, when $N^S \gg N^T$ and $\gamma_f = O(1)$, \eqref{eq:suboptimality S+T} can be simplified to 
    \begin{align*}
         J^T(\pi^\star) - J^T(\hat{\pi}_f) 
        & \leq O\Big(\frac{C_f}{\alpha_f}H^2\Big(\sqrt{c}\Big(\frac{\log|\Pi|/\delta}{N^S}\Big)^{1/4}+\sqrt{\epsilon_{\text{approx}}} \Big)+\frac{\epsilon}{\alpha_f}H^2 \Big),
    \end{align*}
    which significantly improves the bound on suboptimality in \Cref{cor:target only}. 
\end{remark}

\section{Experiments}

In this section, we first outline the fundamental setup of the experiment. We then describe experiments designed to address specific questions, with each question and its corresponding answer detailed in a separate subsection.
\begin{itemize}[nosep,leftmargin=*]
    \item How effective are DT-type methods in mitigating the impact of offline data shortages in target environment?
    \item What techniques can be employed to improve the performance of DT-type methods in off-dynamics scenarios while addressing the constraints of offline data shortages in target environment?
    \item How does the performance of DT-type methods compare to other approaches in solving off-dynamics problems?
\end{itemize}

\subsection{Basic Experiment Setting}
\label{sec:experiment_setting}

\textbf{Tasks and Environments.} We study established D4RL tasks in the Gym-MuJoCo environment \citep{fu2020d4rl}, a suite built atop the MuJoCo physics simulator, featuring tasks such as locomotion and manipulation. Particularly, we focused on three environments: Walker2D, Hopper, and HalfCheetah. In addressing the off-dynamics reinforcement learning problem, we distinguish between the Source and Target environments. The Target environment is based on the original Gym-MuJoCo framework, while the Source environment is modified using two shift methods: BodyMass Shift and JointNoise Shift. In the BodyMass Shift, the mass of the body is altered in the Source environment, whereas in the JointNoise Shift, random noise is added to the actions.

\textbf{Dataset.} For the Target Dataset corresponding to the Target Environment, we leverage the official D4RL data to construct the target datasets: 10T and 1T. The 10T dataset comprises ten times the number of trajectories compared to the 1T dataset.\footnote{Unlike the approach of \citet{liu2022dara}, which constructs the 1T dataset by selecting the last 1/10 timesteps from the original target dataset (10T), we propose a uniform sampling method across trajectories in the target dataset.}
For the Source Dataset collection, we begin by modifying the environment through adjustments to the XML file of the MuJoCo simulator. We then collect the Random, Medium, Medium-Replay, and Medium-Expert offline datasets in the modified environments, following the same data collection procedure as used in D4RL. For further details on the dataset collection process and the datasets, please refer to the \Cref{sec:detailed_setting}. 

\textbf{Baselines.}
We consider a diverse set of established off-dynamics RL methods as baselines, including BEAR~\citep{kumar2019stabilizing}, AWR~\citep{peng2019advantage}, BCQ~\citep{fujimoto2019off}, CQL~\citep{kumar2020conservative}, and MOPO~\citep{yu2020mopo}. We further include their DARA-augmented variants~\citep{liu2022dara} as additional baselines for comparison. In addition, we evaluate recent dynamics-aware methods, including DFDT~\citep{wang2025data}, H2O~\citep{niu2022trust}, and IGDF~\citep{wen2024contrastive}.For hyperparameter selection, we use consistent settings across tasks for shared parameters, such as the learning rate and the number of training iterations. Detailed configurations are provided in \Cref{sec:detailed_setting}.
parameter settings.

\begin{table}[t]
    \centering
    \huge
    \renewcommand{\arraystretch}{1.2}
    \resizebox{\textwidth}{!}{%
    \begin{tabular}{c ccc c ccc c ccc c ccc}
    \toprule
         & \multicolumn{3}{c}{\huge \textbf{BEAR}} & & \multicolumn{3}{c}{\huge \textbf{AWR}} & & \multicolumn{3}{c}{\huge \textbf{BCQ}} & & \multicolumn{3}{c}{\huge \textbf{CQL}}\\
    \cmidrule(lr){2-4} \cmidrule(lr){6-8} \cmidrule(lr){10-12} \cmidrule(lr){14-16}
         & \huge \textbf{M} & \huge \textbf{M-R} & \huge \textbf{M-E} & & \huge \textbf{M} & \huge \textbf{M-R} & \huge \textbf{M-E} & & \huge \textbf{M} & \huge \textbf{M-R} & \huge \textbf{M-E} & & \huge \textbf{M} & \huge \textbf{M-R} & \huge \textbf{M-E}\\
    \midrule
         \huge \textbf{1T} & $\text{4.638}_{\pm \text{3.882}}$ & $\text{0.777}_{\pm \text{0.105}}$ & $\text{9.267}_{\pm \text{1.692}}$ & & $\text{68.023}_{\pm \text{1.687}}$ & $\text{28.426}_{\pm \text{2.974}}$ & $\text{100.566}_{\pm \text{0.513}}$ & & $\text{62.567}_{\pm \text{2.459}}$ & $\text{60.638}_{\pm \text{0.683}}$ & $\text{101.610}_{\pm \text{1.309}}$ & & $\text{65.618}_{\pm \text{2.818}}$ & $\text{57.402}_{\pm \text{6.161}}$ & $\text{101.611}_{\pm \text{0.143}}$\\
         \huge \textbf{10T} & $\text{13.143}_{\pm \text{3.016}}$ & $\text{5.852}_{\pm \text{0.168}}$ & $\text{21.383}_{\pm \text{1.237}}$ & & $\text{78.060}_{\pm \text{0.772}}$ & $\text{58.286}_{\pm \text{1.684}}$ & $\text{109.154}_{\pm \text{0.976}}$ & & $\text{74.735}_{\pm \text{1.184}}$ & $\text{64.735}_{\pm \text{2.555}}$ & $\text{101.840}_{\pm \text{1.962}}$ & & $\text{78.191}_{\pm \text{1.839}}$ & $\text{80.145}_{\pm \text{2.286}}$ & $\text{101.840}_{\pm \text{0.467}}$\\
    \midrule
         & \multicolumn{3}{c}{\huge \textbf{MOPO}} & & \multicolumn{3}{c}{\huge \textbf{DT}} & & \multicolumn{3}{c}{\huge \textbf{Reinformer}} & & \multicolumn{3}{c}{\huge \textbf{QT}}\\
    \cmidrule(lr){2-4} \cmidrule(lr){6-8} \cmidrule(lr){10-12} \cmidrule(lr){14-16}
         & \huge \textbf{M} & \huge \textbf{M-R} & \huge \textbf{M-E} & & \huge \textbf{M} & \huge \textbf{M-R} & \huge \textbf{M-E} & & \huge \textbf{M} & \huge \textbf{M-R} & \huge \textbf{M-E} & & \huge \textbf{M} & \huge \textbf{M-R} & \huge \textbf{M-E}\\
    \midrule
         \huge \textbf{1T} & $\text{20.953}_{\pm \text{2.715}}$ & $\text{20.313}_{\pm \text{3.488}}$ & $\text{20.569}_{\pm \text{0.983}}$ & & $\text{67.261}_{\pm \text{2.316}}$ & $\text{34.482}_{\pm \text{5.890}}$ & $\text{107.171}_{\pm \text{1.611}}$ & & $\text{79.034}_{\pm \text{1.506}}$ & $\text{38.072}_{\pm \text{9.174}}$ & $\text{103.284}_{\pm \text{5.437}}$ & & $\text{81.756}_{\pm \text{1.671}}$ & $\text{67.546}_{\pm \text{9.505}}$ & $\text{111.722}_{\pm \text{1.398}}$\\
         \huge \textbf{10T} & $\text{22.261}_{\pm \text{2.811}}$ & $\text{18.529}_{\pm \text{1.760}}$ & $\text{21.196}_{\pm \text{3.103}}$ & & $\text{79.697}_{\pm \text{3.348}}$ & $\text{68.528}_{\pm \text{1.924}}$ & $\text{108.622}_{\pm \text{1.815}}$ & & $\text{81.377}_{\pm \text{1.903}}$ & $\text{68.168}_{\pm \text{2.661}}$ & $\text{109.845}_{\pm \text{0.726}}$ & & $\text{88.262}_{\pm \text{12.886}}$ & $\text{85.092}_{\pm \text{8.727}}$ & $\text{111.394}_{\pm \text{0.469}}$\\
    \bottomrule
    \end{tabular}}
    \caption{\footnotesize Performance comparison of algorithms on the \textbf{1T}, \textbf{10T}, and \textbf{1T10S} datasets. In this study, \textbf{1T10S(B)} refers to the source dataset under the \textbf{BodyMass shift} setting, while \textbf{1T10S(J)} corresponds to the source dataset under the \textbf{JointNoise shift} setting. Experiments are conducted using the \textbf{Medium (M)}, \textbf{Medium-Replay (M-R)}, and \textbf{Medium-Expert (M-E)} datasets. We present the results for the \textbf{Walker2D} environment here; complete results are provided in \Cref{sec:additional_results}. All reported values are averaged over five seeds (0, 1012, 2024, 3036, 4048).}
    \label{tab:target_experiment}
\end{table}

\subsection{Evaluation of Adaptability and Data Efficiency in RCSL Algorithms}

We evaluate three representative DT-type algorithms include DT \citep{chen2021decision}, QT \citep{hu2024q}, and Reinformer \citep{zhuang2024reinformer} to assess their ability to enable an adaptive policy while reducing reliance on offline data in the target environment. To conduct this evaluation, we perform two experiments: (1) We examine the performance of the three DT-type algorithms under varying dataset sizes and quality levels in the target environment; (2) We evaluate their effectiveness in off-dynamics scenarios.

To assess the impact of dataset size and quality on the performance of DT-type algorithms, we evaluate three algorithms using two datasets: a subset of the target data (1T) and the full target dataset (10T), comparing the results against other baselines. These experiments aim to quantify the performance gap between training on 1T and 10T datasets, highlighting the effects of target environment data scarcity and establishing a benchmark for off-dynamics settings. In off-dynamics offline RL, instead of relying solely on a large target dataset, we incorporate a small subset of target data with a larger source dataset. To examine how effectively algorithms leverage source data, we construct the 1T10S dataset by combining a subset of target data (1T) with the full source dataset (10S), following the setting of \citet{liu2022dara}. This dataset serves as the training set for DT-type algorithms, whose performance is then evaluated in the target environment. For a comprehensive comparison, we benchmark DT-type algorithms against other baseline methods.

\Cref{tab:target_experiment} presents the evaluation results, illustrating the impact of dataset size and off-dynamics settings on algorithm performance. Limited training data constrains the algorithm’s learning capacity, leading to degraded performance, particularly when the target environment data is scarce. To mitigate this issue, additional source datasets are incorporated under BodyMass Shift and JointNoise Shift settings, improving generalization to the target environment. While leveraging source data can partially compensate for the lack of target data, it remains suboptimal compared to training with sufficient target environment data. To enhance the effectiveness of DT-type frameworks in off-dynamics settings, we propose two return-based augmentation methods: \algnameMV\ and \algnameDara, which can be applied to DT, Reinformer, and QT frameworks. Specifically, applying \algnameMV\ results in \textbf{$\text{REAG}_{\text{MV}}^{\text{DT}}$}, \textbf{$\text{REAG}_{\text{MV}}^{\text{Reinf}}$}, and \textbf{$\text{REAG}_{\text{MV}}^{\text{QT}}$}, while applying \algnameDara\ leads to \textbf{$\text{REAG}_{\text{Dara}}^{\text{DT}}$}, \textbf{$\text{REAG}_{\text{Dara}}^{\text{Reinf}}$}, and \textbf{$\text{REAG}_{\text{Dara}}^{\text{QT}}$}, demonstrating the potential of these augmentation techniques in improving algorithm performance under off-dynamics setting.

\begin{table}[t]
    \centering
    \huge
    \renewcommand{\arraystretch}{1.2} %
    
    \resizebox{\textwidth}{!}{%
    \begin{tabular}{c c c ccc ccc ccc}
    \toprule
    & & & \multicolumn{3}{c}{\textbf{DT}} & \multicolumn{3}{c}{\textbf{Reinformer}} & \multicolumn{3}{c}{\textbf{QT}} \\
    \cmidrule(lr){4-6} \cmidrule(lr){7-9} \cmidrule(lr){10-12}
     & &  & \textbf{1T10S} & \textbf{\algnameMVDT} & \textbf{\algnameDaraDT} & \textbf{1T10S} & \textbf{\algnameMVReinf} & \textbf{\algnameDaraReinf} & \textbf{1T10S} & \textbf{\algnameMVQT} & \textbf{\algnameDaraQT} \\
    \midrule
    \multirow{6}{*}{\textbf{W2D}} & \multirow{2}{*}{\textbf{M}}   & \textbf{BM} & \textcolor{gray}{78.768±1.233} & 80.857±1.715{\Huge \textcolor{red}{$\boldsymbol{\uparrow}$}} & 78.257±2.423{\Huge \textcolor{mygreen}{$\boldsymbol{\downarrow}$}} & \textcolor{gray}{80.857±0.509} & 82.354±1.479{\Huge \textcolor{red}{$\boldsymbol{\uparrow}$}} & 80.666±0.505{\Huge \textcolor{mygreen}{$\boldsymbol{\downarrow}$}} & \textcolor{gray}{84.325±0.425} & 84.582±0.507{\Huge \textcolor{red}{$\boldsymbol{\uparrow}$}}& 83.068±0.859{\Huge \textcolor{mygreen}{$\boldsymbol{\downarrow}$}} \\
                          &                     & \textbf{JN} & \textcolor{gray}{71.068±1.022} & 75.008±1.834{\Huge \textcolor{red}{$\boldsymbol{\uparrow}$}} & 71.779±1.706{\Huge \textcolor{red}{$\boldsymbol{\uparrow}$}} & \textcolor{gray}{74.748±1.721}& 75.008±0.986{\Huge \textcolor{red}{$\boldsymbol{\uparrow}$}} & 74.268±1.341{\Huge \textcolor{mygreen}{$\boldsymbol{\downarrow}$}} & \textcolor{gray}{80.621±1.143} & 80.904±1.502{\Huge \textcolor{red}{$\boldsymbol{\uparrow}$}} & 78.672±2.201{\Huge \textcolor{mygreen}{$\boldsymbol{\downarrow}$}} \\
    \cmidrule(lr){2-12}
                          & \multirow{2}{*}{\textbf{M-R}} & \textbf{BM} & \textcolor{gray}{73.664±1.920} & 73.708±1.570{\Huge \textcolor{red}{$\boldsymbol{\uparrow}$}} & 67.565±0.799{\Huge \textcolor{mygreen}{$\boldsymbol{\downarrow}$}} & \textcolor{gray}{67.032±5.767} & 50.296±14.211{\Huge \textcolor{mygreen}{$\boldsymbol{\downarrow}$}} & 66.658±4.303{\Huge \textcolor{mygreen}{$\boldsymbol{\downarrow}$}} & \textcolor{gray}{87.292±0.631} & 87.491±1.226{\Huge \textcolor{red}{$\boldsymbol{\uparrow}$}} & 76.169±7.567{\Huge \textcolor{mygreen}{$\boldsymbol{\downarrow}$}} \\
                          &                     & \textbf{JN} & \textcolor{gray}{58.255±3.181} & 55.722±2.653{\Huge \textcolor{mygreen}{$\boldsymbol{\downarrow}$}} & 62.226±0.383{\Huge \textcolor{red}{$\boldsymbol{\uparrow}$}} & \textcolor{gray}{54.801±3.217} & 47.591±10.244{\Huge \textcolor{mygreen}{$\boldsymbol{\downarrow}$}} & 55.438±4.833{\Huge \textcolor{red}{$\boldsymbol{\uparrow}$}} & \textcolor{gray}{82.139±1.029} & 82.363±4.206{\Huge \textcolor{red}{$\boldsymbol{\uparrow}$}} & 79.795±4.708{\Huge \textcolor{mygreen}{$\boldsymbol{\downarrow}$}} \\
    \cmidrule(lr){2-12}
                          & \multirow{2}{*}{\textbf{M-E}} & \textbf{BM} & \textcolor{gray}{84.430±0.823} & 88.235±1.886{\Huge \textcolor{red}{$\boldsymbol{\uparrow}$}} & 85.328±0.865{\Huge \textcolor{red}{$\boldsymbol{\uparrow}$}} & \textcolor{gray}{83.388±0.806} & 84.897±1.117{\Huge \textcolor{red}{$\boldsymbol{\uparrow}$}} & 83.761±0.735{\Huge \textcolor{red}{$\boldsymbol{\uparrow}$}} & \textcolor{gray}{93.082±0.348} & 92.744±0.499{\Huge \textcolor{mygreen}{$\boldsymbol{\downarrow}$}} & 94.578±1.383{\Huge \textcolor{red}{$\boldsymbol{\uparrow}$}} \\
                          &                     & \textbf{JN} & \textcolor{gray}{115.746±1.116} & 111.060±2.247{\Huge \textcolor{mygreen}{$\boldsymbol{\downarrow}$}} & 111.236±0.914{\Huge \textcolor{mygreen}{$\boldsymbol{\downarrow}$}} & \textcolor{gray}{117.360±2.550} & 118.218±1.460{\Huge \textcolor{red}{$\boldsymbol{\uparrow}$}} & 117.765±2.499{\Huge \textcolor{red}{$\boldsymbol{\uparrow}$}} & \textcolor{gray}{116.149±1.640} & 118.564±0.697{\Huge \textcolor{red}{$\boldsymbol{\uparrow}$}} & 116.115±1.889{\Huge \textcolor{mygreen}{$\boldsymbol{\downarrow}$}} \\
    \midrule
    \multirow{6}{*}{\textbf{Hp}}  & \multirow{2}{*}{\textbf{M}}   & \textbf{BM} & \textcolor{gray}{34.057±0.177} & 39.435±1.239{\Huge \textcolor{red}{$\boldsymbol{\uparrow}$}} & 37.787±1.914{\Huge \textcolor{red}{$\boldsymbol{\uparrow}$}} & \textcolor{gray}{51.357±3.713} & 59.085±2.791{\Huge \textcolor{red}{$\boldsymbol{\uparrow}$}} & 51.771±5.322{\Huge \textcolor{red}{$\boldsymbol{\uparrow}$}} & \textcolor{gray}{49.516±9.798} & 51.796±9.971{\Huge \textcolor{red}{$\boldsymbol{\uparrow}$}} & 62.262±5.348{\Huge \textcolor{red}{$\boldsymbol{\uparrow}$}} \\
                          &                     & \textbf{JN} & \textcolor{gray}{70.685±0.726} & 70.356±3.657{\Huge \textcolor{mygreen}{$\boldsymbol{\downarrow}$}} & 78.325±2.522{\Huge \textcolor{red}{$\boldsymbol{\uparrow}$}} & \textcolor{gray}{70.340±4.633} & 72.346±5.877{\Huge \textcolor{red}{$\boldsymbol{\uparrow}$}} & 70.466±3.728{\Huge \textcolor{red}{$\boldsymbol{\uparrow}$}} & \textcolor{gray}{68.656±7.079} & 73.987±8.080{\Huge \textcolor{red}{$\boldsymbol{\uparrow}$}} & 68.709±12.160{\Huge \textcolor{red}{$\boldsymbol{\uparrow}$}} \\
    \cmidrule(lr){2-12}
                          & \multirow{2}{*}{\textbf{M-R}} & \textbf{BM} & \textcolor{gray}{64.216±1.504} & 66.092±0.233{\Huge \textcolor{red}{$\boldsymbol{\uparrow}$}} & 60.393±1.086{\Huge \textcolor{mygreen}{$\boldsymbol{\downarrow}$}} & \textcolor{gray}{17.534±6.725} & 20.952±9.794{\Huge \textcolor{red}{$\boldsymbol{\uparrow}$}} & 27.238±12.735{\Huge \textcolor{red}{$\boldsymbol{\uparrow}$}} & \textcolor{gray}{69.460±13.948} & 76.287±7.810{\Huge \textcolor{red}{$\boldsymbol{\uparrow}$}} & 82.786±11.992{\Huge \textcolor{red}{$\boldsymbol{\uparrow}$}} \\
                          &                     & \textbf{JN} & \textcolor{gray}{61.870±0.249} & 77.825±1.638{\Huge \textcolor{red}{$\boldsymbol{\uparrow}$}} & 83.525±1.728{\Huge \textcolor{red}{$\boldsymbol{\uparrow}$}} & \textcolor{gray}{41.820±15.773} & 43.985±5.075{\Huge \textcolor{red}{$\boldsymbol{\uparrow}$}} & 52.052±10.035{\Huge \textcolor{red}{$\boldsymbol{\uparrow}$}} & \textcolor{gray}{93.704±7.559} & 93.409±4.696{\Huge \textcolor{mygreen}{$\boldsymbol{\downarrow}$}} & 51.456±12.168{\Huge \textcolor{mygreen}{$\boldsymbol{\downarrow}$}} \\
    \cmidrule(lr){2-12}
                          & \multirow{2}{*}{\textbf{M-E}} & \textbf{BM} & \textcolor{gray}{33.554±0.846} & 52.873±0.454{\Huge \textcolor{red}{$\boldsymbol{\uparrow}$}} & 33.631±1.605{\Huge \textcolor{red}{$\boldsymbol{\uparrow}$}} & \textcolor{gray}{68.973±7.512} & 64.206±12.073{\Huge \textcolor{mygreen}{$\boldsymbol{\downarrow}$}} & 73.363±7.674{\Huge \textcolor{red}{$\boldsymbol{\uparrow}$}} & \textcolor{gray}{61.162±3.767} & 73.952±16.294{\Huge \textcolor{red}{$\boldsymbol{\uparrow}$}} & 77.279±18.607{\Huge \textcolor{red}{$\boldsymbol{\uparrow}$}} \\
                          &                     & \textbf{JN} & \textcolor{gray}{108.254±1.583} & 109.367±1.084{\Huge \textcolor{red}{$\boldsymbol{\uparrow}$}} & 108.261±2.612{\Huge \textcolor{red}{$\boldsymbol{\uparrow}$}} & \textcolor{gray}{109.256±0.126} & 109.472±0.103{\Huge \textcolor{red}{$\boldsymbol{\uparrow}$}} & 109.255±0.188{\Huge \textcolor{mygreen}{$\boldsymbol{\downarrow}$}} & \textcolor{gray}{109.056±0.214} & 109.803±0.609{\Huge \textcolor{red}{$\boldsymbol{\uparrow}$}} & 109.746±0.771{\Huge \textcolor{red}{$\boldsymbol{\uparrow}$}} \\
    \midrule
    \multirow{6}{*}{\textbf{Hc}}  & \multirow{2}{*}{\textbf{M}}   & \textbf{BM} & \textcolor{gray}{39.954±0.260} & 40.250±0.911{\Huge \textcolor{red}{$\boldsymbol{\uparrow}$}} & 37.599±0.395{\Huge \textcolor{mygreen}{$\boldsymbol{\downarrow}$}} & \textcolor{gray}{37.353±0.483} & 42.451±0.491{\Huge \textcolor{red}{$\boldsymbol{\uparrow}$}} & 38.261±1.238{\Huge \textcolor{red}{$\boldsymbol{\uparrow}$}} & \textcolor{gray}{44.656±0.643} & 47.303±0.318{\Huge \textcolor{red}{$\boldsymbol{\uparrow}$}} & 46.383±0.358{\Huge \textcolor{red}{$\boldsymbol{\uparrow}$}} \\
                          &                     & \textbf{JN} & \textcolor{gray}{47.725±0.431} & 44.149±3.672{\Huge \textcolor{mygreen}{$\boldsymbol{\downarrow}$}} & 47.833±0.284{\Huge \textcolor{red}{$\boldsymbol{\uparrow}$}} & \textcolor{gray}{48.274±0.191} & 43.009±0.307{\Huge \textcolor{mygreen}{$\boldsymbol{\downarrow}$}} & 48.404±0.168{\Huge \textcolor{red}{$\boldsymbol{\uparrow}$}} & \textcolor{gray}{56.213±0.327} & 52.394±1.413{\Huge \textcolor{mygreen}{$\boldsymbol{\downarrow}$}} & 55.026±0.410{\Huge \textcolor{mygreen}{$\boldsymbol{\downarrow}$}} \\
    \cmidrule(lr){2-12}
                          & \multirow{2}{*}{\textbf{M-R}} & \textbf{BM} & \textcolor{gray}{20.966±9.607} & 27.812±3.256{\Huge \textcolor{red}{$\boldsymbol{\uparrow}$}} & 24.059±2.271{\Huge \textcolor{red}{$\boldsymbol{\uparrow}$}} & \textcolor{gray}{31.584±1.248} & 32.114±1.455{\Huge \textcolor{red}{$\boldsymbol{\uparrow}$}} & 26.995±4.373 & \textcolor{gray}{41.300±0.787} & 42.405±0.729{\Huge \textcolor{red}{$\boldsymbol{\uparrow}$}} & 41.359±0.985{\Huge \textcolor{red}{$\boldsymbol{\uparrow}$}} \\
                          &                     & \textbf{JN}& \textcolor{gray}{36.509±4.414} & 38.417±4.068{\Huge \textcolor{red}{$\boldsymbol{\uparrow}$}} & 38.031±3.529{\Huge \textcolor{red}{$\boldsymbol{\uparrow}$}} & \textcolor{gray}{40.296±2.914} & 40.840±2.880{\Huge \textcolor{red}{$\boldsymbol{\uparrow}$}} & 38.436±3.377{\Huge \textcolor{mygreen}{$\boldsymbol{\downarrow}$}} & \textcolor{gray}{53.763±0.793} & 53.870±0.981{\Huge \textcolor{red}{$\boldsymbol{\uparrow}$}} & 53.257±0.586{\Huge \textcolor{mygreen}{$\boldsymbol{\downarrow}$}} \\
                          \cmidrule(lr){2-12}
                          & \multirow{2}{*}{\textbf{M-E}} & \textbf{BM} & \textcolor{gray}{54.981±1.147} & 56.228±2.930{\Huge \textcolor{red}{$\boldsymbol{\uparrow}$}} & 51.357±8.231{\Huge \textcolor{mygreen}{$\boldsymbol{\downarrow}$}} & \textcolor{gray}{40.568±0.984} & 46.048±1.657{\Huge \textcolor{red}{$\boldsymbol{\uparrow}$}} & 55.818±1.849{\Huge \textcolor{red}{$\boldsymbol{\uparrow}$}} & \textcolor{gray}{71.080±8.802} & 69.819±5.120{\Huge \textcolor{mygreen}{$\boldsymbol{\downarrow}$}} & 76.533±8.022{\Huge \textcolor{red}{$\boldsymbol{\uparrow}$}} \\
                          &                     & \textbf{JN} & \textcolor{gray}{70.573±8.599} & 77.762±2.099{\Huge \textcolor{red}{$\boldsymbol{\uparrow}$}} & 77.751±2.702{\Huge \textcolor{red}{$\boldsymbol{\uparrow}$}} & \textcolor{gray}{76.073±3.878} & 79.390±0.149{\Huge \textcolor{red}{$\boldsymbol{\uparrow}$}} & 78.981±1.198{\Huge \textcolor{red}{$\boldsymbol{\uparrow}$}} & \textcolor{gray}{82.961±4.019} & 83.692±0.699{\Huge \textcolor{red}{$\boldsymbol{\uparrow}$}} & 82.148±2.758{\Huge \textcolor{mygreen}{$\boldsymbol{\downarrow}$}} \\
    \bottomrule
    \end{tabular}}
    \caption{\footnotesize Performance evaluation of two return augmentation methods, \textbf{\algnameMV} and \textbf{\algnameDara}, integrated with \textbf{DT}, \textbf{Reinformer}, and \textbf{QT} frameworks in off-dynamics scenarios. The experiments are conducted in the \textbf{Walker2D (W2D)}, \textbf{Hopper (Hp)}, and \textbf{HalfCheetah (Hc)} environments under the \textbf{Medium (M)}, \textbf{Medium-Replay (M-R)}, and \textbf{Medium-Expert (M-E)} settings. The source environment is modified using two shift conditions: \textbf{BodyMass shift (BM)} and \textbf{JointNoise shift (JN)}. For reference, the table also includes the performance of the original DT-type methods without augmentation, displayed in \textcolor{gray}{\textbf{gray}} font. Performance changes due to augmentation are indicated with red upward arrows (\textcolor{red}{$\uparrow$}) for improvements and green downward arrows (\textcolor{green}{$\downarrow$}) for degradations compared to the original DT-type methods. All reported values are averaged over five random seeds (0, 1012, 2024, 3036, 4048). }
    \label{tab:main_experiment}
\end{table}

\subsection{Return Augmentation Methods for Off-Dynamics RL}
\label{sec:RADT-MV exper}

Here we discuss how to implement \algnameMV\ and \algnameDara\ in practice. We implement \algnameDara\ based on the dynamics-aware reward augmentation method proposed in \citet{liu2022dara}. For \algnameMV, it involves training the CQL model across both the Target and Source Environments to derive the respective value functions, denoted as $Q_T$ and $Q_S$. The derived value functions are then used to relabel the returns of trajectories in the original dataset. More specifically, the relabeled return $\hat{g}^S$ is calculated as defined in \eqref{eq:relabel}. Within this framework, we use $\mu^S(s,a)$ to denote $Q_S(s,a)$, and $Q_T(s,a)$ corresponds to $\mu^T(s,a)$.
For the computation of \( \sigma_S(s,a) \) and \( \sigma_T(s,a) \), we employ the following methodology: For a given state \( s \), we use the policy of CQL on the source dataset to obtain \( n \) available actions \( \{a^S_1, a^S_2, \ldots, a^S_n\} \) given the state $s$, with the corresponding \( Q \) values \( \{Q_S(s,a^S_1), Q_S(s,a^S_2), \ldots, Q_S(s,a^S_n)\} \), and \( n \) available actions \( \{a^T_1, a^T_2, \ldots, a^T_{n}\} \) in the target environment obtained from the CQL policy trained over the target dataset, with the corresponding \( Q \) values \( \{Q_T(s,a^T_1), Q_T(s,a^T_2), \ldots, Q_T(s,a^T_{n'})\} \). The standard deviations \( \sigma_S(s,a) \) and \( \sigma_T(s,a) \) are then calculated as specified as follows.
\begin{align*}
    \sigma_{S}(s,a) &= \text{std} (Q_S(s,a^{S}_{1}), Q_S(s,a^{S}_{2}), \ldots, Q_S(s,a^{S}_{n})), \\
    \sigma_{T}(s,a) &= \text{std} (Q_T(s,a^{T}_{1}), Q_T(s,a^{T}_{2}), \ldots, Q_T(s,a^{T}_{n})).\notag
\end{align*}
For a detailed discussion, please refer to \Cref{sec:algorithom}. As defined in \eqref{eq:relabel}, computing the ratio \(\frac{\sigma_T(s,a)}{\sigma_S(s,a)}\) is essential. However, extreme values of this ratio can lead to instability during training. To address this, we introduce a clipping technique that constrains the ratio within an upper bound \(\theta_1\) and a lower bound \(\theta_2\). This   helps stabilize \algnameMV\ training by mitigating two key challenges. First, since this ratio depends on the variance of return-to-go in both the source and target environments, extreme variance values can introduce large gradients or noisy updates, destabilizing training. Second, variance is estimated using the Q-value function learned through CQL on the source and target datasets, which may introduce estimation errors. By bounding the ratio within a controlled range, clipping reduces the impact of these errors and prevents instability.

\Cref{tab:main_experiment} presents a performance comparison of the \algnameMV\ and \algnameDara\ return augmentation techniques integrated into different DT-type frameworks, including DT, Reinformer, and QT, in off-dynamics scenarios. Evaluations are conducted across three Gym environments—Walker2D, Hopper, and HalfCheetah—under three dataset settings: Medium, Medium-Replay, and Medium-Expert. To simulate off-dynamics conditions, two domain shifts, BodyMass Shift and JointNoise Shift, are introduced. The results demonstrate that both \algnameMV\ and \algnameDara\ effectively enhance DT-type frameworks, improving performance in most off-dynamics scenarios compared to their original, non-augmented counterparts. Specifically, \algnameMV, which augments based on return values, leverages information from both the source and target environments, making it particularly well-suited for return-based algorithms. In contrast, \algnameDara, which augments based on reward values, exhibits more variable performance across different environments and dataset settings. While \algnameDara\ improves performance in certain cases, \algnameMV\ delivers more stable and robust improvements in the most cases.
\begin{figure*}[t]
    \centering
    \includegraphics[width=\linewidth]{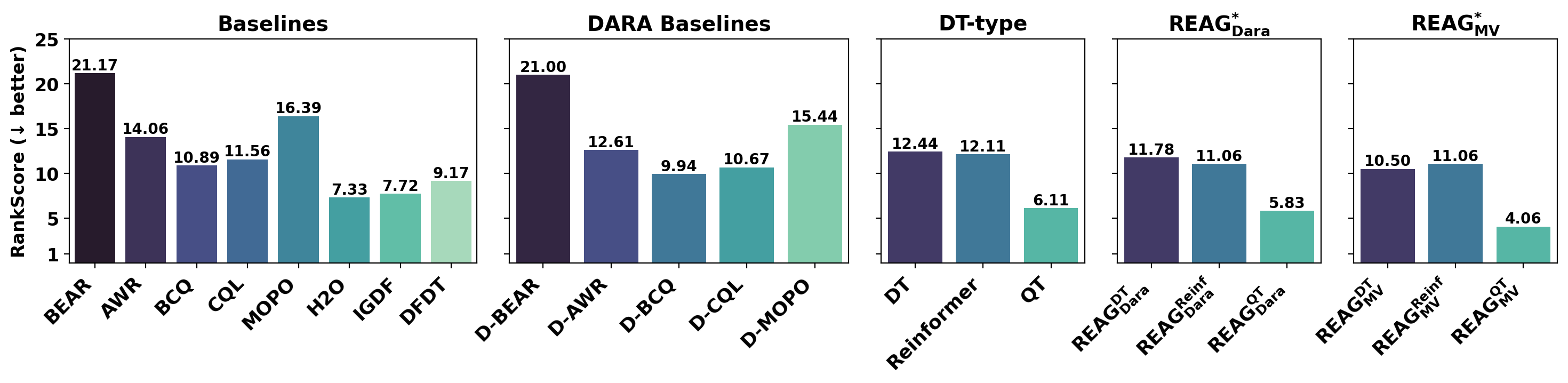}
    \caption{\footnotesize Average normalized rank scores for all baseline algorithms across the Medium, Medium-Replay, and Medium-Expert datasets under BodyMass and JointNoise shift settings in the Walker2D, Hopper, and HalfCheetah environments. Within each setting, algorithms were ranked based on performance, with the top-performing algorithm assigned a rank of 1. Tied scores received the same rank, with subsequent ranks adjusted accordingly. Lower rank scores indicate better overall performance. The original ranks (from 22 algorithms) were normalized to a scale of 1 to 22. The figure presents the average normalized rank scores across the Walker2D, Hopper, and HalfCheetah environments. 
    }
    \label{fig:rank-score}
\end{figure*}
\begin{figure}[t]
    \centering
    \begin{subfigure}{\textwidth}
        \centering
        \includegraphics[width=\linewidth]{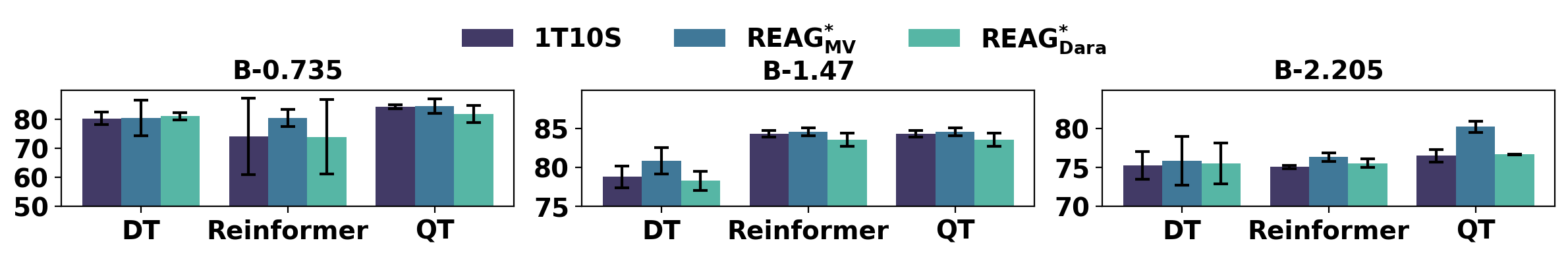}
        \caption{\footnotesize Performance evaluation under varying body mass shift settings in the Walker2D medium environment.}
        \label{fig:BM_walker2d}
    \end{subfigure}
    \hfill
    \begin{subfigure}{\textwidth}
        \centering
        \includegraphics[width=\linewidth]{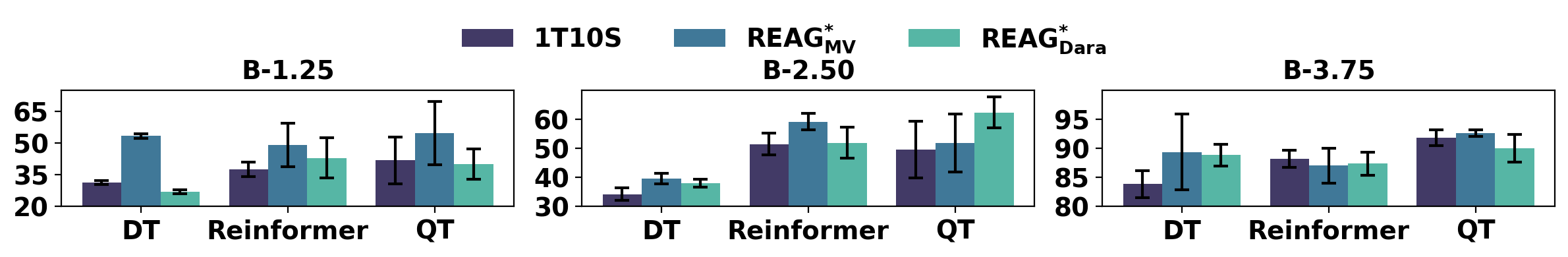}
        \caption{\footnotesize Performance evaluation under varying body mass shift settings in the Hopper medium environment.}
        \label{fig:BM_hopper}
    \end{subfigure}
    \caption{\footnotesize Performance of \textbf{$\text{REAG}_{\text{MV}}^{\text{*}}$} and \textbf{$\text{REAG}_{\text{Dara}}^{\text{*}}$} algorithms under different body mass shift settings in the Walker2D and Hopper medium environments. \textbf{"B-x"} denotes that the body mass in the simulator is set to \textbf{x}. The target body mass is \textbf{2.94} in the Walker2D environment and \textbf{5} in the Hopper environment.}
    \label{fig:BM_shift}
\end{figure}
DARA is a widely adopted approach for addressing off-dynamics RL problems by introducing reward augmentation to enhance policy adaptation from a source dataset to a target environment while minimizing reliance on extensive target data. It seamlessly integrates with traditional offline RL frameworks such as CQL and BCQ. In our evaluation, we compare our proposed methods against DARA-based approaches, including both traditional RL frameworks and their DARA-augmented variants, as well as DT-type frameworks with and without \algnameMV\ and \algnameDara\ augmentation, providing a comprehensive assessment of augmentation techniques for off-dynamics adaptation. We present a comparative ranking where lower average rank scores indicate better overall performance, as shown in \Cref{fig:rank-score}; for the raw results of each setting, please refer to \Cref{sec:additional_results}. The results demonstrate that DT-type frameworks exhibit strong potential in solving off-dynamics RL problems, outperforming traditional offline RL methods, particularly in the case of QT. Return-based augmentation techniques further enhance effectiveness, with \algnameMV\ and $\text{REAG}_{\text{MV}}^{\text{QT}}$ achieving state-of-the-art performance compared to other baselines. Additionally, while DARA effectively improves the performance of non-return-based offline RL methods, a noticeable gap remains between these approaches and DT-type methods.

\begin{figure}[t]
    \centering
    \begin{subfigure}{0.49\textwidth}
        \centering
        \includegraphics[width=\linewidth]{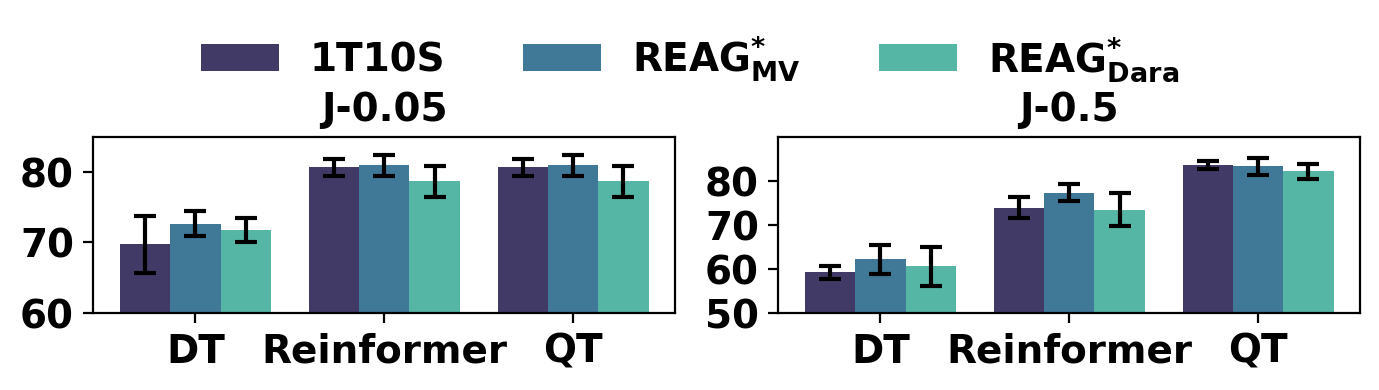}
        \caption{\footnotesize Performance evaluation under varying joint noise shift settings in the Walker2D medium environment.}
        \label{fig:JN_walker2d}
    \end{subfigure}
    \hfill
    \begin{subfigure}{0.49\textwidth}
        \centering
        \includegraphics[width=\linewidth]{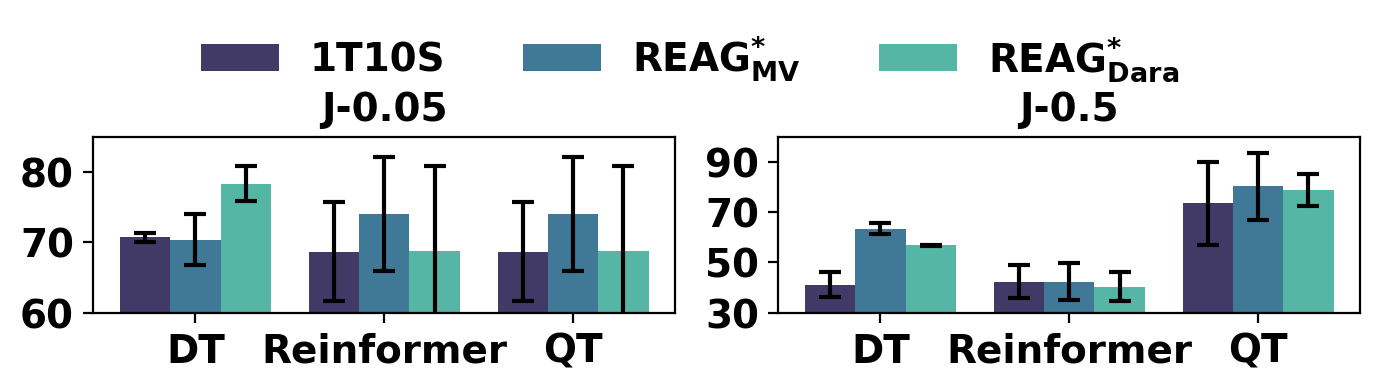}
        \caption{\footnotesize Performance evaluation under varying joint noise shift settings in the Hopper medium environment.}
        \label{fig:JN_hopper}
    \end{subfigure}
    \caption{\footnotesize Performance of \textbf{$\text{REAG}_{\text{MV}}^{\text{*}}$} and \textbf{$\text{REAG}_{\text{Dara}}^{\text{*}}$} algorithms across varying JointNoise shift settings in the Walker2D and Hopper medium environments. \textbf{"J-x"} denotes the addition of random noise in the range \textbf{(-x, +x)} to the action.}
    \label{fig:JN_shift}
\end{figure}

\subsection{Ablation Studies for Return Augmentation Methods}

In this section, we present an ablation study to identify the key factors affecting \algnameDara\ and \algnameMV\ algorithm performance. We study the following variation settings. 
\begin{itemize}[nosep, leftmargin=*]
    \item \textbf{(Dynamics Shift)} How do \textbf{\algnameMV} and \textbf{$\text{REAG}_{\text{Dara}}^{\text{*}}$} methods perform under significantly shifted source environments?
    \item \textbf{(Clipped Augmented Return)} What is the impact of the clipping technique in \textbf{\algnameMV} on its performance?
    \item \textbf{(Consistent Augmented Return)} How does consistency affect the performance of \textbf{\algnameMV}?
    \item \textbf{(Return Learning)} How accurate is the estimation of the learned value function in \textbf{\algnameMV}?
\end{itemize}

\paragraph{Dynamics Shift.} To evaluate the impact of shifting source environments on \algnameMV\ and $\text{REAG}_{\text{Dara}}^{\text{*}}$, we assess their performance under various BodyMass and JointNoise shift settings. The experimental results are presented in Figures \ref{fig:BM_shift} and \ref{fig:JN_shift}. Our findings indicate that as the body mass shift increases—creating a greater discrepancy from the target environment—performance deteriorates in both the Walker2D and Hopper Medium environments. Similarly, introducing higher levels of action noise leads to a decline in performance, suggesting that increased random noise raises the likelihood of failure, ultimately resulting in poorer outcomes. This performance degradation is particularly evident in the DT framework, highlighting its sensitivity to off-dynamics shifts, whereas Reinformer and QT demonstrate greater robustness. Across all shift experiments, \algnameMV\ consistently outperforms \algnameDara, with the performance gap becoming especially pronounced under larger shifts, such as in the Hopper environment with a body mass shift of 1.25.

\begin{figure}[t]
    \centering
    \begin{minipage}{\textwidth}
        \centering
        \includegraphics[width=\textwidth]{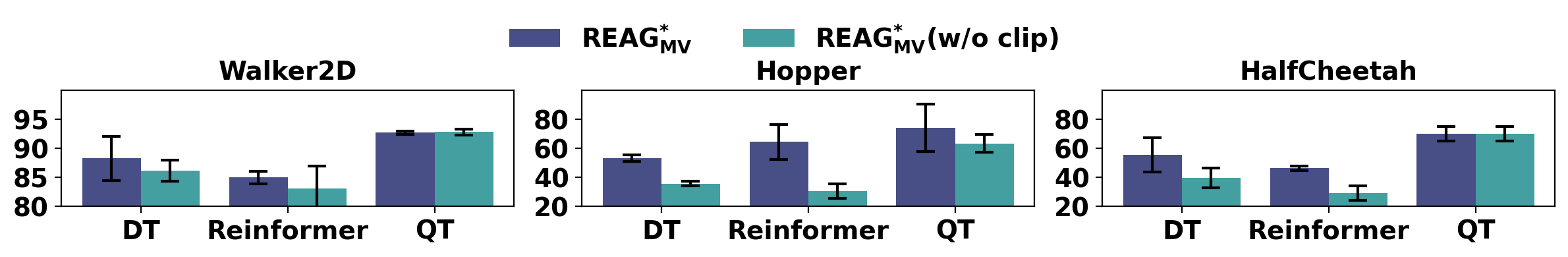} %
        \caption{\footnotesize Comparison of \algnameMV\ with and without the clipping technique in the Medium Expert setting of the Walker2D environment under BodyMass shift. Results are averaged over five seeds.}
        \label{fig:Abla_BM_clip}
    \end{minipage}
    \hfill
    \begin{minipage}{\textwidth}
        \centering
        \includegraphics[width=\textwidth]{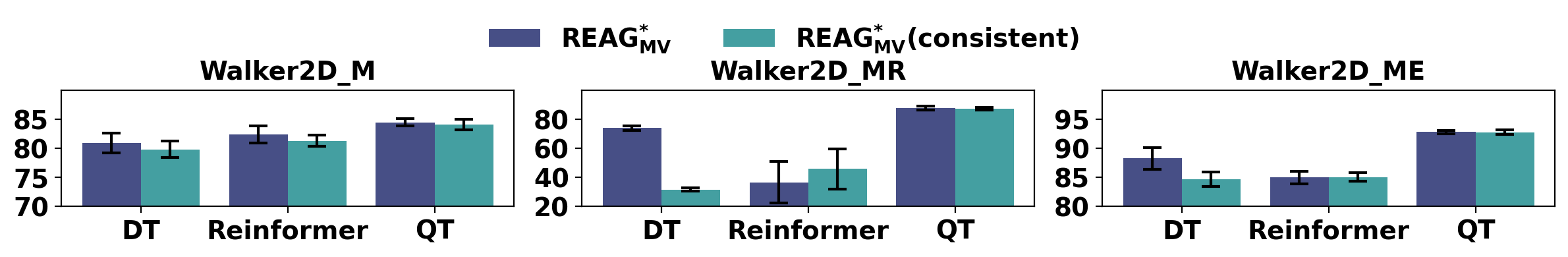} %
        \caption{\footnotesize Comparison of \algnameMV\ and \algnameMV(consistent) across Medium, Medium Replay, and Medium Expert settings in the Walker2D environment under BodyMass shift. Results are averaged over five seeds.}
        \label{fig:Abla_BM_consistent}
    \end{minipage}
\end{figure}

\paragraph{Clipped Augmented Return.} For data augmentation in \algnameMV, we apply a clipping technique to prevent the occurrence of extreme values. To evaluate its impact, we compare the performance of \algnameMV\ with and without clipping in the Walker2D, Hopper, and HalfCheetah Medium Expert environments under BodyMass shifts. The results, presented in \Cref{fig:Abla_BM_clip}, demonstrate that mitigating extreme values generally enhances the performance of \algnameMV. Additionally, we observe that for $\text{REAG}_{\text{MV}}^{\text{QT}}$, clipping does not yield significant improvements compared to DT and Reinformer. We hypothesize that this is due to the QT mechanism, which inherently regularizes the return, whereas DT and Reinformer lack such a mechanism.

\paragraph{Consistent Augmented Return.} It is worth noting that our augmented target returns do not satisfy the \emph{consistency condition}, which requires that the augmented returns follow $R_{t+1} - R_t = r_t$, as enforced by the original DT. To verify whether consistency is a necessary condition for augmentation in off-dynamics settings, we conduct the following ablation study. Specifically, we introduce a variant of \algnameMV, denoted as $\text{REAG}_{\text{MV}}^{*}$ (consistent), where for each trajectory in the target environment, return augmentation is applied only to the initial return, while all subsequent augmented returns are derived using the consistency condition $R_{t+1} - R_t = r_t$. The results, presented in \Cref{fig:Abla_BM_consistent}, indicate that \algnameMV\ outperforms its consistency-enforced variant in most cases. This finding suggests that enforcing consistency does not necessarily improve performance; instead, it may limit the effectiveness of \algnameMV\ in the context of off-dynamics offline reinforcement learning.

\paragraph{Return Learning.} To evaluate the learned value functions, \( Q_S \) and \( Q_T \), and their impact on \algnameMV, we conduct an ablation experiment. Specifically, we assess the quality of the learned value functions in both the source and target domains. We select the Hopper environment with a medium-expert offline dataset as the target domain and the BodyMass shift as the source domain. Ideally, the value functions \( Q_S \) and \( Q_T \) learned through \algnameMV\ should accurately reflect the returns of trajectories in their respective domains. To verify this, we train two additional DTs separately on the source and target offline datasets to obtain policies for these environments. Using these policies, we generate test trajectories through rollouts and then leverage the learned value functions \( Q_S \) and \( Q_T \), trained on the 10S and 1T datasets, to predict the returns of these test trajectories. By comparing the predicted returns with the actual returns, we assess the accuracy of the learned value functions. As shown in \Cref{fig:return_comparison}, our learned value functions \( Q_S \) and \( Q_T \) accurately reflect the returns of trajectories collected by the policies in the source and target environments, demonstrating that the Q-values used in our approach serve as reliable approximations.

\begin{figure}[t]
    \centering
    \begin{subfigure}{0.49\textwidth}
        \centering
        \includegraphics[width=\linewidth]{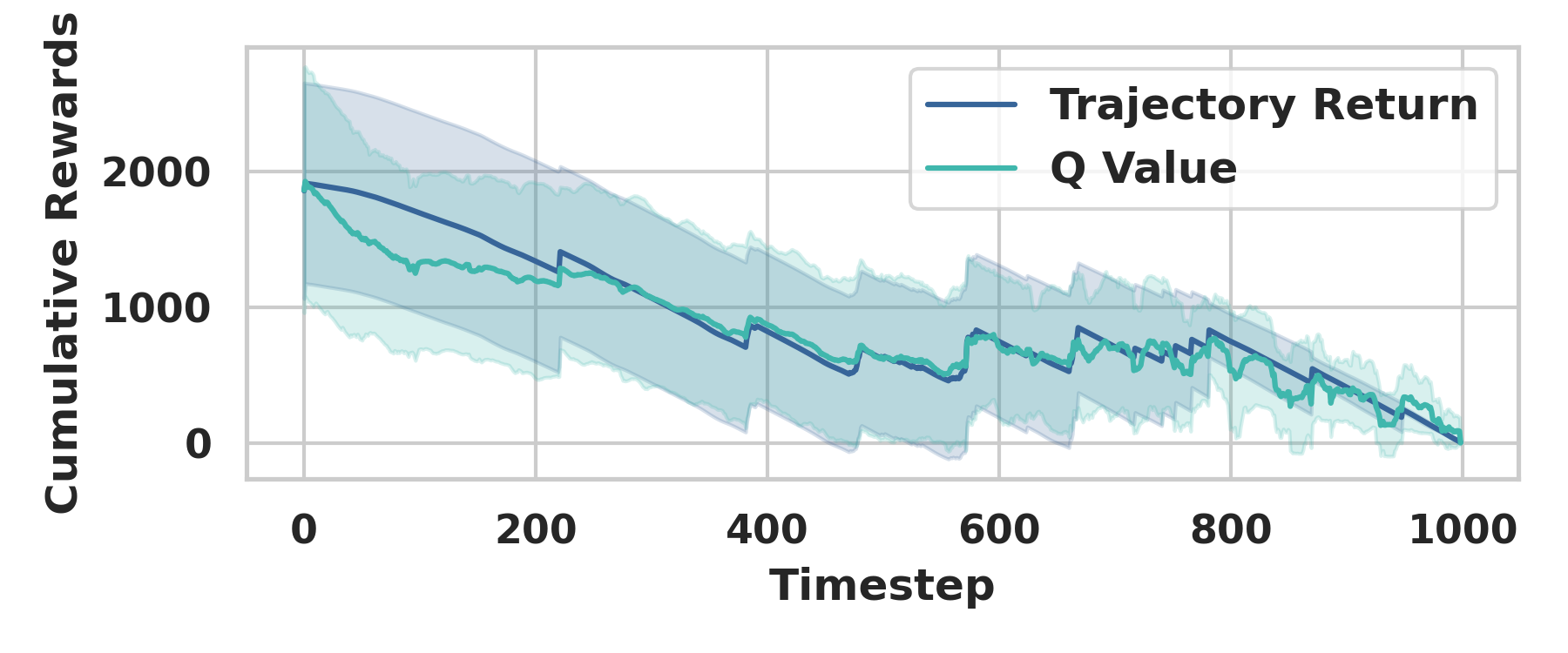}
        \caption{\footnotesize Comparison between cumulative rewards and estimated $Q_S$ values in the source environment with 100 trajectories.}
        \label{fig:source_return_q}
    \end{subfigure}
    \hfill
    \begin{subfigure}{0.49\textwidth}
        \centering
        \includegraphics[width=\linewidth]{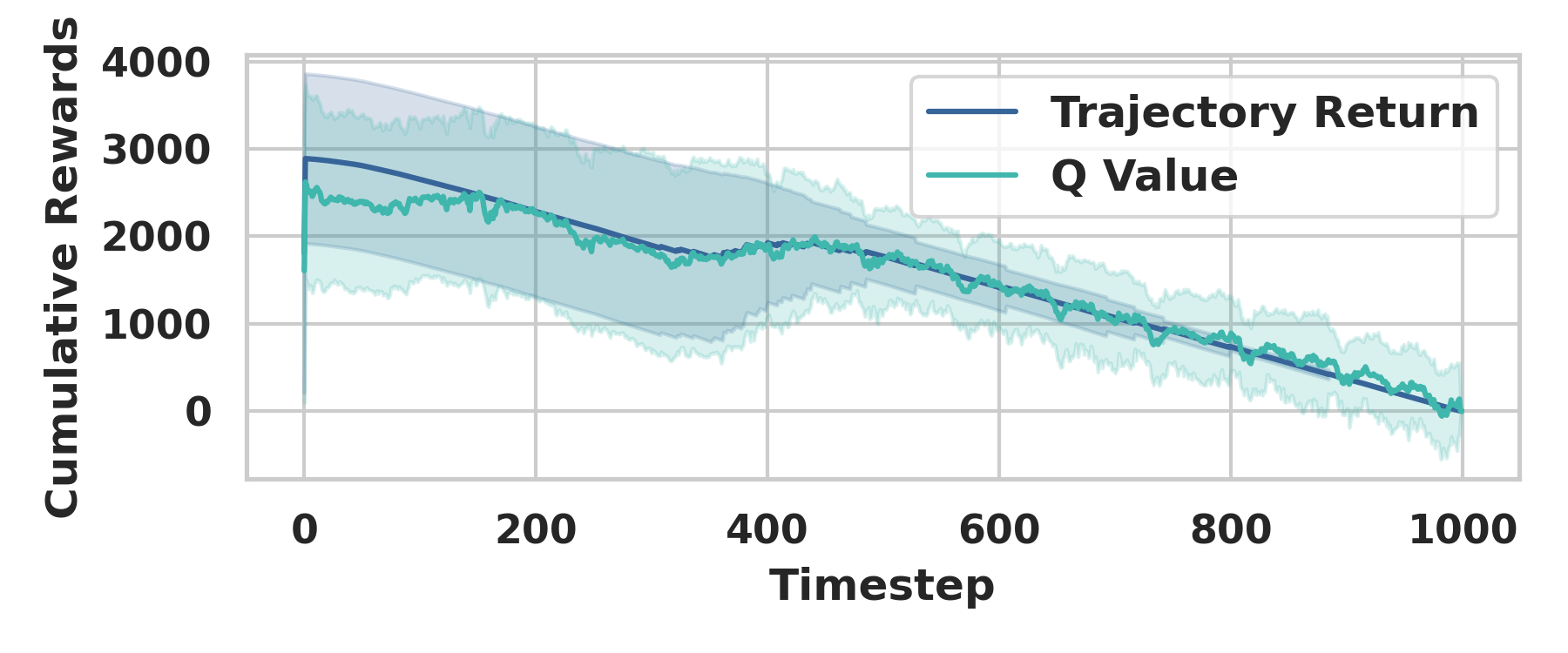}
        \caption{\footnotesize Comparison between cumulative rewards and estimated $Q_T$ values in the target environment with 100 trajectories.}
        \label{fig:target_return_q}
    \end{subfigure}
    \caption{\footnotesize Comparison of the cumulative returns and the learned $Q$-values for the source (left) and target (right) environments using CQL. Results are plotted with the mean and variance of 100 trajectories.}
    \label{fig:return_comparison}
\end{figure}

\section{Conclusion and Future Work}
We introduced the Return Augmented (\algname) method for improving the performance of decision transformer type of methods in off-dynamics reinforcement learning which augments the return function of trajectories from the source environment to better align with the target environment. We presented two practical implementations: \algnameDara, derived from existing dynamic programming reward augmentation techniques, and \algnameMV, which matches the mean and variance of the return function under a Gaussian approximation of the return distribution. Through rigorous theoretical analysis, we showed that \algname\ when trained only on the source domain's dataset can achieve the same level of suboptimality as policies learned directly in the target domain. Our experiments demonstrate that \algname\ boosts various DT-type baseline algorithms in off-dynamics RL, and outperforms many dynamic programming based algorithms in this setting.
This work establishes \algname\ as a promising method for leveraging source domain data to improve policy learning in target domains with limited data, thus addressing key challenges in offline, off-policy, and off-dynamics RL. Future work could explore extensions and applications of \algname\ in more diverse RL environments and further refine the augmentation techniques to enhance efficiency and effectiveness.

\section*{Acknowledgments}
We would like to thank the anonymous reviewers and the editor for their helpful comments. Yu Yang, Zhishuai Liu and Pan Xu were supported in part by the National Science Foundation (DMS-2323112) and the Whitehead Scholars Program at the Duke University School of Medicine. The views and conclusions in this paper are those of the authors and should not be interpreted as representing any funding agency.

\bibliography{reference}
\bibliographystyle{tmlr}

\appendix

\clearpage
\section{Proof of \Cref{th:source+target}}
\label{sec:proof}
\begin{lemma}[Corollary 1 of \citet{brandfonbrener2022does}]
\label{lemma:Corollary 1}
    Under the assumptions in \Cref{assumption on MDP}, there exists a conditioning function such that 
    \begin{align*}
        J^T(\pi^\star) - J^T(\pi_f^{\text{RCSL}}) \leq \epsilon\Big(\frac{1}{\alpha_f}+3 \Big)H^2.
    \end{align*}
\end{lemma}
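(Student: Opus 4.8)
The plan is to follow the alignment argument of \citet{brandfonbrener2022does}, working entirely at the population level so that $\pi_f^{\text{RCSL}}(a\mid s)=P^T_\beta(a\mid s,\, g=f(s))$ is the exact return-conditioned policy in the target environment. First I would fix the conditioning function: take $f(s)$ to be the optimal return-to-go from $s$ in the deterministic surrogate MDP given by the functions $T$ and $r$ of \Cref{assumption on MDP}(2), so that \Cref{assumption on MDP}(3) (consistency $f(s)=f(s')+r$ along deterministic transitions) holds by construction, near determinism is an environment property, and \Cref{assumption on MDP}(1) supplies $P^T_\beta(g=f(s_1)\mid s_1)\ge\alpha_f$ since the deterministic optimal return is achievable and hence carries positive behavior mass. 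I would then decompose the suboptimality as
\begin{align*}
J^T(\pi^\star)-J^T(\pi_f^{\text{RCSL}}) = \big(J^T(\pi^\star)-\EE_{s_1}[f(s_1)]\big) + \big(\EE_{s_1}[f(s_1)]-J^T(\pi_f^{\text{RCSL}})\big),
\end{align*}
and bound the two terms separately: the first is the gap between the true optimal value and the deterministic target return, while the second is the gap between the target return and what the conditioned policy actually achieves.

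For the first term I would argue that, since each step deviates from $(T,r)$ with probability at most $\epsilon$ and each deviation costs at most $H$ in cumulative reward over an $H$-step horizon, the optimal deterministic return $\EE_{s_1}[f(s_1)]$ lies within $O(\epsilon H^2)$ of $J^T(\pi^\star)$. The heart of the argument is the second term, which I would control by a telescoping recursion on the return-to-go gap $\Delta(s):=f(s)-g_{\pi_f^{\text{RCSL}}}(s)$, where $g_{\pi_f^{\text{RCSL}}}(s)$ is the expected return-to-go actually realized by the conditioned policy from $s$. Expanding one step of $\pi_f^{\text{RCSL}}$ and using consistency to cancel $f(s)-r-f(s')$ on the deterministic event, I obtain
\begin{align*}
\Delta(s) \le 2H\cdot P^T_\beta\big(\text{deviation at } s \mid s,a,\, g=f(s)\big) + \EE[\Delta(s')],
\end{align*}
so that unrolling over the horizon accumulates the per-step deviation probabilities, each weighted by the $O(H)$ worst-case cost, producing the overall $H^2$ scaling.

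The main obstacle, and the step where $\alpha_f$ enters, is bounding the return-conditioned deviation probability $P^T_\beta(\text{deviation}\mid s,a,\,g=f(s))$. Here I would apply Bayes' rule to write it as the unconditioned deviation probability (at most $\epsilon$) divided by $P^T_\beta(g=f(s)\mid s,a)$, and lower-bound the denominator by the return-coverage parameter $\alpha_f$, yielding the amplified bound $\epsilon/\alpha_f$ at the initial conditioning. The delicate point is ensuring this $1/\alpha_f$ blow-up is not compounded multiplicatively across all $H$ steps: by consistency, once the policy is launched at $s_1$ the return-to-go of any deterministic continuation is automatically $f(s_t)$, so the coverage normalization is paid only at the initial conditioning while subsequent steps contribute only $\epsilon$ each. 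Collecting the single $\frac{\epsilon}{\alpha_f}H^2$ term from the initial conditioning together with the constant number of $\epsilon H^2$ terms arising from the deterministic comparison and the telescoping then yields the claimed bound $\epsilon\big(\tfrac{1}{\alpha_f}+3\big)H^2$.
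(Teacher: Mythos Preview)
The paper does not prove this lemma at all: it is quoted verbatim as Corollary~1 of \citet{brandfonbrener2022does} and then used as a black box in the proof of \Cref{th:source+target}. So there is no ``paper's own proof'' to compare against; your proposal is an attempt to reconstruct the argument from the cited reference.

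That said, your reconstruction has a real gap in the step you flag as ``delicate.'' You claim that the $1/\alpha_f$ amplification is paid \emph{only once}, at the initial conditioning, and that subsequent steps contribute just $\epsilon$. But your own recursion $\Delta(s)\le 2H\cdot P^T_\beta(\text{deviation}\mid s,a,g=f(s))+\EE[\Delta(s')]$ yields, under that accounting, a total of $2H\cdot\tfrac{\epsilon}{\alpha_f}+2H\epsilon(H-1)=O(\tfrac{\epsilon H}{\alpha_f}+\epsilon H^2)$, not the $\tfrac{\epsilon}{\alpha_f}H^2$ term you then claim to collect. The arithmetic does not close. More substantively, the RCSL policy at step $t$ is defined as $P^T_\beta(a\mid s_t,\,g=f(s_t))$, a \emph{fresh} conditioning at every state; bounding the conditioned deviation probability via Bayes at step $t$ requires a lower bound on $P^T_\beta(g=f(s_t)\mid s_t)$, and consistency along deterministic paths does not by itself furnish one. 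In the original argument the $\epsilon/\alpha_f$ deviation bound is in effect invoked at every step of the horizon, and summing $H$ such terms (each costing at most $H$ in return) is precisely what produces the $\tfrac{\epsilon}{\alpha_f}H^2$ contribution. Your worry about \emph{multiplicative} compounding $(1/\alpha_f)^H$ is a red herring; the actual issue is additive accumulation of $H$ copies of $\epsilon/\alpha_f$, which you need rather than avoid.
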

\begin{lemma}[Lemma 1 of \citet{brandfonbrener2022does}]
\label{lemma:lemma 1}
    For any two policies $\pi$, $\pi'$, we have
    \begin{align*}
        \big\|d_\pi^T - d^T_{\pi'}\big\|_1 \leq 2H \cdot \EE_{s\sim d^T_{\pi}}\big[TV(\pi(\cdot|s)||\hat{\pi}(\cdot|s)) \big].
    \end{align*}
\end{lemma}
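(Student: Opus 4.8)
The plan is to prove the bound by a one-step coupling argument that controls how fast the state marginals of $\pi$ and $\pi'$ (where $\pi'$ plays the role of $\hat\pi$ in the stated inequality) drift apart as time advances, and then to accumulate these per-step discrepancies across the horizon. Throughout I work in the target environment $M^T$ and write $P_t^\pi(\cdot)$ for the marginal law of the state at step $t$ under $\pi$. Because each state $s$ carries its own time index $t(s)$, the normalized occupancy measure satisfies $d_\pi^T(s)=\tfrac1H P_{t(s)}^\pi(s)$, so that $\|d_\pi^T-d_{\pi'}^T\|_1=\tfrac1H\sum_{t=1}^H\|P_t^\pi-P_t^{\pi'}\|_1$. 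Since both policies share the initial law $p(s_1)$, we have $P_1^\pi=P_1^{\pi'}$, i.e. the step-$1$ discrepancy vanishes.

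First I would establish the one-step recursion. Writing $\rho_t^\pi(s,a)=P_t^\pi(s)\pi(a|s)$ for the state-action law and using $P_{t+1}^\pi(s')=\sum_{s,a}\rho_t^\pi(s,a)p^T(s'|s,a)$, the fact that $p^T(\cdot|s,a)$ is a probability kernel (hence a non-expansion in $\ell_1$) gives $\|P_{t+1}^\pi-P_{t+1}^{\pi'}\|_1\le\|\rho_t^\pi-\rho_t^{\pi'}\|_1$. I then split $\rho_t^\pi-\rho_t^{\pi'}$ by adding and subtracting $P_t^\pi(s)\pi'(a|s)$, separating the discrepancy into a policy term $P_t^\pi(s)[\pi(a|s)-\pi'(a|s)]$ and a marginal term $[P_t^\pi(s)-P_t^{\pi'}(s)]\pi'(a|s)$. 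Summing over $a$ converts the first into $2\,\EE_{s\sim P_t^\pi}[TV(\pi(\cdot|s)\,\|\,\pi'(\cdot|s))]$ and the second (using $\sum_a\pi'(a|s)=1$) into $\|P_t^\pi-P_t^{\pi'}\|_1$. This yields the recursion $\delta_{t+1}\le\delta_t+\epsilon_t$, with $\delta_t:=\|P_t^\pi-P_t^{\pi'}\|_1$ and $\epsilon_t:=2\,\EE_{s\sim P_t^\pi}[TV(\pi(\cdot|s)\,\|\,\pi'(\cdot|s))]$.

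Next I would unroll the recursion from $\delta_1=0$ to get $\delta_t\le\sum_{k=1}^{t-1}\epsilon_k$, then sum over the horizon: $\sum_{t=1}^H\delta_t\le\sum_{t=1}^H\sum_{k=1}^{t-1}\epsilon_k=\sum_{k=1}^{H-1}(H-k)\epsilon_k\le H\sum_{k=1}^H\epsilon_k$. Dividing by $H$ and plugging into the occupancy identity gives $\|d_\pi^T-d_{\pi'}^T\|_1\le\sum_{k=1}^H\epsilon_k$. Finally, because distinct time steps correspond to disjoint sets of states, I recombine the time-indexed sum into a single expectation: $\sum_{k=1}^H\EE_{s\sim P_k^\pi}[TV(\cdot)]=\sum_s P_{t(s)}^\pi(s)\,TV(\cdot)=H\,\EE_{s\sim d_\pi^T}[TV(\cdot)]$, hence $\sum_{k=1}^H\epsilon_k=2H\,\EE_{s\sim d_\pi^T}[TV(\pi(\cdot|s)\,\|\,\pi'(\cdot|s))]$, which is exactly the claimed bound.

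I expect the main obstacle to be the normalization bookkeeping: the constant $2H$ (rather than $2$ or $2H^2$) emerges only if the $\tfrac1H$ from the occupancy-measure definition, the factor $H$ from bounding $\sum_t\sum_{k<t}$, and the factor $H$ from converting $\sum_k\EE_{s\sim P_k^\pi}$ back into $\EE_{s\sim d_\pi^T}$ are all tracked consistently. The structural fact that makes everything close exactly is that each state encodes its own time step, so the state marginals at different times live on disjoint supports and the occupancy measure is simply the per-step marginal rescaled by $1/H$; I would state this convention explicitly at the outset to avoid off-by-$H$ errors.
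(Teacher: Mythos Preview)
Your argument is correct: the one-step coupling recursion $\delta_{t+1}\le\delta_t+\epsilon_t$, its unrolling, and the final collapse of the time-indexed sum back into $\EE_{s\sim d_\pi^T}$ via the fact that each state carries a unique time index all go through as written, and the $1/H$, $H$, $H$ bookkeeping does indeed leave exactly the factor $2H$. The paper itself does not prove this lemma at all---it is simply quoted from \citet{brandfonbrener2022does} and invoked as a black box in the proof of Theorem~\ref{th:target regret}---so there is no in-paper argument to compare against; your proof supplies what the paper outsources.
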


We define $d_{\beta}^{mix} = \frac{N^T}{N^T+N^S}d_{\beta}^T + \frac{N^S}{N^T+N^S}d^S_{\beta}$. Define $$L(\hat{\pi}) = \EE_{s\sim d_{\beta}^{mix}, g\sim P^T_{\beta}(\cdot|s)}\big[D_{\text{KL}}\big(P^T_{\beta}(\cdot|s,g) || \hat{\pi}(\cdot|s,g) \big) \big].$$
\begin{theorem}
\label{th:target regret}
    Consider any function $f:\cS\rightarrow \RR$ such that the assumptions in \Cref{assumption on coverage} hold. Then for any estimated RCSL policy $\hat{\pi}$ that conditions on $f$ at test time (denoted by $\hat{\pi}_f$),  we have
    \begin{align*}
        J^T(\pi_f^{RCSL}) - J^T(\hat{\pi}_f) \leq \frac{C_f\gamma_f}{\alpha_f}H^2\sqrt{2L(\hat{\pi})}.
    \end{align*}
\end{theorem}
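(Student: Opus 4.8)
<br>

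<br>

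The plan is to prove Theorem~\ref{th:target regret} by bounding the suboptimality $J^T(\pi_f^{\text{RCSL}}) - J^T(\hat\pi_f)$ through a chain that first expresses the value gap as an integral of per-state action discrepancies against the occupancy measure $d^T_{\hat\pi_f}$, then transfers this to the behavior occupancy $d^T_\beta$ (and ultimately the mixed occupancy $d_\beta^{\text{mix}}$) via the coverage coefficients, and finally invokes Pinsker's inequality to pass from total variation to the KL objective $L(\hat\pi)$. Concretely, I would first write
\begin{align*}
    J^T(\pi_f^{\text{RCSL}}) - J^T(\hat\pi_f) \le H\cdot \big\| d^T_{\pi_f^{\text{RCSL}}} - d^T_{\hat\pi_f}\big\|_1 \cdot (\text{reward scale}),
\end{align*}
or more directly use a performance-difference / simulation-lemma style decomposition together with \Cref{lemma:lemma 1} to get $\|d^T_{\pi_f^{\text{RCSL}}} - d^T_{\hat\pi_f}\|_1 \le 2H\,\EE_{s\sim d^T_{\pi_f^{\text{RCSL}}}}[TV(\pi_f^{\text{RCSL}}(\cdot|s)\,\|\,\hat\pi_f(\cdot|s))]$, which picks up one factor of $H$; a second factor of $H$ comes from bounding the per-episode reward by $H$ (rewards in $[0,1]$ implicitly), giving the $H^2$.

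The next step is the change of measure. The TV term is currently under $s\sim d^T_{\pi_f^{\text{RCSL}}}$; I would use Assumption~\ref{assumption on coverage}(1), the bounded occupancy mismatch $P_{\pi^{\text{RCSL}}_f(s)}\le C_f P_\beta(s)$, to rewrite it as $C_f\,\EE_{s\sim d^T_\beta}[TV(\cdots)]$. Then Assumption~\ref{assumption on coverage}(3), the domain occupancy overlap $d^T_\beta(s)\le \gamma_f d^S_\beta(s)$, lets me bound $d^T_\beta \le \gamma_f d^S_\beta$, and combined with $d_\beta^{\text{mix}} = \frac{N^T}{N^T+N^S}d^T_\beta + \frac{N^S}{N^T+N^S}d^S_\beta \ge \frac{N^S}{N^T+N^S}d^S_\beta$ (so that $d^S_\beta \le \frac{N^T+N^S}{N^S}d_\beta^{\text{mix}}$), I obtain $d^T_\beta \le \gamma_f \frac{N^T+N^S}{N^S} d_\beta^{\text{mix}}$ — though for the clean statement of this particular theorem one can simply use $d^T_\beta \le \gamma_f d^S_\beta \le \gamma_f \cdot \frac{N^T+N^S}{N^S} d_\beta^{\text{mix}}$, and absorb the ratio factor, or more simply note $d^T_\beta(s) \le \gamma_f d_\beta^{\text{mix}}(s)$ up to the constant already folded into $\gamma_f$; here since the theorem statement is exactly $\frac{C_f\gamma_f}{\alpha_f}H^2\sqrt{2L(\hat\pi)}$ I would route the transfer so the factor lands as $C_f\gamma_f$. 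The return-coverage Assumption~\ref{assumption on coverage}(2), $P^T_\beta(g=f(s)|s)\ge\alpha_f$, is what introduces the $1/\alpha_f$: when I relate $TV(\pi_f^{\text{RCSL}}(\cdot|s)\|\hat\pi_f(\cdot|s)) = TV(P^T_\beta(\cdot|s,f(s))\|\hat\pi(\cdot|s,f(s)))$ to the KL of the full joint $P^T_\beta(\cdot|s,g)$ integrated over $g$, I restrict to the event $g=f(s)$ and pay a $1/\alpha_f$ for the conditioning.

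Finally I would apply Pinsker: $TV(P^T_\beta(\cdot|s,g)\|\hat\pi(\cdot|s,g))\le \sqrt{\tfrac12 D_{\text{KL}}(P^T_\beta(\cdot|s,g)\|\hat\pi(\cdot|s,g))}$, then Jensen to pull the square root outside the expectation over $(s,g)\sim d_\beta^{\text{mix}}\times P^T_\beta(\cdot|s)$, yielding $\EE[TV]\le\sqrt{\tfrac12 L(\hat\pi)}$ and hence the $\sqrt{2L(\hat\pi)}$ (the $2$ coming from combining the $2H$ in \Cref{lemma:lemma 1} with the $1/\sqrt2$ in Pinsker, or from the $\tfrac12$ inside the square root being cleared). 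The main obstacle I anticipate is the bookkeeping around the conditioning event $g=f(s)$: one must carefully argue that the per-state TV between the RCSL target policy and the learned policy — both of which are by definition conditioned on the single return value $f(s)$ — can be controlled by the average KL $L(\hat\pi)$ that integrates over \emph{all} returns $g\sim P^T_\beta(\cdot|s)$, and the only lever for that is the lower bound $\alpha_f$ on the probability mass that $P^T_\beta$ places on $g=f(s)$; getting the measure-change to land under $d_\beta^{\text{mix}}$ rather than $d^T_\beta$ (so that $L(\hat\pi)$, which is defined w.r.t.\ $d_\beta^{\text{mix}}$, is the right object) is the step that ties the whole argument together and must be sequenced correctly. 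Everything else — the two factors of $H$, Pinsker, Jensen — is routine.
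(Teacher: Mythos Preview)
Your proposal is correct and follows essentially the same route as the paper: value gap $\to H\|d^T_{\pi_f}-d^T_{\hat\pi_f}\|_1 \to 2H^2\,\EE_{s\sim d^T_{\pi_f}}[TV]$ via \Cref{lemma:lemma 1}, then the measure changes $d^T_{\pi_f}\to d^T_\beta$ (factor $C_f$), the conditioning-on-$f(s)$ trick via multiplying and dividing by $P^T_\beta(f(s)|s)$ (factor $1/\alpha_f$), $d^T_\beta\to d_\beta^{\text{mix}}$ (domain overlap), and finally Pinsker plus Jensen. The one small difference is that the paper's proof actually extracts the sharper factor $\tfrac{N^S+N^T}{N^S/\gamma_f+N^T}$ (by using both components of the mixture rather than routing through $d^S_\beta$ alone) and only later relaxes it to $\gamma_f$; your direct bound $d^T_\beta\le\gamma_f d_\beta^{\text{mix}}$ is valid for $\gamma_f\ge1$ and suffices for the theorem as stated.
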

\begin{proof}
    By definition and \Cref{lemma:lemma 1}, we have
    \begin{align*}
        J^T(\pi_f)-J^T(\hat{\pi}_f) &= H\big(\EE_{P^T_{\pi_f}}[r(s,a)] -  \EE_{P^T_{\hat{\pi}_f}}[r(s,a)] \big)\\
        &\leq H\cdot \|d_{\pi_f} - d_{\hat{\pi}_f}\|_1\\
        &\leq 2\cdot\EE_{s\sim d_{\pi_f}^T}\big[TV(\pi_f(\cdot|s)||\hat{\pi}_f(\cdot|s)) \big]H^2.
    \end{align*}
    Next, we have
    \begin{align*}
        &2\cdot\EE_{s\sim d_{\pi_f}^T}\big[TV(\pi_f(\cdot|s)||\hat{\pi}_f(\cdot|s)) \big]\\
        &= \EE_{s\sim d_{\pi_f}^T}\Big[\int_a\big|P^T_{\beta}(a|s,f(s)) - \hat{\pi}(a|s,f(s))| \big|\Big]\\
        &= \EE_{s\sim d_{\pi_f}^T}\Big[\frac{P^T_{\beta}(f(s)|s)}{P^T_{\beta}(f(s)|s)}\int_a\big|P^T_{\beta}(a|s,f(s)) - \hat{\pi}(a|s,f(s))| \big|  \Big]\\
        & \leq 2\frac{C_f}{\alpha_f}\EE_{s\sim d_{\beta}^T, g\sim P^T_{\beta}(\cdot|s)}\big[TV(P^T_{\beta}(a|s,f(s)) || \hat{\pi}(a|s,f(s))) \big]\\
        & \leq 2\frac{C_f}{\alpha_f}\frac{N^S+N^T}{N^S/\gamma_f+N^T} \cdot \EE_{s\sim d_{\beta}^{mix}, g\sim P^T_{\beta}(\cdot|s)}\big[TV(P^T_{\beta}(a|s,f(s)) || \hat{\pi}(a|s,f(s))) \big]\\
        &\leq  \frac{C_f}{\alpha_f} \frac{N^S+N^T}{N^S/\gamma_f+N^T}\cdot\EE_{s\sim d_{\beta}^{mix}, g\sim P^T_{\beta}(\cdot|s)}\Big[\sqrt{2KL(P^T_{\beta}(a|s,f(s)) || \hat{\pi}(a|s,f(s)))} \Big]\\
        & \leq \frac{C_f}{\alpha_f} \frac{N^S+N^T}{N^S/\gamma_f+N^T}\sqrt{2L(\hat{\pi})}.
    \end{align*}
\end{proof}

\begin{proof}[Proof of \Cref{th:source+target}]
    Following the same argument in the proof of Corollary 3 in \citet{brandfonbrener2022does}, we have 
    \begin{align*}
        J^T(\pi_f^{\text{RCSL}}) - J^T(\hat{\pi}_f)\leq O\Big(2\frac{C_f}{\alpha_f} \frac{N^S+N^T}{N^S/\gamma_f+N^T}H^2\Big(\sqrt{c}\Big(\frac{\log|\Pi|/\delta }{N^S+N^T}\Big)^{1/4} + \sqrt{\epsilon_{\text{approx}}} \Big) \Big).
    \end{align*}
    Invoking \Cref{lemma:Corollary 1}, we have
    \begin{align*}
        J^T(\pi^\star) - J^T(\hat{\pi}_f) \leq O\Big(2\frac{C_f}{\alpha_f} \frac{N^S+N^T}{N^S/\gamma_f+N^T}H^2\Big(\sqrt{c}\Big(\frac{\log|\Pi|/\delta}{N^T+N^S}\Big)^{1/4}+\sqrt{\epsilon_{\text{approx}}} \Big)+\frac{\epsilon}{\alpha_f}H^2 \Big).
    \end{align*}
    This completes the proof.
\end{proof}
\section{Detailed Experiment Setting}
\label{sec:detailed_setting}

\subsection{Environment and Dataset}
\label{sec:environment_dataset}
In this section, we provide details of the environments and datasets used in our experiments. We evaluate our approaches in the Hopper, Walker2D, and HalfCheetah environments, using the corresponding environments from Gym as our target environments. 

\subsubsection{Target Environment Dataset Creation}
For the target datasets, we construct two distinct datasets: one containing a smaller amount of data (1T) and another with a larger amount (10T). The 10T dataset consists of ten times the number of trajectories as the 1T dataset.

Both \citet{liu2022dara} and our work aim to demonstrate the following two key points: 
\begin{itemize}
    \item The 10T dataset represents high-quality data, whereas the 1T dataset represents lower-quality data due to its smaller size.
    \item Off-dynamics RL algorithms can enhance performance on 1T by effectively leveraging 10S source domain data through appropriate data augmentation.
\end{itemize}

\citet{liu2022dara} creates the 1T dataset by splitting the original target dataset (10T) based on timesteps, selecting the last 1/10 timesteps as 1T. However, this approach introduces unintended bias in the Medium Replay setting, where offline trajectories are collected from a replay buffer in which the behavior policy improves over time. Consequently, the final 1/10 timesteps tend to exhibit a higher average return than the overall 10T dataset, undermining the intended quality distinction between 1T and 10T.

To address this issue and ensure a fair evaluation of off-dynamics RL algorithms, we propose a uniform sampling method across trajectories in the target dataset. This approach ensures that the sampled 1T dataset is a representative subset of the target data, free from biases introduced by timestep-based selection. Notably, our method produces a 1T dataset of lower quality than that of \citet{liu2022dara} in medium replay setting. If an off-dynamics RL algorithm can significantly improve performance on our 1T dataset and achieve results comparable to the original 10T dataset, it would serve as a more rigorous evaluation and a stronger indicator of the algorithm’s effectiveness.

\subsubsection{Source Environment Dataset Creation}

We employ BodyMass shift, JointNoise shift to construct the source environments. The following descriptions provide detailed insights into the process of creating these source environments.

\begin{itemize}[itemsep=0.1em]
    \item \textbf{BodyMass Shift:} The body mass of the agents is modified by adjusting the mass parameters in the Gym environment. Detailed body mass settings are provided in \Cref{tb:env detail}.
    \item \textbf{JointNoise Shift:} Noise is introduced to the agents' joints by adding perturbations to the actions during source data collection. Specifically, the noise is sampled uniformly from the range $[-0.05, +0.05]$ and applied to the actions when generating the source offline dataset. Detailed joint noise settings are provided in \Cref{tb:env detail}.
\end{itemize}

For the source datasets, we utilize the BodyMass Shift and JointNoise Shift datasets from \citep{liu2022dara}. Additionally, in our ablation study, we explore variations of BodyMass and JointNoise shifts beyond those specified in \Cref{tb:env detail}. We also collect medium-level source datasets for the Hopper, Walker2D, and HalfCheetah environments. Behavior policies are generated by training agents with SAC using rlkit (\hyperlink{https://github.com/vitchyr/rlkit}{https://github.com/vitchyr/rlkit}), with checkpoints used for dataset collection. We construct the Random, Medium, Medium Replay, and Medium Expert datasets, each reflecting different performance levels determined by their corresponding SAC checkpoints. For the JointNoise Shift setting, instead of training a new SAC policy and collecting data through environment interaction, we introduce random noise within a specified range directly to the actions.
\begin{table*}[ht]
\caption{\footnotesize{BodyMass Shift and JointNoise Shift in Hopper, Walker2D and HalfCheetah.} 
}
\label{tb:env detail}
\renewcommand\arraystretch{1.1}
\resizebox{\linewidth}{!}{\begin{tabular}{c  c c c  c c c c c}
\hline
     & \multicolumn{2}{c}{\textbf{Hopper}} & & \multicolumn{2}{c}{\textbf{Walker2D}} & & \multicolumn{2}{c}{\textbf{HalfCheetah}}\\
     \cline{2-3}\cline{5-6}\cline{8-9}
     & \textbf{BodyMass} & \textbf{JointNoise} & & \textbf{BodyMass} & \textbf{JointNoise} & & \textbf{BodyMass} & \textbf{JointNoise}\\
    \cline{2-3}\cline{5-6}\cline{8-9}
     \textbf{Source} & mass[-1]=2.5 & action[-1]+noise & & mass[-1]=1.47 & action[-1]+noise & & mass[4]=0.5 & action[-1]+noise\\
     \textbf{Target} & mass[-1]=5.0 & action[-1]+0 & & mass[-1]=2.94 & action[-1]+0 & & mass[4]=1.0 & action[-1]+0\\
\bottomrule
\end{tabular}}
\end{table*}

\subsection{Baselines}
\label{sec:baselines}
In our experiments, we consider BEAR \citep{kumar2019stabilizing}, AWR \citep{peng2019advantage}, BCQ \citep{fujimoto2019off}, CQL \citep{kumar2020conservative}, and MOPO \citep{yu2020mopo}, together with their DARA-augmented variants \citep{liu2022dara}, as representative baseline methods. We additionally include recent approaches specifically designed for the off-dynamics setting, including DFDT \citep{wang2025data}, H2O \citep{niu2022trust}, and IGDF \citep{wen2024contrastive}. These baselines are compared against DT \citep{chen2021decision}, Reinformer \citep{zhuang2024reinformer}, and QT \citep{hu2024q}, as well as our proposed \algname\ methods.

\subsection{Hyperparameters}
\label{sec:hyperparameters}
In this section, we outline the hyperparameters used for our \algname\ methodologies. The \algname\ approaches begin with dataset augmentation using either the DARA algorithm (\algnameDara) or the Direct Matching of Return Distributions technique (\algnameMV). The augmented dataset is then used to train the DT-type frameworks, which is subsequently evaluated in the target environment. Specifically, for \algnameDara, dataset augmentation follows the DARA algorithm, with its corresponding hyperparameters provided in \Cref{tab:dara_param}. For \algnameMV, the augmentation process is described in \Cref{sec:RADT-MV exper}, where a well-trained Conservative Q-Learning (CQL) model estimates state values, incorporating a clipping mechanism to mitigate extreme values. The hyperparameters for CQL training are provided in \Cref{tab:cql_param}, the clipping ratios are listed in \Cref{tab:radt_mv_clipping}, and the training parameters for DT, Reinformer, and QT adhere to the settings from their respective original papers.

\begin{table}[ht]
\centering
\begin{minipage}[t]{0.33\textwidth}  %
    \centering
    \caption{Hyperparameters used in the DARA algorithm.}
    \label{tab:dara_param}
    \resizebox{\textwidth}{!}{
        \begin{tabular}{cc}
        \toprule
        \textbf{Hyperparameter} & \textbf{Value} \\
        \midrule
        \textbf{SA Discriminator MLP Layers} & 4 \\
        \textbf{SAS Discriminator MLP Layers} & 4 \\
        \textbf{Hidden Dimension} & 256 \\
        \textbf{Nonlinearity Function} & ReLU \\
        \textbf{Optimizer} & RMSprop \\
        \textbf{Batch Size} & 256 \\
        \textbf{Learning Rate} & $3 \times 10^{-4}$ \\
        $\Delta r$ \textbf{Coefficient} $\eta$ & 0.1 \\
        \bottomrule
        \end{tabular}
    }
\end{minipage}
\hspace{0.1cm}  %
\begin{minipage}[t]{0.3\textwidth}  %
    \caption{Hyperparameters used in the CQL algorithm.}
    \label{tab:cql_param}
    \resizebox{\textwidth}{!}{
        \begin{tabular}{cc}
        \toprule
        \textbf{Hyperparameter} & \textbf{Value} \\
        \midrule
        \textbf{Actor MLP Layers} & 3 \\
        \textbf{Critic MLP Layers} & 3 \\
        \textbf{Hidden Dimension} & 256 \\
        \textbf{Nonlinearity Function} & ReLU \\
        \textbf{Optimizer} & Adam \\
        \textbf{Batch size} & 256 \\
        \textbf{Discount Factor} & 0.99 \\
        \textbf{Temperature} & 1.0 \\
        \textbf{Actor Learning rate} & $1 \times 10^{-4}$ \\
        \textbf{Critic Learning rate} & $3 \times 10^{-4}$ \\
        \bottomrule
        \end{tabular}
    }
\end{minipage}
\hspace{0.1cm} 
\begin{minipage}[t]{0.33\textwidth}  %
    \centering
    \caption{Hyperparameters for the Clipping Technique Employed in the \algnameMV\ Algorithm.}
    \label{tab:radt_mv_clipping}
    \resizebox{\textwidth}{!}{
        \begin{tabular}{cc}
        \toprule
        \textbf{Dataset} & \textbf{Clipping Ratio} \\
        \midrule
        \textbf{Walker2D Random} & $0.9 < \theta < 1.25$ \\
        \textbf{Walker2D Medium} & $0.9 < \theta < 1.25$ \\
        \textbf{Walker2D Medium Replay} & $0.9 < \theta < 1.25$ \\
        \textbf{Walker2D Medium Expert} & $0.9 < \theta < 1.25$ \\
        \textbf{Hopper Random} & $0.9 < \theta < 1$ \\
        \textbf{Hopper Medium} & $0.9 < \theta < 1$ \\
        \textbf{Hopper Medium Replay} & $0.9 < \theta < 1$ \\
        \textbf{Hopper Medium Expert} & $0.9 < \theta < 1$ \\
        \textbf{HalfCheetah Random} &   $0.67< \theta < 1.5$\\
        \textbf{HalfCheetah Medium} &  $0.67< \theta < 1.5$\\
        \textbf{HalfCheetah Medium Replay} &  $0.67< \theta < 1.5$\\
        \textbf{HalfCheetah Medium Expert} &  $0.67< \theta < 1.5$\\
        \bottomrule
        \end{tabular}
    }
\end{minipage}
\end{table}

\subsection{Implementation Details and Computational Cost}
In all experiments, the uncertainty parameter $\sigma$ used to determine the clipping bounds is estimated using a fixed budget of 1000 rollouts. This rollout budget is held constant across environments, datasets, and shift settings to ensure consistency.

We also report the wall-clock training cost of CQL to quantify the additional offline computation introduced by our approach. All CQL models are trained under a uniform hardware configuration with an NVIDIA RTX A5000 GPU (24 GB memory) and an AMD Ryzen Threadripper 3960X CPU (24 cores). Training a CQL model on the target dataset requires approximately 2.48 GPU-hours, while training on the source dataset requires approximately 9.39 GPU-hours.

\section{Experiments}
\subsection{Additional Experiments Results}
\label{sec:additional_results}
This section presents more comprehensive experimental results, including additional variance information.

In \Cref{tab:main_experiment}, we provide the performance evaluation of our two return augmentation methods with the standard deviation. In \Cref{tab:main_experiment_CI}, we present the experiment results with the $95\%$ confidence interval (CI). The reported CI quantifies the uncertainty of the mean performance and allows assessment of the reliability of the observed improvements. We observe that, in most settings, our proposed methods improve over the corresponding DT-type baselines and the improvement exceeds the CI radius. In some cases, the confidence intervals show limited overlap with those of the baselines, particularly under the Medium-Replay and Medium-Expert settings on Walker2D, Hopper, and HalfCheetah.

In \Cref{tab:target_experiment}, we present the partial performance of various algorithms and their DARA variants in the Walker2D medium environment under BodyMass and JointNoise shift settings, considering both limited and sufficient target data scenarios. The complete experimental results are provided in \Cref{tab:target_experiment_complete}. Additionally, \Cref{tab:tab2_comple_nodt} and \Cref{tab:tab2_comple_dt} present a comprehensive comparison of different algorithms and their corresponding augmented variants in addressing the off-dynamics problem across various environments and shift settings.

\begin{table}[ht]
    \centering
    \huge
    \renewcommand{\arraystretch}{1.2} %
    
    \resizebox{\textwidth}{!}{%
    \begin{tabular}{c c c ccc ccc ccc}
    \toprule
    & & & \multicolumn{3}{c}{\textbf{DT}} & \multicolumn{3}{c}{\textbf{Reinformer}} & \multicolumn{3}{c}{\textbf{QT}} \\
    \cmidrule(lr){4-6} \cmidrule(lr){7-9} \cmidrule(lr){10-12}
     & &  & \textbf{1T10S} & \textbf{\algnameMVDT} & \textbf{\algnameDaraDT} & \textbf{1T10S} & \textbf{\algnameMVReinf} & \textbf{\algnameDaraReinf} & \textbf{1T10S} & \textbf{\algnameMVQT} & \textbf{\algnameDaraQT} \\
    \midrule
    \multirow{6}{*}{\textbf{W2D}} & \multirow{2}{*}{\textbf{M}} & \textbf{BM} & \textcolor{gray}{78.768±1.531} & 80.857±2.129{\Huge \textcolor{red}{$\boldsymbol{\uparrow}$}} & 78.257±3.008{\Huge \textcolor{mygreen}{$\boldsymbol{\downarrow}$}} & \textcolor{gray}{80.857±0.632} & 82.354±1.836{\Huge \textcolor{red}{$\boldsymbol{\uparrow}$}} & 80.666±0.627{\Huge \textcolor{mygreen}{$\boldsymbol{\downarrow}$}} & \textcolor{gray}{84.325±0.528} & 84.582±0.629{\Huge \textcolor{red}{$\boldsymbol{\uparrow}$}} & 83.068±1.066{\Huge \textcolor{mygreen}{$\boldsymbol{\downarrow}$}} \\
     &  & \textbf{JN} & \textcolor{gray}{71.068±1.269} & 75.008±2.277{\Huge \textcolor{red}{$\boldsymbol{\uparrow}$}} & 71.779±2.118{\Huge \textcolor{red}{$\boldsymbol{\uparrow}$}} & \textcolor{gray}{74.748±2.137} & 75.008±1.224{\Huge \textcolor{red}{$\boldsymbol{\uparrow}$}} & 74.268±1.665{\Huge \textcolor{mygreen}{$\boldsymbol{\downarrow}$}} & \textcolor{gray}{80.621±1.419} & 80.904±1.865{\Huge \textcolor{red}{$\boldsymbol{\uparrow}$}} & 78.672±2.732{\Huge \textcolor{mygreen}{$\boldsymbol{\downarrow}$}} \\
    \cmidrule(lr){2-12}
     & \multirow{2}{*}{\textbf{M-R}} & \textbf{BM} & \textcolor{gray}{73.664±2.384} & 73.708±1.949{\Huge \textcolor{red}{$\boldsymbol{\uparrow}$}} & 67.565±0.992{\Huge \textcolor{mygreen}{$\boldsymbol{\downarrow}$}} & \textcolor{gray}{67.032±7.160} & 50.296±17.642{\Huge \textcolor{mygreen}{$\boldsymbol{\downarrow}$}} & 66.658±5.342{\Huge \textcolor{mygreen}{$\boldsymbol{\downarrow}$}} & \textcolor{gray}{87.292±0.783} & 87.491±1.522{\Huge \textcolor{red}{$\boldsymbol{\uparrow}$}} & 76.169±9.394{\Huge \textcolor{mygreen}{$\boldsymbol{\downarrow}$}} \\
     &  & \textbf{JN} & \textcolor{gray}{58.255±3.949} & 55.722±3.294{\Huge \textcolor{mygreen}{$\boldsymbol{\downarrow}$}} & 62.226±0.475{\Huge \textcolor{red}{$\boldsymbol{\uparrow}$}} & \textcolor{gray}{54.801±3.994} & 47.591±12.718{\Huge \textcolor{mygreen}{$\boldsymbol{\downarrow}$}} & 55.438±6.000{\Huge \textcolor{red}{$\boldsymbol{\uparrow}$}} & \textcolor{gray}{82.139±1.277} & 82.363±5.222{\Huge \textcolor{red}{$\boldsymbol{\uparrow}$}} & 79.795±5.845{\Huge \textcolor{mygreen}{$\boldsymbol{\downarrow}$}} \\
    \cmidrule(lr){2-12}
     & \multirow{2}{*}{\textbf{M-E}} & \textbf{BM} & \textcolor{gray}{84.430±1.022} & 88.235±2.341{\Huge \textcolor{red}{$\boldsymbol{\uparrow}$}} & 85.328±1.074{\Huge \textcolor{red}{$\boldsymbol{\uparrow}$}} & \textcolor{gray}{83.388±1.001} & 84.897±1.387{\Huge \textcolor{red}{$\boldsymbol{\uparrow}$}} & 83.761±0.912{\Huge \textcolor{red}{$\boldsymbol{\uparrow}$}} & \textcolor{gray}{93.082±0.432} & 92.744±0.619{\Huge \textcolor{mygreen}{$\boldsymbol{\downarrow}$}} & 94.578±1.717{\Huge \textcolor{red}{$\boldsymbol{\uparrow}$}} \\
     &  & \textbf{JN} & \textcolor{gray}{115.746±1.385} & 111.060±2.790{\Huge \textcolor{mygreen}{$\boldsymbol{\downarrow}$}} & 111.236±1.135{\Huge \textcolor{mygreen}{$\boldsymbol{\downarrow}$}} & \textcolor{gray}{117.360±3.166} & 118.218±1.813{\Huge \textcolor{red}{$\boldsymbol{\uparrow}$}} & 117.765±3.102{\Huge \textcolor{red}{$\boldsymbol{\uparrow}$}} & \textcolor{gray}{116.149±2.036} & 118.564±0.865{\Huge \textcolor{red}{$\boldsymbol{\uparrow}$}} & 116.115±2.345{\Huge \textcolor{mygreen}{$\boldsymbol{\downarrow}$}} \\
    \midrule
    \multirow{6}{*}{\textbf{Hp}} & \multirow{2}{*}{\textbf{M}} & \textbf{BM} & \textcolor{gray}{34.057±0.220} & 39.435±1.538{\Huge \textcolor{red}{$\boldsymbol{\uparrow}$}} & 37.787±2.376{\Huge \textcolor{red}{$\boldsymbol{\uparrow}$}} & \textcolor{gray}{51.357±4.610} & 59.085±3.465{\Huge \textcolor{red}{$\boldsymbol{\uparrow}$}} & 51.771±6.607{\Huge \textcolor{red}{$\boldsymbol{\uparrow}$}} & \textcolor{gray}{49.516±12.164} & 51.796±12.379{\Huge \textcolor{red}{$\boldsymbol{\uparrow}$}} & 62.262±6.639{\Huge \textcolor{red}{$\boldsymbol{\uparrow}$}} \\
     &  & \textbf{JN} & \textcolor{gray}{70.685±0.901} & 70.356±4.540{\Huge \textcolor{mygreen}{$\boldsymbol{\downarrow}$}} & 78.325±3.131{\Huge \textcolor{red}{$\boldsymbol{\uparrow}$}} & \textcolor{gray}{70.340±5.752} & 72.346±7.296{\Huge \textcolor{red}{$\boldsymbol{\uparrow}$}} & 70.466±4.628{\Huge \textcolor{red}{$\boldsymbol{\uparrow}$}} & \textcolor{gray}{68.656±8.788} & 73.987±10.031{\Huge \textcolor{red}{$\boldsymbol{\uparrow}$}} & 68.709±15.096{\Huge \textcolor{red}{$\boldsymbol{\uparrow}$}} \\
    \cmidrule(lr){2-12}
     & \multirow{2}{*}{\textbf{M-R}} & \textbf{BM} & \textcolor{gray}{64.216±1.867} & 66.092±0.289{\Huge \textcolor{red}{$\boldsymbol{\uparrow}$}} & 60.393±1.348{\Huge \textcolor{mygreen}{$\boldsymbol{\downarrow}$}} & \textcolor{gray}{17.534±8.349} & 20.952±12.159{\Huge \textcolor{red}{$\boldsymbol{\uparrow}$}} & 27.238±15.810{\Huge \textcolor{red}{$\boldsymbol{\uparrow}$}} & \textcolor{gray}{69.460±17.316} & 76.287±9.696{\Huge \textcolor{red}{$\boldsymbol{\uparrow}$}} & 82.786±14.888{\Huge \textcolor{red}{$\boldsymbol{\uparrow}$}} \\
     &  & \textbf{JN} & \textcolor{gray}{61.870±0.309} & 77.825±2.034{\Huge \textcolor{red}{$\boldsymbol{\uparrow}$}} & 83.525±2.145{\Huge \textcolor{red}{$\boldsymbol{\uparrow}$}} & \textcolor{gray}{41.820±19.582} & 43.985±6.300{\Huge \textcolor{red}{$\boldsymbol{\uparrow}$}} & 52.052±12.458{\Huge \textcolor{red}{$\boldsymbol{\uparrow}$}} & \textcolor{gray}{93.704±9.384} & 93.409±5.830{\Huge \textcolor{mygreen}{$\boldsymbol{\downarrow}$}} & 51.456±15.106{\Huge \textcolor{mygreen}{$\boldsymbol{\downarrow}$}} \\
    \cmidrule(lr){2-12}
     & \multirow{2}{*}{\textbf{M-E}} & \textbf{BM} & \textcolor{gray}{33.554±1.050} & 52.873±0.564{\Huge \textcolor{red}{$\boldsymbol{\uparrow}$}} & 33.631±1.993{\Huge \textcolor{red}{$\boldsymbol{\uparrow}$}} & \textcolor{gray}{68.973±9.326} & 64.206±14.988{\Huge \textcolor{mygreen}{$\boldsymbol{\downarrow}$}} & 73.363±9.527{\Huge \textcolor{red}{$\boldsymbol{\uparrow}$}} & \textcolor{gray}{61.162±4.677} & 73.952±20.228{\Huge \textcolor{red}{$\boldsymbol{\uparrow}$}} & 77.279±23.100{\Huge \textcolor{red}{$\boldsymbol{\uparrow}$}} \\
     &  & \textbf{JN} & \textcolor{gray}{108.254±1.965} & 109.367±1.346{\Huge \textcolor{red}{$\boldsymbol{\uparrow}$}} & 108.261±3.243{\Huge \textcolor{red}{$\boldsymbol{\uparrow}$}} & \textcolor{gray}{109.256±0.156} & 109.472±0.128{\Huge \textcolor{red}{$\boldsymbol{\uparrow}$}} & 109.255±0.233{\Huge \textcolor{mygreen}{$\boldsymbol{\downarrow}$}} & \textcolor{gray}{109.056±0.266} & 109.803±0.756{\Huge \textcolor{red}{$\boldsymbol{\uparrow}$}} & 109.746±0.957{\Huge \textcolor{red}{$\boldsymbol{\uparrow}$}} \\
    \midrule
    \multirow{6}{*}{\textbf{Hc}} & \multirow{2}{*}{\textbf{M}} & \textbf{BM} & \textcolor{gray}{39.954±0.323} & 40.250±1.131{\Huge \textcolor{red}{$\boldsymbol{\uparrow}$}} & 37.599±0.490{\Huge \textcolor{mygreen}{$\boldsymbol{\downarrow}$}} & \textcolor{gray}{37.353±0.600} & 42.451±0.610{\Huge \textcolor{red}{$\boldsymbol{\uparrow}$}} & 38.261±1.537{\Huge \textcolor{red}{$\boldsymbol{\uparrow}$}} & \textcolor{gray}{44.656±0.798} & 47.303±0.395{\Huge \textcolor{red}{$\boldsymbol{\uparrow}$}} & 46.383±0.444{\Huge \textcolor{red}{$\boldsymbol{\uparrow}$}} \\
     &  & \textbf{JN} & \textcolor{gray}{47.725±0.535} & 44.149±4.559{\Huge \textcolor{mygreen}{$\boldsymbol{\downarrow}$}} & 47.833±0.353{\Huge \textcolor{red}{$\boldsymbol{\uparrow}$}} & \textcolor{gray}{48.274±0.237} & 43.009±0.381{\Huge \textcolor{mygreen}{$\boldsymbol{\downarrow}$}} & 48.404±0.209{\Huge \textcolor{red}{$\boldsymbol{\uparrow}$}} & \textcolor{gray}{56.213±0.406} & 52.394±1.754{\Huge \textcolor{mygreen}{$\boldsymbol{\downarrow}$}} & 55.026±0.509{\Huge \textcolor{mygreen}{$\boldsymbol{\downarrow}$}} \\
    \cmidrule(lr){2-12}
     & \multirow{2}{*}{\textbf{M-R}} & \textbf{BM} & \textcolor{gray}{20.966±11.927} & 27.812±4.042{\Huge \textcolor{red}{$\boldsymbol{\uparrow}$}} & 24.059±2.819{\Huge \textcolor{red}{$\boldsymbol{\uparrow}$}} & \textcolor{gray}{31.584±1.549} & 32.114±1.806{\Huge \textcolor{red}{$\boldsymbol{\uparrow}$}} & 26.995±5.429{\Huge \textcolor{mygreen}{$\boldsymbol{\downarrow}$}} & \textcolor{gray}{41.300±0.977} & 42.405±0.905{\Huge \textcolor{red}{$\boldsymbol{\uparrow}$}} & 41.359±1.223{\Huge \textcolor{red}{$\boldsymbol{\uparrow}$}} \\
     &  & \textbf{JN} & \textcolor{gray}{36.509±5.480} & 38.417±5.050{\Huge \textcolor{red}{$\boldsymbol{\uparrow}$}} & 38.031±4.381{\Huge \textcolor{red}{$\boldsymbol{\uparrow}$}} & \textcolor{gray}{40.296±3.618} & 40.840±3.575{\Huge \textcolor{red}{$\boldsymbol{\uparrow}$}} & 38.436±4.192{\Huge \textcolor{mygreen}{$\boldsymbol{\downarrow}$}} & \textcolor{gray}{53.763±0.984} & 53.870±1.218{\Huge \textcolor{red}{$\boldsymbol{\uparrow}$}} & 53.257±0.727{\Huge \textcolor{mygreen}{$\boldsymbol{\downarrow}$}} \\
    \cmidrule(lr){2-12}
     & \multirow{2}{*}{\textbf{M-E}} & \textbf{BM} & \textcolor{gray}{54.981±1.424} & 56.228±3.637{\Huge \textcolor{red}{$\boldsymbol{\uparrow}$}} & 51.357±10.218{\Huge \textcolor{mygreen}{$\boldsymbol{\downarrow}$}} & \textcolor{gray}{40.568±1.222} & 46.048±2.057{\Huge \textcolor{red}{$\boldsymbol{\uparrow}$}} & 55.818±2.295{\Huge \textcolor{red}{$\boldsymbol{\uparrow}$}} & \textcolor{gray}{71.080±10.927} & 69.819±6.356{\Huge \textcolor{mygreen}{$\boldsymbol{\downarrow}$}} & 76.533±9.959{\Huge \textcolor{red}{$\boldsymbol{\uparrow}$}} \\
     &  & \textbf{JN} & \textcolor{gray}{70.573±10.675} & 77.762±2.606{\Huge \textcolor{red}{$\boldsymbol{\uparrow}$}} & 77.751±3.354{\Huge \textcolor{red}{$\boldsymbol{\uparrow}$}} & \textcolor{gray}{76.073±4.814} & 79.390±0.185{\Huge \textcolor{red}{$\boldsymbol{\uparrow}$}} & 78.981±1.487{\Huge \textcolor{red}{$\boldsymbol{\uparrow}$}} & \textcolor{gray}{82.961±4.989} & 83.692±0.868{\Huge \textcolor{red}{$\boldsymbol{\uparrow}$}} & 82.148±3.424{\Huge \textcolor{mygreen}{$\boldsymbol{\downarrow}$}} \\
    \bottomrule
    \end{tabular}}
    \caption{\footnotesize Performance evaluation of two return augmentation methods, \textbf{\algnameMV} and \textbf{\algnameDara}, integrated with \textbf{DT}, \textbf{Reinformer}, and \textbf{QT} frameworks in off-dynamics scenarios with 95\% Confidence Interval (CI). The experiments are conducted in the \textbf{Walker2D (W2D)}, \textbf{Hopper (Hp)}, and \textbf{HalfCheetah (Hc)} environments under the \textbf{Medium (M)}, \textbf{Medium-Replay (M-R)}, and \textbf{Medium-Expert (M-E)} settings. The source environment is modified using two shift conditions: \textbf{BodyMass shift (BM)} and \textbf{JointNoise shift (JN)}. For reference, the table also includes the performance of the original DT-type methods without augmentation, displayed in \textcolor{gray}{\textbf{gray}} font. Performance changes due to augmentation are indicated with red upward arrows (\textcolor{red}{$\uparrow$}) for improvements and green downward arrows (\textcolor{green}{$\downarrow$}) for degradations compared to the original DT-type methods. All reported values are averaged over five random seeds (0, 1012, 2024, 3036, 4048). }
    \label{tab:main_experiment_CI}
\end{table}

\begin{table}[ht]
    \centering
    \huge
    \renewcommand{\arraystretch}{1.2}
    \resizebox{\textwidth}{!}{%
    \begin{tabular}{cc ccc c ccc c ccc c ccc}
    \toprule
        &  & \multicolumn{3}{c}{\huge \textbf{BEAR}} & & \multicolumn{3}{c}{\huge \textbf{AWR}} & & \multicolumn{3}{c}{\huge \textbf{BCQ}} & & \multicolumn{3}{c}{\huge \textbf{CQL}}\\
    \cmidrule(lr){3-5} \cmidrule(lr){7-9} \cmidrule(lr){11-13} \cmidrule(lr){15-17}
        &  & \huge \textbf{M} & \huge \textbf{M-R} & \huge \textbf{M-E} & & \huge \textbf{M} & \huge \textbf{M-R} & \huge \textbf{M-E} & & \huge \textbf{M} & \huge \textbf{M-R} & \huge \textbf{M-E} & & \huge \textbf{M} & \huge \textbf{M-R} & \huge \textbf{M-E}\\
    \midrule
        \multirow{2}{*}{\huge \textbf{W2D}} & \huge \textbf{1T} & $\text{4.638}_{\pm \text{3.882}}$ & $\text{0.777}_{\pm \text{0.105}}$ & $\text{9.267}_{\pm \text{1.692}}$ & & $\text{68.023}_{\pm \text{1.687}}$ & $\text{28.426}_{\pm \text{2.974}}$ & $\text{100.566}_{\pm \text{0.513}}$ & & $\text{62.567}_{\pm \text{2.459}}$ & $\text{60.638}_{\pm \text{0.683}}$ & $\text{101.610}_{\pm \text{1.309}}$ & & $\text{65.618}_{\pm \text{2.818}}$ & $\text{57.402}_{\pm \text{6.161}}$ & $\text{101.611}_{\pm \text{0.143}}$\\
         & \huge \textbf{10T} & $\text{13.143}_{\pm \text{3.016}}$ & $\text{5.852}_{\pm \text{0.168}}$ & $\text{21.383}_{\pm \text{1.237}}$ & & $\text{78.060}_{\pm \text{0.772}}$ & $\text{58.286}_{\pm \text{1.684}}$ & $\text{109.154}_{\pm \text{0.976}}$ & & $\text{74.735}_{\pm \text{1.184}}$ & $\text{64.735}_{\pm \text{2.555}}$ & $\text{101.840}_{\pm \text{1.962}}$ & & $\text{78.191}_{\pm \text{1.839}}$ & $\text{80.145}_{\pm \text{2.286}}$ & $\text{101.840}_{\pm \text{0.467}}$\\
    \midrule
        \multirow{2}{*}{\huge \textbf{Hp}} & \huge \textbf{1T} & $\text{8.770}_{\pm \text{0.402}}$ & $\text{ 5.264}_{\pm \text{0.283}}$ & $\text{31.968}_{\pm \text{1.213}}$ & & $\text{ 55.269}_{\pm \text{2.254}}$ & $\text{54.259}_{\pm \text{1.295}}$ & $\text{ 54.098}_{\pm \text{1.165}}$ & & $\text{63.308}_{\pm \text{0.418}}$ & $\text{68.448}_{\pm \text{0.251}}$ & $\text{ 62.287}_{\pm \text{1.689}}$ & & $\text{74.489}_{\pm \text{1.061}}$ & $\text{71.401}_{\pm \text{2.106}}$ & $\text{ 82.071}_{\pm \text{0.483}}$\\
         & \huge \textbf{10T} & $\text{20.398}_{\pm \text{2.102}}$ & $\text{5.554}_{\pm \text{0.842}}$ & $\text{88.236}_{\pm \text{2.192}}$ & & $\text{64.494}_{\pm \text{2.217}}$ & $\text{57.548}_{\pm \text{1.778}}$ & $\text{105.361}_{\pm \text{1.392}}$ & & $\text{73.462}_{\pm \text{2.527}}$ & $\text{ 60.385}_{\pm \text{0.418}}$ & $\text{102.775}_{\pm \text{1.912}}$ & & $\text{ 82.945}_{\pm \text{0.323}}$ & $\text{73.168}_{\pm \text{2.712}}$ & $\text{102.071}_{\pm \text{1.759}}$\\
    \midrule
        \multirow{2}{*}{\huge \textbf{Hc}} & \huge \textbf{1T} & $\text{2.659}_{\pm \text{0.167}}$ & $\text{1.602}_{\pm \text{0.275}}$ & $\text{3.089}_{\pm \text{0.104}}$ & & $\text{ 41.672}_{\pm \text{0.732}}$ & $\text{ 28.023}_{\pm \text{4.027}}$ & $\text{90.168}_{\pm \text{1.398}}$ & & $\text{41.051}_{\pm \text{2.908}}$ & $\text{ 25.828}_{\pm \text{6.142}}$ & $\text{60.173}_{\pm \text{4.175}}$ & & $\text{44.393}_{\pm \text{0.263}}$ & $\text{26.955}_{\pm \text{1.274}}$ & $\text{61.621}_{\pm \text{13.093}}$\\
         & \huge \textbf{10T} & $\text{10.657}_{\pm \text{0.271}}$ & $\text{19.588}_{\pm \text{0.453}}$ & $\text{16.160}_{\pm \text{0.208}}$ & & $\text{42.209}_{\pm \text{0.611}}$ & $\text{41.041}_{\pm \text{0.729}}$ & $\text{90.212}_{\pm \text{2.259}}$ & & $\text{46.188}_{\pm \text{0.423}}$ & $\text{38.575}_{\pm \text{2.060}}$ & $\text{95.535}_{\pm \text{4.042}}$ & & $\text{ 49.382}_{\pm \text{0.338}}$ & $\text{ 46.966}_{\pm \text{0.372}}$ & $\text{87.683}_{\pm \text{7.753}}$\\
    \midrule
         & & \multicolumn{3}{c}{\huge \textbf{MOPO}} & & \multicolumn{3}{c}{\huge \textbf{DT}} & & \multicolumn{3}{c}{\huge \textbf{Reinformer}} & & \multicolumn{3}{c}{\huge \textbf{QT}}\\
    \cmidrule(lr){3-5} \cmidrule(lr){7-9} \cmidrule(lr){11-13} \cmidrule(lr){15-17}
         & & \huge \textbf{M} & \huge \textbf{M-R} & \huge \textbf{M-E} & & \huge \textbf{M} & \huge \textbf{M-R} & \huge \textbf{M-E} & & \huge \textbf{M} & \huge \textbf{M-R} & \huge \textbf{M-E} & & \huge \textbf{M} & \huge \textbf{M-R} & \huge \textbf{M-E}\\
    \midrule
         \multirow{2}{*}{\huge \textbf{W2D}}& \huge \textbf{1T} & $\text{20.953}_{\pm \text{2.715}}$ & $\text{20.313}_{\pm \text{3.488}}$ & $\text{20.569}_{\pm \text{0.983}}$ & & $\text{67.261}_{\pm \text{2.316}}$ & $\text{34.482}_{\pm \text{5.890}}$ & $\text{107.171}_{\pm \text{1.611}}$ & & $\text{79.034}_{\pm \text{1.506}}$ & $\text{38.072}_{\pm \text{9.174}}$ & $\text{103.284}_{\pm \text{5.437}}$ & & $\text{81.756}_{\pm \text{1.671}}$ & $\text{67.546}_{\pm \text{9.505}}$ & $\text{111.722}_{\pm \text{1.398}}$\\
         & \huge \textbf{10T} & $\text{22.261}_{\pm \text{2.811}}$ & $\text{18.529}_{\pm \text{1.760}}$ & $\text{21.196}_{\pm \text{3.103}}$ & & $\text{79.697}_{\pm \text{3.348}}$ & $\text{68.528}_{\pm \text{1.924}}$ & $\text{108.622}_{\pm \text{1.815}}$ & & $\text{81.377}_{\pm \text{1.903}}$ & $\text{68.168}_{\pm \text{2.661}}$ & $\text{109.845}_{\pm \text{0.726}}$ & & $\text{88.262}_{\pm \text{12.886}}$ & $\text{85.092}_{\pm \text{8.727}}$ & $\text{111.394}_{\pm \text{0.469}}$\\
    \midrule
         \multirow{2}{*}{\huge \textbf{Hp}}& \huge \textbf{1T} & $\text{31.038}_{\pm \text{2.868}}$ & $\text{5.849}_{\pm \text{0.146}}$ & $\text{35.099}_{\pm \text{1.212}}$ & & $\text{66.073}_{\pm \text{1.745}}$ & $\text{61.686}_{\pm \text{2.592}}$ & $\text{100.719}_{\pm \text{1.679}}$ & & $\text{74.737}_{\pm \text{4.807}}$ & $\text{36.008}_{\pm \text{6.575}}$ & $\text{60.753}_{\pm \text{14.433}}$ & & $\text{70.927}_{\pm \text{6.482}}$ & $\text{83.406}_{\pm \text{4.734}}$ & $\text{108.225}_{\pm \text{5.596}}$\\
         & \huge \textbf{10T} & $\text{32.769}_{\pm \text{1.788}}$ & $\text{8.638}_{\pm \text{1.395}}$ & $\text{36.161}_{\pm \text{2.204}}$ & & $\text{85.589}_{\pm \text{5.311}}$ & $\text{69.701}_{\pm \text{5.317}}$ & $\text{108.087}_{\pm \text{1.049}}$ & & $\text{77.792}_{\pm \text{4.652}}$ & $\text{39.856}_{\pm \text{12.334}}$ & $\text{79.389}_{\pm \text{28.054}}$ & & $\text{90.176}_{\pm \text{0.186}}$ & $\text{100.321}_{\pm \text{1.121}}$ & $\text{112.908}_{\pm \text{3.154}}$\\
    \midrule
         \multirow{2}{*}{\huge \textbf{Hc}}& \huge \textbf{1T} & $\text{64.329}_{\pm \text{2.096}}$ & $\text{ 12.277}_{\pm \text{1.953}}$ & $\text{ 25.055}_{\pm \text{7.834}}$ & & $\text{41.204}_{\pm \text{0.430}}$ & $\text{15.164}_{\pm \text{4.847}}$ & $\text{77.500}_{\pm \text{3.323}}$ & & $\text{42.958}_{\pm \text{0.065}}$ & $\text{18.493}_{\pm \text{1.584}}$ & $\text{72.085}_{\pm \text{3.491}}$ & & $\text{50.464}_{\pm \text{0.127}}$ & $\text{32.318}_{\pm \text{2.435}}$ & $\text{87.854}_{\pm \text{6.657}}$\\
         & \huge \textbf{10T} & $\text{ 65.863}_{\pm \text{1.289}}$ & $\text{59.724}_{\pm \text{1.056}}$ & $\text{28.221}_{\pm \text{6.078}}$ & & $\text{42.273}_{\pm \text{0.379}}$ & $\text{ 34.508}_{\pm \text{1.482}}$ & $\text{ 82.844}_{\pm \text{7.635}}$ & & $\text{43.243}_{\pm \text{0.262}}$ & $\text{39.434}_{\pm \text{0.362}}$ & $\text{87.378}_{\pm \text{3.340}}$ & & $\text{51.284}_{\pm \text{0.605}}$ & $\text{49.587}_{\pm \text{0.334}}$ & $\text{94.116}_{\pm \text{0.321}}$\\
    \bottomrule
    \end{tabular}}
    \caption{\footnotesize Performance comparison of algorithms on the \textbf{1T}, \textbf{10T}, and \textbf{1T10S} datasets. In this study, \textbf{1T10S(B)} refers to the source dataset under the \textbf{BodyMass shift} setting, while \textbf{1T10S(J)} corresponds to the source dataset under the \textbf{JointNoise shift} setting. The experiments are conducted in the \textbf{Walker2D (W2D)}, \textbf{Hopper (Hp)}, and \textbf{HalfCheetah (Hc)} using the \textbf{Medium (M)}, \textbf{Medium Replay (M-R)}, and \textbf{Medium Expert (M-E)} datasets. All reported values are averaged over five seeds (0, 1012, 2024, 3036, 4048).}
    \label{tab:target_experiment_complete}
\end{table}

\begin{table}[ht]
\centering
\renewcommand\arraystretch{4}
\makebox[\textwidth][c]{%
\resizebox{1.1\textwidth}{!}{%
\begin{tabular}{c c c c c c c c c c c c c c c c c}
\hline
& &
\Huge\textbf{BEAR} & \Huge\textbf{AWR} & \Huge\textbf{BCQ} & \Huge\textbf{CQL} & \Huge\textbf{MOPO} &
&
\Huge\textbf{D-BEAR} & \Huge\textbf{D-AWR} & \Huge\textbf{D-BCQ} & \Huge\textbf{D-CQL} & \Huge\textbf{D-MOPO} &
&
\Huge\textbf{DFDT} & \Huge\textbf{H2O} & \Huge\textbf{IGDF} \\
\cline{3-7}\cline{9-13}\cline{15-17}

\multirow{2}{*}{\Huge\textbf{Walker2D M}} & \Huge\textbf{BM} &
\Huge 5.776 $\pm$ 1.653 &
\Huge 77.442 $\pm$ 0.340 &
\Huge 70.681 $\pm$ 0.539 &
\Huge 73.317 $\pm$ 1.368 &
\Huge 21.617 $\pm$ 1.277 &
&
\Huge 6.516 $\pm$ 3.220 &
\Huge 78.004 $\pm$ 0.911 &
\Huge 72.023 $\pm$ 0.695 &
\Huge 74.276 $\pm$ 2.582 &
\Huge 21.621 $\pm$ 1.063 &
&
\Huge 80.693 $\pm$ 0.401 &
\Huge 63.031 $\pm$ 8.582 &
\Huge 78.271 $\pm$ 2.091 \\

& \Huge\textbf{JN} &
\Huge 4.926 $\pm$ 1.418 &
\Huge 67.636 $\pm$ 1.468 &
\Huge 62.696 $\pm$ 1.037 &
\Huge 68.962 $\pm$ 0.865 &
\Huge 23.552 $\pm$ 1.063 &
&
\Huge 6.933 $\pm$ 1.884 &
\Huge 64.303 $\pm$ 0.513 &
\Huge 60.681 $\pm$ 1.118 &
\Huge 69.141 $\pm$ 0.944 &
\Huge 23.570 $\pm$ 0.665 &
&
\Huge 76.373 $\pm$ 5.636 &
\Huge 72.849 $\pm$ 13.762 &
\Huge 80.549 $\pm$ 2.811 \\
\hline

\multirow{2}{*}{\Huge\textbf{Walker2D M-R}} & \Huge\textbf{BM} &
\Huge 0.067 $\pm$ 4.951 &
\Huge 47.033 $\pm$ 2.278 &
\Huge 50.714 $\pm$ 1.918 &
\Huge 54.753 $\pm$ 0.335 &
\Huge 11.563 $\pm$ 2.751 &
&
\Huge 1.078 $\pm$ 2.083 &
\Huge 32.008 $\pm$ 1.286 &
\Huge 51.447 $\pm$ 3.108 &
\Huge 57.432 $\pm$ 0.764 &
\Huge 12.129 $\pm$ 2.755 &
&
\Huge 71.670 $\pm$ 10.678 &
\Huge 56.110 $\pm$ 9.162 &
\Huge 62.494 $\pm$ 4.081 \\

& \Huge\textbf{JN} &
\Huge 0.474 $\pm$ 0.719 &
\Huge 31.623 $\pm$ 2.551 &
\Huge 50.601 $\pm$ 1.611 &
\Huge 50.600 $\pm$ 1.589 &
\Huge 11.379 $\pm$ 0.596 &
&
\Huge 0.384 $\pm$ 3.823 &
\Huge 36.807 $\pm$ 2.442 &
\Huge 50.714 $\pm$ 0.876 &
\Huge 51.742 $\pm$ 1.061 &
\Huge 15.389 $\pm$ 0.559 &
&
\Huge 71.236 $\pm$ 4.658 &
\Huge 55.228 $\pm$ 6.383 &
\Huge 66.834 $\pm$ 7.751 \\
\hline

\multirow{2}{*}{\Huge\textbf{Walker2D M-E}} & \Huge\textbf{BM} &
\Huge 19.799 $\pm$ 3.116 &
\Huge 110.324 $\pm$ 1.053 &
\Huge 112.343 $\pm$ 1.488 &
\Huge 107.187 $\pm$ 3.209 &
\Huge 18.324 $\pm$ 0.708 &
&
\Huge 17.491 $\pm$ 2.844 &
\Huge 109.743 $\pm$ 2.632 &
\Huge 113.069 $\pm$ 1.602 &
\Huge 105.401 $\pm$ 2.186 &
\Huge 20.741 $\pm$ 0.399 &
&
\Huge 108.086 $\pm$ 0.270 &
\Huge 83.768 $\pm$ 14.152 &
\Huge 111.936 $\pm$ 0.612 \\

& \Huge\textbf{JN} &
\Huge 14.225 $\pm$ 1.338 &
\Huge 104.662 $\pm$ 2.370 &
\Huge 112.926 $\pm$ 1.491 &
\Huge 104.019 $\pm$ 0.294 &
\Huge 17.429 $\pm$ 0.639 &
&
\Huge 14.203 $\pm$ 1.602 &
\Huge 108.915 $\pm$ 1.915 &
\Huge 111.249 $\pm$ 1.092 &
\Huge 108.236 $\pm$ 1.206 &
\Huge 19.325 $\pm$ 3.119 &
&
\Huge 108.390 $\pm$ 0.707 &
\Huge 95.862 $\pm$ 6.778 &
\Huge 111.226 $\pm$ 1.229 \\
\hline

\multirow{2}{*}{\Huge\textbf{Hopper M}} & \Huge\textbf{BM} &
\Huge 22.436 $\pm$ 0.103 &
\Huge 25.843 $\pm$ 0.325 &
\Huge 24.853 $\pm$ 1.615 &
\Huge 49.094 $\pm$ 2.207 &
\Huge 20.765 $\pm$ 3.350 &
&
\Huge 25.608 $\pm$ 1.063 &
\Huge 26.594 $\pm$ 1.267 &
\Huge 26.487 $\pm$ 1.366 &
\Huge 45.101 $\pm$ 0.342 &
\Huge 21.495 $\pm$ 0.848 &
&
\Huge 66.133 $\pm$ 3.840 &
\Huge 100.754 $\pm$ 2.287 &
\Huge 75.326 $\pm$ 8.112 \\

& \Huge\textbf{JN} &
\Huge 8.536 $\pm$ 1.965 &
\Huge 57.021 $\pm$ 0.938 &
\Huge 74.559 $\pm$ 0.605 &
\Huge 71.495 $\pm$ 0.126 &
\Huge 23.556 $\pm$ 1.327 &
&
\Huge 10.576 $\pm$ 2.052 &
\Huge 61.463 $\pm$ 0.702 &
\Huge 74.853 $\pm$ 0.626 &
\Huge 63.611 $\pm$ 1.136 &
\Huge 24.992 $\pm$ 0.944 &
&
\Huge 59.760 $\pm$ 2.184 &
\Huge 101.244 $\pm$ 0.452 &
\Huge 78.467 $\pm$ 2.849 \\
\hline

\multirow{2}{*}{\Huge\textbf{Hopper M-R}} & \Huge\textbf{BM} &
\Huge 6.282 $\pm$ 0.132 &
\Huge 55.607 $\pm$ 2.310 &
\Huge 64.519 $\pm$ 0.813 &
\Huge 66.455 $\pm$ 0.636 &
\Huge 5.504 $\pm$ 1.701 &
&
\Huge 2.619 $\pm$ 0.128 &
\Huge 44.883 $\pm$ 1.595 &
\Huge 64.168 $\pm$ 0.291 &
\Huge 68.163 $\pm$ 0.559 &
\Huge 5.482 $\pm$ 1.061 &
&
\Huge 74.720 $\pm$ 11.472 &
\Huge 97.538 $\pm$ 3.471 &
\Huge 95.655 $\pm$ 2.795 \\

& \Huge\textbf{JN} &
\Huge 1.841 $\pm$ 3.814 &
\Huge 37.821 $\pm$ 1.205 &
\Huge 65.103 $\pm$ 0.703 &
\Huge 61.302 $\pm$ 1.207 &
\Huge 5.498 $\pm$ 0.568 &
&
\Huge 5.637 $\pm$ 0.291 &
\Huge 63.937 $\pm$ 3.879 &
\Huge 64.519 $\pm$ 1.102 &
\Huge 63.178 $\pm$ 1.218 &
\Huge 6.147 $\pm$ 0.157 &
&
\Huge 74.720 $\pm$ 11.472 &
\Huge 98.988 $\pm$ 1.445 &
\Huge 95.230 $\pm$ 3.558 \\
\hline

\multirow{2}{*}{\Huge\textbf{Hopper M-E}} & \Huge\textbf{BM} &
\Huge 22.934 $\pm$ 3.022 &
\Huge 57.595 $\pm$ 0.612 &
\Huge 109.367 $\pm$ 0.834 &
\Huge 70.467 $\pm$ 2.712 &
\Huge 30.541 $\pm$ 3.616 &
&
\Huge 31.090 $\pm$ 0.463 &
\Huge 78.262 $\pm$ 0.239 &
\Huge 110.014 $\pm$ 2.153 &
\Huge 72.149 $\pm$ 1.934 &
\Huge 30.540 $\pm$ 0.842 &
&
\Huge 70.703 $\pm$ 10.556 &
\Huge 108.872 $\pm$ 2.802 &
\Huge 105.465 $\pm$ 8.805 \\

& \Huge\textbf{JN} &
\Huge 39.031 $\pm$ 1.079 &
\Huge 74.708 $\pm$ 1.889 &
\Huge 108.639 $\pm$ 2.028 &
\Huge 72.512 $\pm$ 0.781 &
\Huge 30.537 $\pm$ 0.842 &
&
\Huge 33.052 $\pm$ 0.385 &
\Huge 60.952 $\pm$ 0.879 &
\Huge 111.587 $\pm$ 1.602 &
\Huge 94.128 $\pm$ 1.213 &
\Huge 32.589 $\pm$ 1.985 &
&
\Huge 74.306 $\pm$ 18.284 &
\Huge 109.457 $\pm$ 2.819 &
\Huge 99.057 $\pm$ 23.594 \\
\hline

\multirow{2}{*}{\Huge\textbf{HalfCheetah M}} & \Huge\textbf{BM} &
\Huge 5.431 $\pm$ 1.518 &
\Huge 42.293 $\pm$ 0.862 &
\Huge 39.835 $\pm$ 0.427 &
\Huge 37.081 $\pm$ 0.358 &
\Huge 58.457 $\pm$ 1.449 &
&
\Huge 6.009 $\pm$ 1.705 &
\Huge 41.800 $\pm$ 0.830 &
\Huge 39.333 $\pm$ 0.506 &
\Huge 37.189 $\pm$ 0.218 &
\Huge 59.311 $\pm$ 0.949 &
&
\Huge 43.580 $\pm$ 0.157 &
\Huge 55.566 $\pm$ 1.803 &
\Huge 47.239 $\pm$ 0.131 \\

& \Huge\textbf{JN} &
\Huge 1.948 $\pm$ 1.058 &
\Huge 41.992 $\pm$ 0.762 &
\Huge 50.511 $\pm$ 0.371 &
\Huge 49.046 $\pm$ 0.420 &
\Huge 61.073 $\pm$ 0.315 &
&
\Huge 2.901 $\pm$ 0.402 &
\Huge 42.545 $\pm$ 0.731 &
\Huge 52.149 $\pm$ 0.457 &
\Huge 49.284 $\pm$ 0.570 &
\Huge 61.447 $\pm$ 0.734 &
&
\Huge 43.760 $\pm$ 0.245 &
\Huge 57.924 $\pm$ 1.489 &
\Huge 46.647 $\pm$ 0.289 \\
\hline

\multirow{2}{*}{\Huge\textbf{HalfCheetah M-R}} & \Huge\textbf{BM} &
\Huge 7.425 $\pm$ 1.307 &
\Huge 15.988 $\pm$ 5.339 &
\Huge 32.553 $\pm$ 1.258 &
\Huge 37.508 $\pm$ 0.520 &
\Huge 50.429 $\pm$ 1.306 &
&
\Huge 4.909 $\pm$ 0.562 &
\Huge 17.918 $\pm$ 3.701 &
\Huge 32.095 $\pm$ 1.258 &
\Huge 37.721 $\pm$ 0.440 &
\Huge 52.609 $\pm$ 0.621 &
&
\Huge 36.626 $\pm$ 1.047 &
\Huge 57.835 $\pm$ 1.805 &
\Huge 46.365 $\pm$ 0.239 \\

& \Huge\textbf{JN} &
\Huge 18.337 $\pm$ 0.498 &
\Huge 31.742 $\pm$ 4.199 &
\Huge 46.567 $\pm$ 2.563 &
\Huge 51.566 $\pm$ 0.246 &
\Huge 51.918 $\pm$ 1.584 &
&
\Huge 17.929 $\pm$ 0.479 &
\Huge 38.125 $\pm$ 1.775 &
\Huge 49.066 $\pm$ 0.645 &
\Huge 52.991 $\pm$ 0.438 &
\Huge 51.258 $\pm$ 1.709 &
&
\Huge 31.193 $\pm$ 2.018 &
\Huge 57.734 $\pm$ 1.061 &
\Huge 46.611 $\pm$ 0.157 \\
\hline

\multirow{2}{*}{\Huge\textbf{HalfCheetah M-E}} & \Huge\textbf{BM} &
\Huge 4.356 $\pm$ 0.431 &
\Huge 88.155 $\pm$ 1.836 &
\Huge 61.771 $\pm$ 4.610 &
\Huge 61.104 $\pm$ 4.131 &
\Huge 51.040 $\pm$ 4.461 &
&
\Huge 2.948 $\pm$ 0.691 &
\Huge 89.201 $\pm$ 2.419 &
\Huge 63.465 $\pm$ 3.303 &
\Huge 62.665 $\pm$ 5.326 &
\Huge 56.616 $\pm$ 2.609 &
&
\Huge 78.093 $\pm$ 15.794 &
\Huge 58.192 $\pm$ 2.092 &
\Huge 48.112 $\pm$ 0.230 \\

& \Huge\textbf{JN} &
\Huge 3.195 $\pm$ 0.391 &
\Huge 88.647 $\pm$ 2.669 &
\Huge 62.486 $\pm$ 10.025 &
\Huge 84.090 $\pm$ 1.109 &
\Huge 54.630 $\pm$ 10.104 &
&
\Huge 8.789 $\pm$ 0.271 &
\Huge 89.220 $\pm$ 1.800 &
\Huge 71.007 $\pm$ 4.201 &
\Huge 84.210 $\pm$ 0.506 &
\Huge 60.014 $\pm$ 7.011 &
&
\Huge 82.476 $\pm$ 2.441 &
\Huge 60.230 $\pm$ 1.072 &
\Huge 48.192 $\pm$ 0.255 \\
\hline

\end{tabular}}}
\caption{\footnotesize{
Performance comparison of conventional offline reinforcement learning methods—BEAR, AWR, BCQ, CQL, and MOPO—and their DARA-augmented variants, together with recent dynamics-aware baselines (DFDT, H2O, and IGDF), under \textbf{BodyMass} and \textbf{JointNoise} distribution shifts on the \textbf{Walker2D}, \textbf{Hopper}, and \textbf{HalfCheetah} environments. All methods are evaluated on the \textbf{Medium (M)}, \textbf{Medium-Replay (M-R)}, and \textbf{Medium-Expert (M-E)} splits of the \textbf{1T10S} dataset, which consists of one target dataset (\textbf{1T}) and ten source datasets (\textbf{10S}). \textbf{D-XX} denotes the DARA-augmented variant of algorithm \textbf{XX}. DFDT, H2O, and IGDF are included as representative off-dynamics baselines and are evaluated under identical experimental settings.}}
\label{tab:tab2_comple_nodt}
\end{table}

\begin{table}[ht]
     \centering
    \renewcommand\arraystretch{4}
    \resizebox{\textwidth}{!}{
    \begin{tabular}{c  c  c c c c  c c c c  c c c}
    \hline
    & & \Huge\textbf{DT} & \Huge\textbf{Reinformer} & \Huge\textbf{QT} & & \Huge\textbf{\algnameMVDT} & \Huge\textbf{\algnameMVReinf} & \Huge\textbf{\algnameMVQT} & & \Huge\textbf{\algnameDaraDT} & \Huge\textbf{\algnameDaraReinf} & \Huge\textbf{\algnameDaraQT}\\
    \cline{3-5}\cline{7-9}\cline{11-13}

    \multirow{2}{*}{\Huge\textbf{Walker2D M}} 
    & \Huge\textbf{BM} 
    & \Huge 78.768 ± 1.233 
    & \Huge 80.857 ± 0.509 
    & \Huge 84.325 ± 0.425 
    & 
    & \Huge 80.857 ± 1.715 
    & \Huge 82.354 ± 1.479 
    & \Huge 84.582 ± 0.507 
    & 
    & \Huge 78.257 ± 2.423 
    & \Huge 80.666 ± 0.505 
    & \Huge 83.068 ± 0.859 \\
    
    & \Huge\textbf{JN} 
    & \Huge 71.068 ± 1.022 
    & \Huge 74.748 ± 1.721 
    & \Huge 80.621 ± 1.143 
    & 
    & \Huge 75.008 ± 1.834 
    & \Huge 75.008 ± 0.986 
    & \Huge 80.904 ± 1.502 
    & 
    & \Huge 71.779 ± 1.706 
    & \Huge 74.268 ± 1.341 
    & \Huge 78.672 ± 2.201 \\
    \hline

    \multirow{2}{*}{\Huge\textbf{Walker2D M-R}} 
    & \Huge\textbf{BM} 
    & \Huge 73.664 ± 1.920 
    & \Huge 67.032 ± 5.767 
    & \Huge 87.292 ± 0.631 
    & 
    & \Huge 73.708 ± 1.570 
    & \Huge 50.296 ± 14.211 
    & \Huge 87.491 ± 1.226 
    & 
    & \Huge 67.565 ± 0.799 
    & \Huge 66.658 ± 4.303 
    & \Huge 76.169 ± 7.567 \\
    
    & \Huge\textbf{JN} 
    & \Huge 58.255 ± 3.181 
    & \Huge 54.801 ± 3.217 
    & \Huge 82.139 ± 1.029 
    & 
    & \Huge 55.722 ± 2.653 
    & \Huge 47.591 ± 10.244 
    & \Huge 82.363 ± 4.206 
    & 
    & \Huge 62.226 ± 0.383 
    & \Huge 55.438 ± 4.833 
    & \Huge 79.795 ± 4.708 \\
    \hline

    \multirow{2}{*}{\Huge\textbf{Walker2D M-E}} 
    & \Huge\textbf{BM} 
    & \Huge 84.430 ± 0.823 
    & \Huge 83.388 ± 0.806 
    & \Huge 93.082 ± 0.348 
    & 
    & \Huge 88.235 ± 1.886 
    & \Huge 84.897 ± 1.117 
    & \Huge 92.744 ± 0.499 
    & 
    & \Huge 85.328 ± 0.865 
    & \Huge 83.761 ± 0.735 
    & \Huge 94.578 ± 1.383 \\
    
    & \Huge\textbf{JN} 
    & \Huge 115.746 ± 1.116 
    & \Huge 117.360 ± 2.550 
    & \Huge 116.149 ± 1.640 
    & 
    & \Huge 111.060 ± 2.247 
    & \Huge 118.218 ± 1.460 
    & \Huge 118.564 ± 0.697 
    & 
    & \Huge 111.236 ± 0.914 
    & \Huge 117.765 ± 2.499 
    & \Huge 116.115 ± 1.889 \\
    \hline

    \multirow{2}{*}{\Huge\textbf{Hopper M}} 
    & \Huge\textbf{BM} 
    & \Huge 34.057 ± 0.177 
    & \Huge 51.357 ± 3.713 
    & \Huge 49.516 ± 9.798 
    & 
    & \Huge 39.435 ± 1.239 
    & \Huge 59.085 ± 2.791 
    & \Huge 51.796 ± 9.971 
    & 
    & \Huge 37.787 ± 1.914 
    & \Huge 51.771 ± 5.322 
    & \Huge 62.262 ± 5.348 \\
    
    & \Huge\textbf{JN} 
    & \Huge 70.685 ± 0.726 
    & \Huge 70.340 ± 4.633 
    & \Huge 68.656 ± 7.079 
    & 
    & \Huge 70.356 ± 3.657 
    & \Huge 72.346 ± 5.877 
    & \Huge 73.987 ± 8.080 
    & 
    & \Huge 78.325 ± 2.522 
    & \Huge 70.466 ± 3.728 
    & \Huge 68.709 ± 12.160 \\
    \hline

    \multirow{2}{*}{\Huge\textbf{Hopper M-R}} 
    & \Huge\textbf{BM} 
    & \Huge 64.216 ± 1.504 
    & \Huge 17.534 ± 6.725 
    & \Huge 69.460 ± 13.948 
    & 
    & \Huge 66.092 ± 0.233 
    & \Huge 20.952 ± 9.794 
    & \Huge 76.287 ± 7.810 
    & 
    & \Huge 60.393 ± 1.086 
    & \Huge 27.238 ± 12.735 
    & \Huge 82.786 ± 11.992 \\
    
    & \Huge\textbf{JN} 
    & \Huge 61.870 ± 0.249 
    & \Huge 41.820 ± 15.773 
    & \Huge 93.704 ± 7.559 
    & 
    & \Huge 77.825 ± 1.638 
    & \Huge 43.985 ± 5.075 
    & \Huge 93.409 ± 4.696 
    & 
    & \Huge 83.525 ± 1.728 
    & \Huge 52.052 ± 10.035 
    & \Huge 51.456 ± 12.168 \\
    \hline

    \multirow{2}{*}{\Huge\textbf{Hopper M-E}} 
    & \Huge\textbf{BM} 
    & \Huge 33.554 ± 0.846 
    & \Huge 68.973 ± 7.512 
    & \Huge 61.162 ± 3.767 
    & 
    & \Huge 52.873 ± 0.454 
    & \Huge 64.206 ± 12.073 
    & \Huge 73.952 ± 16.294 
    & 
    & \Huge 33.631 ± 1.605 
    & \Huge 73.363 ± 7.674 
    & \Huge 77.279 ± 18.607 \\
    
    & \Huge\textbf{JN} 
    & \Huge 108.254 ± 1.583 
    & \Huge 109.256 ± 0.126 
    & \Huge 109.056 ± 0.214 
    & 
    & \Huge 109.367 ± 1.084 
    & \Huge 109.472 ± 0.103 
    & \Huge 109.803 ± 0.609 
    & 
    & \Huge 108.261 ± 2.612 
    & \Huge 109.255 ± 0.188 
    & \Huge 109.746 ± 0.771 \\
    \hline

    \multirow{2}{*}{\Huge\textbf{HalfCheetah M}} 
    & \Huge\textbf{BM} 
    & \Huge 39.954 ± 0.260 
    & \Huge 37.353 ± 0.483 
    & \Huge 44.656 ± 0.643 
    & 
    & \Huge 40.250 ± 0.911 
    & \Huge 42.451 ± 0.491 
    & \Huge 47.303 ± 0.318 
    & 
    & \Huge 37.599 ± 0.395 
    & \Huge 38.261 ± 1.238 
    & \Huge 46.383 ± 0.358 \\
    
    & \Huge\textbf{JN} 
    & \Huge 47.725 ± 0.431 
    & \Huge 48.274 ± 0.191 
    & \Huge 56.213 ± 0.327 
    & 
    & \Huge 44.149 ± 3.672 
    & \Huge 43.009 ± 0.307 
    & \Huge 52.394 ± 1.413 
    & 
    & \Huge 47.833 ± 0.284 
    & \Huge 48.404 ± 0.168 
    & \Huge 55.026 ± 0.410 \\
    \hline

    \multirow{2}{*}{\Huge\textbf{HalfCheetah M-R}} 
    & \Huge\textbf{BM} 
    & \Huge 20.966 ± 9.607 
    & \Huge 31.584 ± 1.248 
    & \Huge 41.300 ± 0.787 
    & 
    & \Huge 27.812 ± 3.256 
    & \Huge 32.114 ± 1.455 
    & \Huge 42.405 ± 0.729 
    & 
    & \Huge 24.059 ± 2.271 
    & \Huge 26.995 ± 4.373 
    & \Huge 41.359 ± 0.985 \\
    
    & \Huge\textbf{JN} 
    & \Huge 36.509 ± 4.414 
    & \Huge 40.296 ± 2.914 
    & \Huge 53.763 ± 0.793 
    & 
    & \Huge 38.417 ± 4.068 
    & \Huge 40.840 ± 2.880 
    & \Huge 53.870 ± 0.981 
    & 
    & \Huge 38.031 ± 3.529 
    & \Huge 38.436 ± 3.377 
    & \Huge 53.257 ± 0.586 \\
    \hline

    \multirow{2}{*}{\Huge\textbf{HalfCheetah M-E}} 
    & \Huge\textbf{BM} 
    & \Huge 54.981 ± 1.147 
    & \Huge 40.568 ± 0.984 
    & \Huge 71.008 ± 8.802 
    & 
    & \Huge 56.228 ± 2.930 
    & \Huge 46.048 ± 1.657 
    & \Huge 69.819 ± 5.120 
    & 
    & \Huge 51.357 ± 8.231 
    & \Huge 55.818 ± 1.849 
    & \Huge 76.533 ± 8.022 \\
    
    & \Huge\textbf{JN} 
    & \Huge 70.573 ± 8.599 
    & \Huge 76.073 ± 3.878 
    & \Huge 82.961 ± 4.019 
    & 
    & \Huge 77.762 ± 2.099 
    & \Huge 79.390 ± 0.149 
    & \Huge 83.692 ± 0.699 
    & 
    & \Huge 77.751 ± 2.702 
    & \Huge 78.981 ± 1.198 
    & \Huge 82.148 ± 2.758 \\
    \hline

    \end{tabular}
     }
    \caption{\footnotesize{Performance comparison of traditional offline reinforcement learning algorithms, including DT, Reinformer and QT, along with our proposed methods \algnameMVDT, \algnameDaraDT, \algnameMVReinf, \algnameDaraReinf, \algnameMVQT\ and \algnameDaraQT\ under BodyMass and JointNoise distribution shifts in the \textbf{Walker2D}, \textbf{Hopper}, and \textbf{HalfCheetah} environments. Evaluations are conducted using the \textbf{Medium (M)},\textbf{ Medium Replay (M-R)}, and\textbf{ Medium Expert (M-E)} settings of the \textbf{1T10S} dataset. The \textbf{1T10S} dataset comprises a \textbf{1T} (target) dataset and a \textbf{10S} (source) dataset.
    }}
    \label{tab:tab2_comple_dt}
\end{table}

\begin{table}[t]
\centering
\small
\begin{tabular}{lccc}
\toprule
\textbf{Method}
& \textbf{Baseline} 
& \textbf{\algnameMV}
& \textbf{\algnameDara} \\
\midrule
DT 
& $42.277 \pm 1.062$ 
& $42.752 \pm 2.387$ 
& $41.257 \pm 4.437$ \\

Reinformer 
& $42.929 \pm 0.539$ 
& $42.826 \pm 0.558$ 
& $43.092 \pm 0.442$ \\

QT 
& $43.968 \pm 2.037$ 
& $47.698 \pm 0.211$ 
& $43.896 \pm 3.677$ \\
\bottomrule
\end{tabular}
 \caption{\footnotesize{Performance comparison of traditional offline reinforcement learning algorithms, including DT, Reinformer and QT, along with our proposed methods \algnameMVDT, \algnameDaraDT, \algnameMVReinf, \algnameDaraReinf, \algnameMVQT\ and \algnameDaraQT\ under Kinematic shifts in \textbf{HalfCheetah} environment. Evaluations are conducted using the \textbf{Medium (M)} setting of the \textbf{1T10S} dataset. The \textbf{1T10S} dataset comprises a \textbf{1T} (target) dataset and a \textbf{10S} (source) dataset.
    }}
\label{tab:kinematic}
\end{table}

\subsection{Ablation Study}
\subsubsection{Complex Dynamics Shifts}
To evaluate the behavior of the proposed method under more complex dynamics shifts, we conduct additional ablation studies on the \texttt{HalfCheetah} environment with the following modifications:
\begin{itemize}
\item Kinematic Shift. The rotation range of the back thigh joint is modified from the default interval
$[-0.52,\,1.05]$ to a substantially reduced range $[-0.0052,\,0.0105]$.
This change constrains the feasible joint motion and alters the transition dynamics by limiting the agent’s actuation capability.
\end{itemize}

These settings introduce structured changes to the underlying system dynamics beyond those considered in the main experiments.
The corresponding results are reported in ~\Cref{tab:kinematic}.
Under the kinematic shift, all methods achieve comparable performance across DT, Reinformer, and QT, indicating that restricting joint motion primarily limits action expressiveness without fundamentally destabilizing the learned policies. In this setting, the proposed methods consistently match or improve upon their corresponding baselines, with \algnameMVQT yielding the strongest performance. 

\begin{figure}[t]
    \centering
    \includegraphics[width=0.8\linewidth]{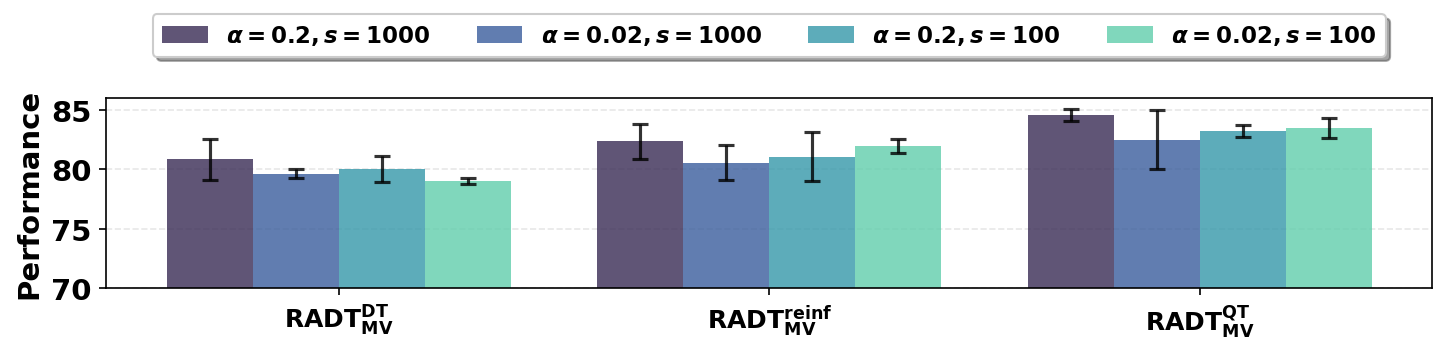}
    \caption{\footnotesize Performance of \textbf{\algnameMV} under different CQL training settings on the \textbf{Walker2D-Medium} environment with \textbf{BodyMass} dynamics shift.}
    \label{fig:abl_CQL}
\end{figure}

\subsubsection{Sensitivity to Q-Function Quality}

\algnameMV uses learned Q-functions for return relabeling. To evaluate its sensitivity to Q-function quality, we vary two factors in CQL training: (i) the conservative regularization coefficient $\alpha$ and (ii) the number of target samples used to train the critic.

All experiments are conducted on Walker2D-Medium with BodyMass dynamics shift. For each setting, CQL critics trained under different $\alpha$ values and dataset sizes are used for return relabeling in \algnameMV, while all other components are kept fixed.

As shown in \Cref{fig:abl_CQL}, \algnameMV achieves similar performance across a broad range of $\alpha$ values and training sample sizes. Although CQL training becomes less stable under strong regularization or limited data, the final policy performance varies only marginally. This suggests that \algnameMV is not highly sensitive to the quality of the learned Q-functions.

\begin{figure}[t]
    \centering
    \begin{subfigure}{0.49\textwidth}
        \centering
        \includegraphics[width=\linewidth]{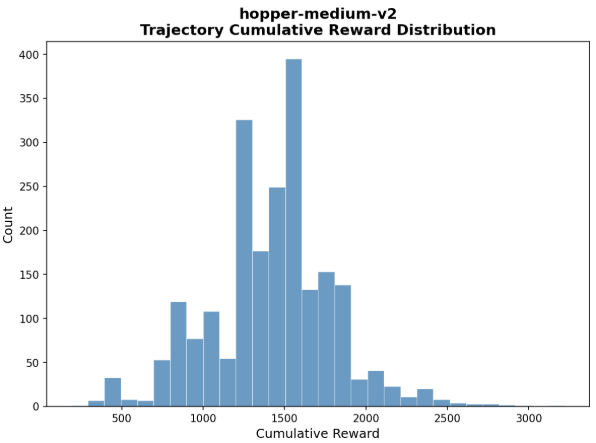}
    \end{subfigure}
    \hfill
    \begin{subfigure}{0.49\textwidth}
        \centering
        \includegraphics[width=\linewidth]{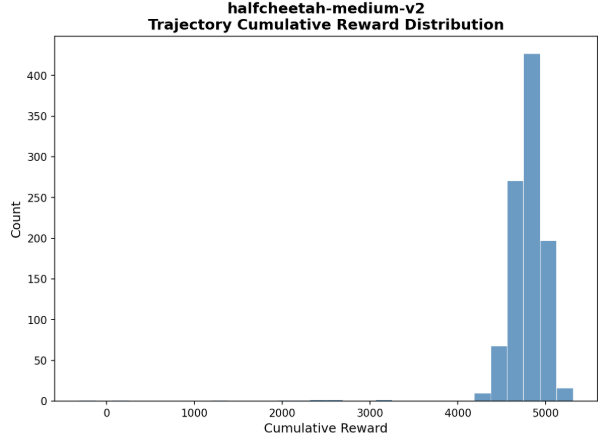}
    \end{subfigure}
    \caption{\footnotesize Empirical return distributions of trajectory-level cumulative rewards in the D4RL medium datasets for Hopper and HalfCheetah.}
    \label{fig:data_distribution}
\end{figure}

\subsection{Gaussian Approximation of Return Distributions}

Our approach models trajectory-level returns using a Gaussian distribution. This assumption is supported by empirical analysis on standard offline RL benchmarks.

We examine the D4RL medium datasets, which are collected using medium-performance SAC policies. For the Hopper and HalfCheetah environments, we compute the cumulative reward of each trajectory and analyze the empirical return distributions. As shown in \Cref{fig:data_distribution}, the resulting histograms are unimodal and approximately symmetric, without pronounced skewness or heavy tails.

These characteristics indicate that a Gaussian distribution provides a reasonable approximation of the observed return distributions in these datasets. While this assumption is not expected to hold in all settings, it is appropriate for the benchmarks considered in this work.

\section{Analysis of Failure Cases}

REAG improves target-domain policy learning by augmenting training with relabeled source trajectories. Its effectiveness depends on the degree of alignment between the source and target domains, particularly with respect to state–action occupancy. This dependency is formalized in our theoretical analysis through the domain occupancy overlap coefficient (Assumption 4.2 and Remark 4.6), which characterizes the conditions under which source data can provide reliable information for the target task.

When the source–target occupancy overlap is limited, return relabeling may introduce bias. In this regime, relabeled source trajectories can distort the estimation of return statistics and value functions, resulting in degraded downstream performance. This behavior is observed for Reinformer on the Walker2D Medium-Replay dataset under both BodyMass and JointNoise dynamics shifts, where REAG shows limited effectiveness and may negatively affect performance.

A contributing factor is the structure of Medium-Replay datasets, which are generated from mixtures of behavior policies and exhibit substantial heterogeneity and multi-modality in both state visitation and return distributions. In such settings, the target subset (1T) is often insufficient to reliably characterize the target return distribution, which can deviate from a Gaussian form in parts of the data. Consequently, the return statistics and Q-function estimates used for relabeling become noisy, reducing the reliability of return augmentation.

Based on these observations, REAG is best suited for settings with sufficient source–target occupancy overlap and relatively stable return distributions, such as Medium and Medium-Expert datasets. For highly heterogeneous replay datasets, aggregation-based variants such as \algnameDara or disabling return relabeling may be more appropriate.

\section{Additional Theoretical Results}
Recall that in \Cref{sec:Direct Matching of Return Distributions}, we assume a single behavior policy to collect offline dataset under both source and target environments. This simple setting is studied  to provide principled motivation and conceptual guidance for the proposed method. While in practical implementations, we use different policies to collect the source and target datasets, leading to a gap between our theoretical analysis and practical implementation.
In this section, we generalize our theoretical analysis and results to the case where the source dataset and target dataset are collected by different behavior policies, thus aligning with our empirical implementation.

We generate the source dataset $D^S$ by $\beta_S$ and target dataset $D^T$ by $\beta_T$, and denote their sample sizes as $N^S$ and $N^T$. Denote the occupancy measure over state induced by  $\beta_S$ under $P^S$ as $d^S_{\beta_S}$ and by  $\beta_T$ under $P^T$ as $d^T_{\beta_T}$. Define the mixture occupancy measure as $d^{mix}:= \frac{N^T}{N^S+N^T}d_{\beta_T}^T + \frac{N^S}{N^S+N^T}d_{\beta_S}^S$, and the mixture policy as $\beta := \frac{N^T}{N^S+N^T}\beta_T + \frac{N^S}{N^S+N^T}\beta_S$. For simplicity, we denote $c:=\frac{N^S}{N^S+N^T}\beta_S$ and thus $\beta = c\beta_S + (1-c)\beta_T$. Recall the definition $P^T_{\beta_T}(g|s,a)$ is the distribution over return induced by $\beta_T$ under $P^T$. We define the goal policy of interest as
\begin{align*}
    \pi_f(a|s) = \frac{\beta(a|s)P^T_{\beta_T}(g=f(s)|s,a)}{\sum_{a'\in\cA}P^{T}_{\beta_T}(g=f(s)|s,a')\beta(a'|s)}.
\end{align*}
We define $P^T_{\beta, \beta_T}(\cdot|s) = \sum_{a'\in\cA}P^{T}_{\beta_T}(g=f(s)|s,a')\beta(a'|s)$.

We make the following assumptions, which corresponding to \Cref{assumption on MDP}, \Cref{assumption on coverage} and \Cref{assumption on estimation} in the main text.

\begin{assumption}%
    \label{assumption on MDP-general}
    (1) (Near determinism) $P^T(r\neq r(s,a) ~\text{or}~ s'\neq \xi(s,a)|s,a)\leq \epsilon$  at all $s,a$ for some functions $\xi$ and $r$. (2) (Consistency of $f$) $f(s)=f(s')+r$ for all $s$.
\end{assumption}

\begin{assumption}%
\label{assumption on coverage-general}
    For all $s$ we assume (1) (Bounded occupancy mismatch) $P_{\pi_f}^T(s)\leq C_f P^T_{\beta_T}(s)$; (2) (Return coverage) (Return coverage) $\int_a\beta_S(a|s)P^T_{\beta_T}(\cdot|s,a)da\geq \alpha_f$ and $\int_a\beta_T(a|s)P^T_{\beta_T}(\cdot|s,a)da\geq \alpha_f$; and (3) (Domain occupancy overlap) $d^T_{\beta_T}(s)\leq \gamma_f d^S_{\beta_S}(s)$.
\end{assumption}
\begin{assumption}%
\label{assumption on estimation-general}
(1) The policy class $\Pi$ is finite. (2)$|\log\pi(a|s,g) - \log\pi(a'|s',g')| \leq c$ for any $(a,s,g,a',s',g')$ and all $\pi \in \Pi$. (3) The approximation error is bounded by $\epsilon_{\text{approx}}$, i.e., $\min_{\pi\in\Pi}L(\pi)\leq \epsilon_{\text{approx}}$.
\end{assumption}

$\pi_f$ is the policy we want to learn. In the following theorem, we show that this policy is near optimal under the target environment.
\begin{theorem}
    \label{thm: optimality}
    Under the assumptions in \Cref{assumption on MDP}, there exists a conditioning function such that 
    \begin{align*}
        J^T(\pi^\star) - J^T(\pi_f) \leq \epsilon\Big(\frac{1}{\alpha_f}+3 \Big)H^2.
    \end{align*}
\end{theorem}
\begin{proof}
    The general proof idea follows the proof of Corollary 1 of \cite{brandfonbrener2022does}. But since our policy $\pi_f$ is not a classical RCSL policy, we need careful analysis on the derivation.

    Let $g(s_1,a_{1:H})$ be the value of return by rolling out the open loop sequence of actions $a_{1:H}$ under the deterministic dynamics induced by $P^T$ and $r$. Then we have
    \begin{align}
    \label{eq:xxx}
        \EE_{s_1}[f(s_1)] - J(\pi_f) &= \EE_{s_1}[\EE_{\pi_f|s_1}[f(s_1)-g_1]]\notag\\
        & = \EE_{s_1}[\EE_{a_{1:H}\sim\pi_f|s_1}[f(s_1)-g(s_1,a_{1:H})]] + \EE_{s_1}[\EE_{a_{1:H}\sim\pi_f|s_1}[g(s_1,a_{1:H})-g_1]]\notag \\
        &\leq \EE_{s_1}[\EE_{a_{1:H}\sim\pi_f|s_1}[f(s_1)-g(s_1,a_{1:H})]] + \epsilon H^2
    \end{align}
where the last step follows by bounding the magnitude of the difference between $g_1$ and $g(s_1,a_{1:H})$ by $H$ and applying a union bound over the $H$ steps in the trajectory. Now we consider the first term. Again bounding the magnitude of the difference by $H$ we get that
\begin{align}
\label{eq:sss}
    \EE_{s_1}[\EE_{a_{1:H}\sim\pi_f|s_1}[f(s_1)-g(s_1,a_{1:H})]] \leq \EE_{s_1}\int_{a_{1:H}}P^T_{\pi_f}(a_{1:H}|s_1)\ind\{g(s_1,a_{1:H})\neq f(s_1)\}H.
\end{align}
To bound this term, we will more carefully consider what happens under the distribution $P_{\pi_f}^T$. To simplify notion, let $\bar{s}_t = P^T(s_1,a_{1:t-1})$ be the result of following the deterministic dynamics defined by $\xi$ up until step $t$. Expanding it out, applying the near determinism, the consistency of $f$, the coverage assumption, we have
\begin{align*}
    P^T_{\pi_f}(a_{1:H}|s_1) &= \pi_f(a_1|s_1)\int_{s_2}P^T(s_2|s_1,a_1)P^T_{\pi_f}(a_{2:H}|s_1,s_2)\\
    &\leq \pi_f(a_1|s_1)P_{\pi_f}(a_{2:H}|s_1,\bar{s}_2) + \epsilon\\
    & = \beta(a_1|s_1)\frac{P^T_{\beta_T}(g_1=f(s_1)|s_1,a_1)}{\sum_{a'\in\cA}P^T_{\beta_T}(g_1=f(s_1)|s_1,a_1)\beta(a'|s)}P_{\pi_f}(a_{2:H}|s_1,\bar{s}_2)  + \epsilon\\
    & = \beta(a_1|s_1)\frac{\epsilon + P^T_{\beta_T}(g_1-r(s_1,a_1)=f(s_1)-r(s_1,a_1)|s_1,a_1,\bar{s}_2)}{\sum_{a'\in\cA}P^T_{\beta_T}(g_1=f(s_1)|s_1,a_1)\beta(a'|s)}P_{\pi_f}(a_{2:H}|s_1,\bar{s}_2)  + \epsilon\\
    & = \beta(a_1|s_1)\frac{\epsilon + P^T_{\beta_T}(g_2=f(\bar{s}_2)|\bar{s}_2)}{P^T_{\beta,\beta_T}(g_1=f(s_1)|s_1)}P_{\pi_f}(a_{2:H}|s_1,\bar{s}_2)  + \epsilon\\
    & \leq \beta(a_1|s_1)\frac{P^T_{\beta_T}(g_2=f(\bar{s}_2)|\bar{s}_2)}{P^T_{\beta,\beta_T}(g_1=f(s_1)|s_1)}P_{\pi_f}(a_{2:H}|s_1,\bar{s}_2)  + \epsilon(1/\alpha_f + 1)\\
    & \leq \beta(a_1|s_1)\beta(a_2|\bar{s}_2)\frac{P^T_{\beta_T}(g_2=f(\bar{s}_2)|\bar{s}_2)}{P^T_{\beta,\beta_T}(g_1=f(s_1)|s_1)}\frac{P^T_{\beta_T}(g_3=f(\bar{s}_3)|\bar{s}_3)}{P^T_{\beta,\beta_T}(g_2=f(\bar{s}_2)|\bar{s}_2)}P_{\pi_f}(a_{3:H}|s_1,\bar{s}_3) + 2\epsilon(1/\alpha_f+1)\\
    & \leq \beta(a_1|s_1)\beta(a_2|\bar{s}_2)\cdots\beta(a_H|\bar{s}_H)\frac{P^T_{\beta_T}(g_2=f(\bar{s}_2)|\bar{s}_2)}{P^T_{\beta,\beta_T}(g_1=f(s_1)|s_1)}\\
    &\quad \cdot\frac{P^T_{\beta_T}(g_3=f(\bar{s}_3)|\bar{s}_3)}{P^T_{\beta,\beta_T}(g_2=f(\bar{s}_2)|\bar{s}_2)}\cdots\frac{P^T_{\beta_T}(g_H=f(\bar{s}_H)|\bar{s}_H)}{P^T_{\beta,\beta_T}(g_{H-1}=f(\bar{s}_{H-1})|\bar{s}_{H-1})}+ H\epsilon(1/\alpha_f+1).
\end{align*}
By the fact that 
\begin{align*}
    P^T_{\beta,\beta_T}(g_h=f(\bar{s}_h)|\bar{s}_h) &= \sum_{a'\in\cA}P^T_{\beta_T}(g_h=f(\bar{s}_h)|\bar{s}_h,a')\cdot\beta(a'|\bar{s}_h)\\
    &= \sum_{a'\in\cA}P^T_{\beta_T}(g_h=f(\bar{s}_h)|\bar{s}_h,a')\cdot(c\cdot\beta_S(a'|\bar{s}_h) + (1-c)\cdot\beta_T(a'|\bar{s}_h))\\
    & \geq \sum_{a'\in\cA}P^T_{\beta_T}(g_h=f(\bar{s}_h)|\bar{s}_h,a')\cdot (1-c)\cdot\beta_T(a'|\bar{s}_h)\\
    & = (1-c)P^T_{\beta_T}(g_h=f(\bar{s}_h)|\bar{s}_h),
\end{align*}
we have 
\begin{align*}
    P^T_{\pi_f}(a_{1:H}|s_1) &\leq \prod_{t=1}^H\beta(a_t|\bar{s}_t)\frac{P^T_{\beta_T}(g_H=f(\bar{s}_H)|\bar{s}_H)}{P^T_{\beta,\beta_T}(g_1=f(s_1)|s_1)}\frac{1}{(1-c)^H} + H\epsilon(1/\alpha_f+1)\\
    & = \prod_{t=1}^H\beta(a_t|\bar{s}_t)\frac{\ind\{g(s_1,a_{1:H})=f(s_1)\}}{P^T_{\beta,\beta_T}(g_1=f(s_1)|s_1)}\frac{1}{(1-c)^H} + H\epsilon(1/\alpha_f+1),
\end{align*}
where the last step follows from the determinism of the trajectory that determines $\bar{s}_H$ and the consistency of $f$. Plugging this back into \eqref{eq:sss} and noticing that the two indicator can never both be 1, we have
\begin{align*}
    \EE_{s_1}[\EE_{a_{1:H}\sim\pi_f|s_1}[f(s_1)-g(s_1,a_{1:H})]] \leq H^2\epsilon(1/\alpha_f+1).
\end{align*}
Plugging this back into \eqref{eq:xxx} finishes the proof.
\end{proof}

Now we are ready to state and prove our main theorem.
\begin{theorem}
\label{th:source+target-general}
    There exists a conditioning function $f$ such that \Cref{assumption on MDP-general,assumption on coverage-general} hold. Further assume \Cref{assumption on estimation-general} holds.
    Then for some $\delta \in (0,1)$,  with probability at least $1-\delta$, we have
    \begin{align}
          J^T(\pi^\star) - J^T(\hat{\pi}_f) 
         & \leq O\Big(\frac{C_f}{\alpha_f}\frac{N^S+N^T}{N^S/\gamma_f+N^T}H^2\Big(\sqrt{c}\Big(\frac{\log|\Pi|/\delta}{N^T+N^S}\Big)^{1/4}+\sqrt{\epsilon_{\text{approx}}} \Big)+\frac{\epsilon}{\alpha_f}H^2 \Big).
    \end{align}
\end{theorem}

 Define $$L(\hat{\pi}) = \EE_{s\sim d^{mix}, g\sim P^T_{\beta,\beta_T}(\cdot|s)}\big[D_{\text{KL}}\big(\pi_f(\cdot|s) || \hat{\pi}(\cdot|s,g) \big) \big].$$
To prove \Cref{th:source+target-general}, we first prove the following result.

    \begin{theorem}
\label{th:target regret-general}
    Consider any function $f:\cS\rightarrow \RR$ such that the assumptions in \Cref{assumption on coverage-general} hold. Then for any estimated RCSL policy $\hat{\pi}$ that conditions on $f$ at test time (denoted by $\hat{\pi}_f$),  we have
    \begin{align*}
        J^T(\pi_f) - J^T(\hat{\pi}_f) \leq \frac{C_f\gamma_f}{\alpha_f}H^2\sqrt{2L(\hat{\pi})}.
    \end{align*}
\end{theorem}
\begin{proof}
    By definition and \Cref{lemma:lemma 1}, we have
    \begin{align*}
        J^T(\pi_f)-J^T(\hat{\pi}_f) &= H\big(\EE_{P^T_{\pi_f}}[r(s,a)] -  \EE_{P^T_{\hat{\pi}_f}}[r(s,a)] \big)\\
        &\leq H\cdot \|d_{\pi_f} - d_{\hat{\pi}_f}\|_1\\
        &\leq 2\cdot\EE_{s\sim d_{\pi_f}^T}\big[TV(\pi_f(\cdot|s)||\hat{\pi}_f(\cdot|s)) \big]H^2.
    \end{align*}
    Next, we have
    \begin{align*}
        &2\cdot\EE_{s\sim d_{\pi_f}^T}\big[TV(\pi_f(\cdot|s)||\hat{\pi}_f(\cdot|s)) \big]\\
        &= \EE_{s\sim d_{\pi_f}^T}\Big[\int_a\big|\pi_f(a|s) - \hat{\pi}_f(\cdot|s) \big|da\Big]\\
        &= \EE_{s\sim d_{\pi_f}^T}\Big[\frac{P^T_{\beta,\beta_T}(\cdot|s)}{P^T_{\beta,\beta_T}(\cdot|s)}\int_a\big|\pi_f(a|s) - \hat{\pi}_f(\cdot|s)  \big|da  \Big]\\
        & \leq 2\frac{C_f}{\alpha_f}\EE_{s\sim d_{\beta_T}^T, g\sim P^T_{\beta,\beta_T}(\cdot|s)}\big[TV(\pi_f(\cdot|s)|| \hat{\pi}_f(\cdot|s)) \big]\\
        & \leq 2\frac{C_f}{\alpha_f}\frac{N^S+N^T}{N^S/\gamma_f+N^T} \cdot \EE_{s\sim d^{mix}, g\sim P^T_{\beta,\beta_T}(\cdot|s)}\big[TV(\pi_f(\cdot|s)|| \hat{\pi}_f(\cdot|s)) \big]\\
        &\leq  \frac{C_f}{\alpha_f} \frac{N^S+N^T}{N^S/\gamma_f+N^T}\cdot\EE_{s\sim d^{mix}, g\sim P^T_{\beta,\beta_T}(\cdot|s)}\Big[\sqrt{2KL(\pi_f(\cdot|s) || \hat{\pi}_f(\cdot|s))} \Big]\\
        & \leq \frac{C_f}{\alpha_f} \frac{N^S+N^T}{N^S/\gamma_f+N^T}\sqrt{2L(\hat{\pi})}.
    \end{align*}
\end{proof}

\begin{proof}[Proof of \Cref{th:source+target-general}]
    Following the same argument in the proof of Corollary 3 in \citet{brandfonbrener2022does}, we have 
    \begin{align*}
        J^T(\pi_f) - J^T(\hat{\pi}_f)\leq O\Big(2\frac{C_f}{\alpha_f} \frac{N^S+N^T}{N^S/\gamma_f+N^T}H^2\Big(\sqrt{c}\Big(\frac{\log|\Pi|/\delta }{N^S+N^T}\Big)^{1/4} + \sqrt{\epsilon_{\text{approx}}} \Big) \Big).
    \end{align*}
    Invoking \Cref{thm: optimality}, we have
    \begin{align*}
        J^T(\pi^\star) - J^T(\hat{\pi}_f) \leq O\Big(2\frac{C_f}{\alpha_f} \frac{N^S+N^T}{N^S/\gamma_f+N^T}H^2\Big(\sqrt{c}\Big(\frac{\log|\Pi|/\delta}{N^T+N^S}\Big)^{1/4}+\sqrt{\epsilon_{\text{approx}}} \Big)+\frac{\epsilon}{\alpha_f}H^2 \Big).
    \end{align*}
    This completes the proof.
\end{proof}

\end{document}